\theoremstyle{plain}
\newtheorem{theorem}{Theorem}[section]
\newtheorem{proposition}[theorem]{Proposition}
\newtheorem{lemma}[theorem]{Lemma}
\theoremstyle{definition}
\theoremstyle{remark}
\newtheorem{remark}[theorem]{Remark}
\title{Learning Cocoercive Conservative Denoisers via Helmholtz Decomposition for Poisson Imaging Inverse Problems}
\author{%
Deliang Wei \\
  School of Mathematical Sciences, \\
   East China Normal University,\\
   Shanghai 200241, China\\
   \texttt{52215500006@stu.ecnu.edu.cn} \\
   \And
   Peng Chen \\
   School of Mathematical Sciences, \\
   East China Normal University,\\
   Shanghai 200241, China\\
   \texttt{52265500005@stu.ecnu.edu.cn} \\
   \And
   Haobo Xu \\
   School of Mathematical Sciences, \\
   East China Normal University,\\
   Shanghai 200241, China\\
   \texttt{52275500009@stu.ecnu.edu.cn} \\
   \And
   Jiale Yao \\
   School of Mathematical Sciences, \\
   East China Normal University,\\
   Shanghai 200241, China\\
   \texttt{52285500019@stu.ecnu.edu.cn} \\
   \And
   Fang Li\thanks{Corresponding author} \\
   School of Mathematical Sciences, \\
   Key Laboratory of MEA \\
   (Ministry of Education),\\
   Shanghai Key Laboratory of PMMP,\\
   East China Normal University,\\
   Shanghai 200241, China\\
   \texttt{fli@math.ecnu.edu.cn} \\
   \And
   Tieyong Zeng \\
   Institute for Advanced Study, \\
   Beijing Normal-Hong Kong Baptist University, \\
   Zhuhai 519000, Guangdong, China\\
    School of Mathematics and Statistics,\\
    Guangzhou Nanfang College,\\ Guangzhou 510970, Guangdong Province, China\\
   \texttt{tieyongzeng@uic.edu.cn} \\
}
\begin{document}

\maketitle

\begin{abstract}
  Plug-and-play (PnP) methods with deep denoisers have shown impressive results in imaging problems. They typically require strong convexity or smoothness of the fidelity term and a (residual) non-expansive denoiser for convergence. These assumptions, however, are violated in Poisson inverse problems, and non-expansiveness can hinder denoising performance. To address these challenges, we propose a cocoercive conservative (CoCo) denoiser, which may be (residual) expansive, leading to improved denoising performance. By leveraging the generalized Helmholtz decomposition, we introduce a novel training strategy that combines Hamiltonian regularization to promote conservativeness and spectral regularization to encourage cocoerciveness. We prove that CoCo denoiser is a proximal operator of a weakly convex function, enabling a restoration model with an implicit weakly convex prior. The global convergence of PnP methods to a stationary point of this restoration model is established. Extensive experimental results demonstrate that our approach outperforms closely related methods in both visual quality and quantitative metrics. A test code is provided for reproducibility\footnote{ \href{https://github.com/FizzzFizzz/CoCo-PnP}{https://github.com/FizzzFizzz/CoCo-PnP}}.
\end{abstract}

\section{Introduction}

Image restoration is a fundamental task in the computer vision field. Although most works assume that the noise follows Gaussian distribution \cite{liu2012additive,buzzard2018plug}, under low-light condition, such as hyperspectral imaging \cite{priego20174dcaf,dong2018robust}, medical imaging \cite{thanh2015denoising}, and limited photon imaging \cite{chan2016plug}, images are inevitably corrupted by Poisson noises. In this paper, we focus on solving Poisson inverse problems by Plug-and-Play (PnP) methods.

The following formula describes the Poisson corruption procedure:
\begin{equation}\label{model0}
    f\sim  \text{Poisson}(pKu) /p,
\end{equation}
where $u$ is the potential image, $f$ is the noisy image, the peak value $p$ is the average number of photons recieved per pixel, and $K$ is a known linear operator such as identity $\operatorname{I}$, blur kernel, or Radon transform. A small $p$ corresponds to a severe noise environment. In order to recover $u$ from $f$, a variational approach is considered:
\begin{equation}\label{model}
    \arg\min\limits_{u\in V}F(u)+G(u),  G(u) = \lambda\langle \textbf{1},Ku-f\log Ku\rangle, 
\end{equation}
where $V$ is the underlying Hilbert space endowed with the inner product $\langle \cdot, \cdot \rangle$, $F$ denotes the prior regularization term, and the data fidelity $G$ is the Bregman divergence, with $\lambda>0$ being the balancing parameter. Typical choices for $F$ includes total variation \cite{rudin1992nonlinear} and its variants \cite{kindermann2005deblurring,bredies2010total}, weighted nuclear norm \cite{gu2014weighted}, and group-based low rank prior \cite{groupsparsity}. First order methods are employed to find $\hat{u}$, such as the proximal gradient descent method (PGD):
\begin{equation}\label{PGD}
    u^{k+1} =\operatorname{Prox}_{\frac{F}{\beta}}(u^k-\beta^{-1}\nabla G(u^k)),
\end{equation}
and the alternating direction method with multipliers (ADMM) \cite{boyd2011distributed}:
\begin{equation}\label{ADMM}
\begin{array}{ll}
    u^{k+1} &=\operatorname{Prox}_{\frac{F}{\beta}}(v^k-b^k),  \\
    v^{k+1} &=\operatorname{Prox}_{\frac{G}{\beta}}(u^{k+1}+b^k),\\
    b^{k+1} &=b^k+u^{k+1}-v^{k+1},
\end{array}
\end{equation}
where $\beta>0$ is a balancing parameter. For a closed, convex and proper (CCP) function $F: V\rightarrow \bar{\mathbb{R}}=(-\infty,\infty]$, the proximal operator $\operatorname{Prox}_F:V\rightarrow V$ is defined as:
$
    \operatorname{Prox}_F(y) := \arg\min\limits_{x\in V}F(x)+\frac{1}{2}\|x-y\|^2.
$

\textbf{Plug-and-play methods.} By replacing $\operatorname{Prox}_{\frac{F}{\beta}}(\cdot)$ with an off-the-shelf Gaussian denoiser $\operatorname{D}_\sigma(\cdot)$, where $\sigma$ denotes the denoising strength, Venkatakrishnan et al. introduced the plug-and-play ADMM (PnP-ADMM) algorithm \cite{venkatakrishnan2013plug}.
Since then, PnP methods have achieved surprisingly remarkable recovery results in many inverse problems, including bright field electron tomography \cite{sreehari2016plug}, diffraction tomography \cite{sun2019online}, camera image processing \cite{heide2014flexisp}, low-dose CT imaging \cite{bouman2023generative}, Poisson image denoising \cite{le2023preconditioned,hurault2024convergent}, deblurring \cite{laroche2023provably}, inpainting \cite{zhu2023denoising}, and super-resolution \cite{laroche2023deep,kamilov2023plug}.

\textbf{Challenges.} Despite the impressive recovery performance, the convergence analysis of PnP methods remains challenging, especially in the context of Poisson inverse problems. The standard convergence of ADMM and PGD relies on the convexity of both $F$ and $G$. However, a convex prior term $F$ often leads to limited recovery performance. When $F$ is weakly convex, additional assumptions on $G$, such as strong convexity or smoothness, are required. Unfortunately, these assumptions do not hold in Poisson inverse problems. Moreover, in general, a deep denoiser is not a proximal operator, which further complicates the situation.

In order to get a proximal $\operatorname{D}_\sigma$, it should satisfy some Lipschitz properties such as non-expansiveness, and be the (sub)gradient of some CCP potential $\phi$ \cite{moreau1965proximite,gribonval2020characterization}. In the following, we briefly discuss approaches to satisfy these conditions, as well as some related convergent PnP methods.

\textbf{Lipschitz property.} 
There are mainly two methods to ensure the Lipschitz property: \textbf{real spectral normalization (RealSN)} and \textbf{spectral regularization (SR)}. Ryu et al. enforced a contractive $\operatorname{I}-\operatorname{D}_\sigma$ by applying RealSN to the denoiser, which normalized the spectral norm of each layer \cite{ryu2019plug}. However, RealSN is computationally expensive, and was specifically designed for denoisers with cascade residual learning structures, such as DnCNN \cite{zhang2017beyond}, making it unsuitable for other networks like UNet \cite{ronneberger2015u}. 
A more flexible approach is SR technique introduced by Terris et al. \cite{terris2020building}. In order to get a firmly non-expansive $\operatorname{D}_\sigma$, they added a spectral term $\min\{1-\epsilon,\|2\operatorname{J}-\operatorname{I}\|_*\}$ to the original loss function, such that $\|2\operatorname{J}-\operatorname{I}\|_*\le 1$, where 
$\operatorname{J}=\nabla \operatorname{D}_\sigma$ is the Jacobian matrix of $\operatorname{D}_\sigma$, and $\epsilon\in(0,1)$ is a pre-defined parameter. With this spectral constraint, $\operatorname{D}_\sigma$ is firmly non-expansive, and thus becomes the resolvent of some implicitly maximally monotone operator (RMMO), incorporated into the Douglas-Rachford splitting (DRS) method. RMMO-DRS only requires a convex fidelity, making it suitable for Poisson inverse problems. Inspired by SR, Wei et al. proposed to train a strictly pseudo-contractive (SPC) denoiser, termed SPC-DRUNet \cite{AHQS}. By incorporating the Ishikawa process and half-quadratic splitting (HQS), they developed the PnPI-HQS method. The convergence of PnPI-HQS only requires a convex fidelity term $G$, which makes it suitable for Poisson problems. SPC is much weaker than firm or residual non-expansiveness. However, SPC-DRUNet is in general not a proximal operator, and PnPI-HQS does not solve any optimization problem.

\textbf{Conservativeness.} To ensure the conservative property, many works attempt to explicitly define a prior function $\phi$, such that the denoiser is its gradient or proximal operator. Romano et al. proposed the regularization by denoising (RED) \cite{romano2017little}. The RED prior term takes the form of $\phi(x) = \frac{1}{2}\langle x, x-\operatorname{D}_\sigma(x)\rangle$. Under the assumptions that $\operatorname{D}_\sigma$ is locally homogeneous, and that $\nabla \operatorname{D}_\sigma$ is symmetric with spectral radius less than one, Romano et al. proved that $\operatorname{D}_\sigma=\operatorname{I}-\nabla \phi$. Yet the assumptions might be impractical for deep denoisers as reported by Reehorst \& Schniter \cite{reehorst2018regularization}. Instead of training a Gaussian denoiser $\operatorname{D}_\sigma$, Cohen et al. \cite{cohen2021has} parameterized $\phi$ with a neural network, $\operatorname{D}_\sigma=\nabla \phi$. Unfortunately, as verified by Salimans \& Ho \cite{salimans2021should}, and Hurault et al. \cite{hurault2022gradient}, directly modeling $\phi$ with a neural network leads to poor performance. To tackle this, Hurault et al. \cite{hurault2022gradient} introduced the gradient step (GS) denoiser, where $\operatorname{D}_\sigma=\nabla \phi$, with $\phi(x)=\frac{1}{2}\|x\|^2-g_\sigma(x)$, $g_\sigma(x)=\frac{1}{2}\|x-N_\sigma(x)\|^2$, where $N_\sigma$ is a neural network. Then $\operatorname{D}_\sigma=\operatorname{I}-\nabla g_\sigma=N_\sigma+\operatorname{J}_{N_\sigma}^\mathrm{T}(\operatorname{I}-N_\sigma)$, where $\operatorname{J}_{N_\sigma}$ is the Jacobian matrix of $N_\sigma$. $N_\sigma$ uses the light DRUNet architecture \cite{zhang2021plug}, and such $\operatorname{D}_\sigma$ is called GS-DRUNet. Following this, they then proposed Prox-DRUNet, which trains a GS-DRUNet with a non-expansive residual $\operatorname{I}-\operatorname{D}_\sigma $ via SR \cite{hurault2022proximal}. They proved that Prox-DRUNet acts as a proximal operator of some weakly convex prior $F$. The Prox-DRS algorithm is given by plugging $L\operatorname{D}_\sigma+(1-L)\operatorname{I}$ as the denoiser into DRS. Prox-DRS converges for proper closed fidelity $G$ with $L\in[0,0.5)$, making it applicable for Poisson inverse problems. However, this great work still requires a non-expansive residual, which alters the denoising performance, especially for large noises. To apply GS-DRUNet in Poisson inverse problems, Hurault et al. proposed to train a Bregman denoiser to remove Gamma noise \cite{hurault2024convergent}. Two convergent algorithms B-RED and B-PnP are derived. However, experimental results indicate that the Gamma denoiser struggles to remove large Gamma noise, thereby limiting the restoration performance.

\textbf{Motivations.} As discussed above, in order a get a proximal denoiser, it needs to be Lipschitz and conservative.

Typical Lipschitz conditions, such as non-expansiveness and residual non-expansiveness, are restrictive and lead to compromised performance. Weaker assumptions, like (strictly) pseudo-contractive conditions, can not guarantee a proximal denoiser. This limitation motivates us to explore a more suitable assumption that ensures the denoiser remains proximal while maintaining good denoising performance.


For the conservativeness, existing approaches typically construct an explicit potential function $\phi$. While this idea is intuitively appealing and convenient for optimization, it raises a question: Is there any alternative approach to promote the conservativeness, without requiring an explicit prior, by directly regularizing the denoiser, without significantly changing its network structure?

\textbf{Contributions.} To address the issues outlined above, this paper introduces a novel training strategy for learning a proximal denoiser. Leveraging the SR technique and the generalized Helmholtz decomposition, we propose to train a cocoercive conservative (CoCo) denoiser. CoCo denoiser proves to be a proximal operator of some implicit non-convex prior function. Cocoerciveness is a weaker constraint on the denoiser, compared to existing constraints like firm or residual non-expansiveness. As a result, the CoCo denoiser not only achieves superior denoising performance but also enhances PnP recovery in Poisson inverse problems. 
Overall, our main contributions are fourfold:

$\bullet$ We introduce a novel assumption, cocoerciveness, for the deep denoisers. Cocoerciveness is strictly weaker assumption than non-epansive type assumptions, and therefore, less restrictive for denoisers.\\
$\bullet$ We shed a new light on the conservative assumption, by studying the denoising geometry. Using the generalized Helmholtz decomposition, a denoiser is implicitly decomposed into a conservative part and a Hamiltonian part. We show intuitively and rigorously that an ideal denoiser should be conservative with no Hamiltonian part. \\
$\bullet$ We prove that a CoCo denoiser is a proximal operator of some implicit prior. An effective training strategy is proposed to promote these properties.\\
$\bullet$ A Poisson inverse model with an implicit prior is derived. The global convergence of PnP methods with CoCo denoisers is established.

\section{CoCo denoisers}

In this section, we propose CoCo denoisers. First, we introduce the cocoercive assumption, and its spectral distribution on the complex plane. Next, by the generalized Helmholtz decomposition, we show intuitively that an ideal denoiser should be conservative, with no Hamiltonian part. Then, we prove that a CoCo denoiser $\operatorname{D}_\sigma$ is the proximal operator of some weakly convex and smooth prior function $F:V\rightarrow \bar{\mathbb{R}}:=\mathbb{R}\cup\{\infty\}$, $\operatorname{D}_\sigma = \operatorname{Prox}_{\frac{F}{\beta}}$. Finally, a restoration model with an implicit weakly convex prior is given.

\subsection{Cocoercive denoisers}
Let $V=\mathbb{R}^n$ be a real Hilbert space\footnote{Please note that, although we limit $V$ to be finite dimensional for easy understanding, many theoretical analysis still hold in a general infinite dimensional real Hilbert space. To clarify, the spectrum set of an operator in a finite space is just the eigenvalue set. The transpose of a matrix in a finite dimensional space corresponds to the adjoint operator in an infinite space. $\|\cdot\|_*$ is the spectral norm in a finite space, and denotes the operator norm in an infinite space. The degradation operator $K$ in an infinite space is assumed to be a bounded linear operator, that is $K\in\mathcal{B}[V]$.} with inner product $\langle \cdot, \cdot \rangle$, and the induced norm $\|x\|=\sqrt{\langle x,x\rangle}$. Let $\operatorname{D}:V\rightarrow V$ be an operator\footnote{An operator in general may not be single-valued. However, since the operator $\operatorname{D}$ in this paper is a denoiser, its output is unique given fixed input. Thus, we only consider the single-valued operator here.}. An operator $\operatorname{D}:V\rightarrow V$ is said to be $\gamma$-cocoercive ($\gamma\in[0,\infty)$), if $\forall x,y\in V$, there holds:
\begin{equation}\label{def coco}
    \langle x-y, \operatorname{D}(x)-\operatorname{D}(y)\rangle\ge \gamma \|\operatorname{D}(x)-\operatorname{D}(y)\|^2.
\end{equation}
Cocoercive operators are an important class of monotone operators. Many operators are cocoercive operators. Below are two typical cocoercive operators:\\
    $\bullet$ $\operatorname{D}$ is firmly non-expansive: ($\forall x,y\in V$)
    \begin{equation}
        \langle x-y, \operatorname{D}(x)-\operatorname{D}(y)\rangle \ge \|\operatorname{D}(x)-\operatorname{D}(y)\|^2.
    \end{equation}
    Such $\operatorname{D}$ is $1$-cocoercive.\\
    $\bullet$ $\operatorname{D}$ is residual non-expansive: ($\forall x,y\in V$)
    \begin{equation}
        \|(\operatorname{I}-\operatorname{D})\circ(x)-(\operatorname{I}-\operatorname{D})\circ(y)\|\le \|x-y\|.
    \end{equation}
    Such $\operatorname{D}$ is $0.5$-cocoercive.

When $\gamma>0$, cocoercive assumption make denoisers Lipschitz: by Cauchy-Schwarz inequality, 
\begin{equation}
    \langle x-y, \operatorname{D}(x)-\operatorname{D}(y)\rangle\le \|x-y\|\|\operatorname{D}(x)-\operatorname{D}(y)\|.
\end{equation}
Therefore, by (\ref{def coco}), $\|\operatorname{D}(x)-\operatorname{D}(y)\|\le \frac{1}{\gamma}\|x-y\|$, that is, $\operatorname{D}$ is $\frac{1}{\gamma}$-Lipschitz. When $\gamma<0.5$, both the operator and its residual part are expansive. In general, a smaller $\gamma$ corresponds to less contraint, and therefore a more powerful denoiser.

\begin{figure}[h]
\setlength{\fboxsep}{3pt} 
\setlength{\fboxrule}{0.6pt} 
\begin{minipage}{0.38\linewidth}
  \centerline{\includegraphics[width=5.5cm]{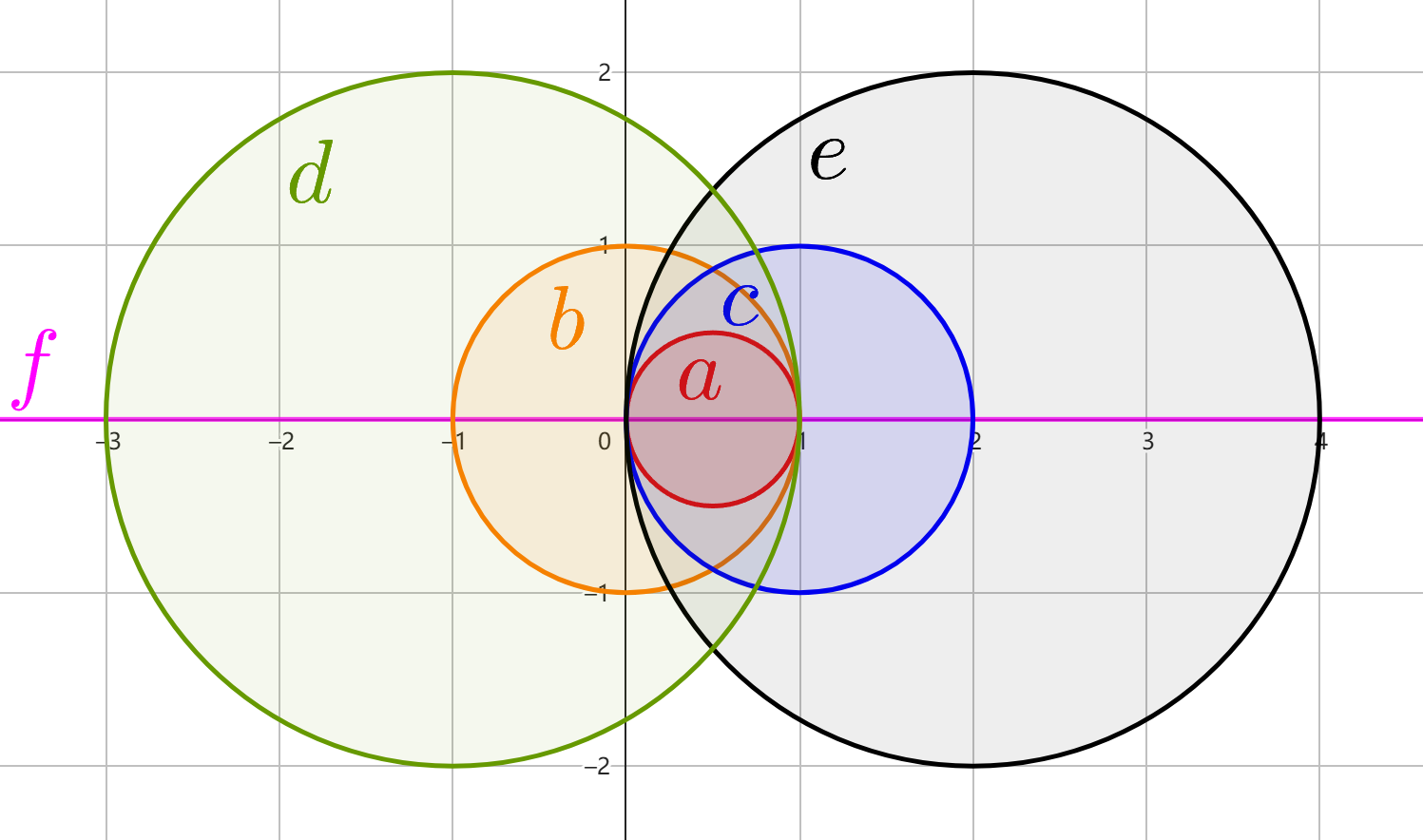}}
\end{minipage}
\hfill
\begin{minipage}{0.20\linewidth}
  \centerline{ \fbox{\includegraphics[width=2.51cm, height=2.51cm]{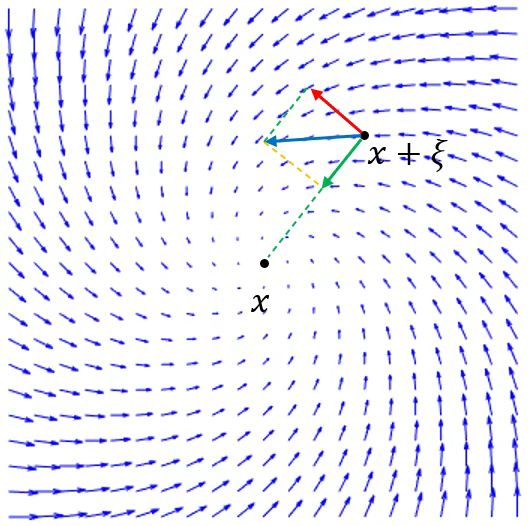}}}
  \centerline{(i)}
\end{minipage}
\hfill
\begin{minipage}{0.20\linewidth}
  \centerline{\fbox{\includegraphics[width=2.51cm, height=2.51cm]{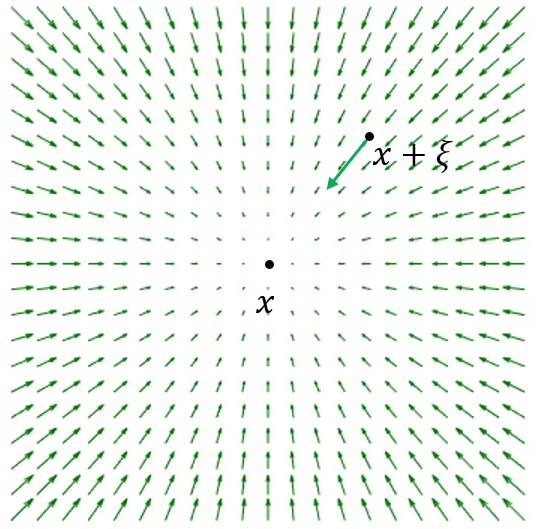}}}
  \centerline{(ii)}
\end{minipage}
\hfill
\begin{minipage}{0.20\linewidth}
  \centerline{\fbox{\includegraphics[width=2.51cm, height=2.51cm]{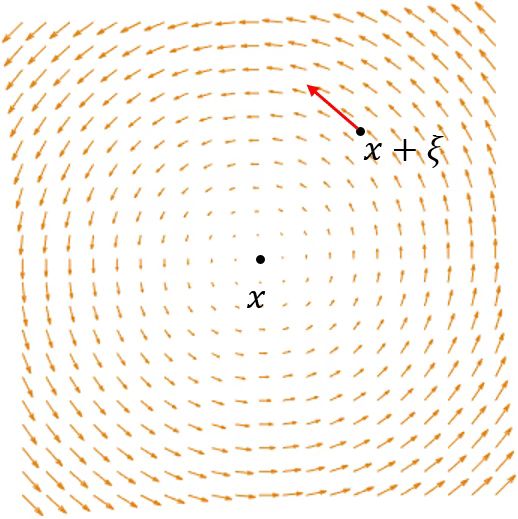}}}
  \centerline{(iii)}
\end{minipage}
\centering\caption{Left: Spectrum distributions of the Fr\'{e}chet differential (Jacobian matrix) on the complex plane under different assumptions. (a) Firmly non-expansiveness, $\operatorname{Sp}(\operatorname{J})\subset\{z\in\mathbb{C}: |2z-1|\le 1\}$; (b) Non-expansiveness, $\operatorname{Sp}(\operatorname{J})\subset\{z\in\mathbb{C}: |z|\le 1\}$; (c) Residual non-expansiveness,  $\operatorname{Sp}(\operatorname{J})\subset\{z\in\mathbb{C}:|z-1|\le1 \}$; (d) $\frac{1}{2}$-strictly pseudo-contractiveness, $\operatorname{Sp}(\operatorname{J})\subset\{z\in\mathbb{C}:|z+1|\le 2\}$; (e) $0.25$-cocoerciveness, $\operatorname{Sp}(\operatorname{J})\subset\{z\in\mathbb{C}: |0.5z-1|\le1\}$; (f) Conservativeness, $\operatorname{Sp}(\operatorname{J})\subset\mathbb{R}$. In general, a larger region means less restrictive assumption. The spectrum of $\gamma$-CoCo denoisers ($\gamma=0.25$) lies inside the interval $(\text{f})\cap(\text{e})=[0,4]$. Spectrum outside $\mathbb{R}$ corresponds to the Hamiltonian part of the denoiser, and does not contribute to the denoising performance. \\
Right: A two-dimensional illustration of the Helmholtz decomposition of a denoiser $\operatorname{D}$. $x$ denotes the clean image point, $\xi$ denotes the Gaussian noise, and $x+\xi$ denotes the noisy image. The arrow ``$\rightarrow$" represents the denoising direction. (i) Denoising field $\operatorname{D}$. (ii) Conservative field $\operatorname{D}_c$. (iii) Hamiltonian field $\operatorname{D}_h$.}
\label{fig field}
\end{figure}

\textbf{Spectral analysis on $\operatorname{D}$.} Note that (\ref{def coco}) can be equivalently transformed into the following inequality: ($\forall x,y\in V$)
\begin{equation}\label{def2 coco}
    \|(2\gamma \operatorname{D}-\operatorname{I})\circ(x)-(2\gamma \operatorname{D}-\operatorname{I})\circ(y)\|\le \|x-y\|.
\end{equation}
Based on the mean value theorem \cite{duistermaat2004multidimensional}, (\ref{def2 coco}) is transformed into:
\begin{equation}\label{def3 coco}
    \|2\gamma \operatorname{J}(x)-\operatorname{I}\|_*\le 1, \forall x\in V,
\end{equation}
where $\operatorname{J}(x)$ is the Fr\'echet differential, that is the Jacobian matrix when $V$ is finite dimensional, of $\operatorname{D}$ at the point $x$, and $\|\cdot\|_*$ denotes the spectral norm. Since the spectral radius $\rho(\cdot)$ is always no larger than the spectral norm $\|\cdot\|_*$, we have that:
\begin{equation}\label{def4 coco}
    \rho(2\gamma \operatorname{J}(x)-\operatorname{I})\le 1, \forall x\in V.
\end{equation}
Let $\operatorname{Sp}(\cdot)$ denote the spectrum set, that is, the eigenvalue set when $V$ is finte dimensional:
\begin{equation}
    \operatorname{Sp}(\operatorname{J}) := \{z\in\mathbb{C}: z\operatorname{I}-\operatorname{J} \text{ is not invertible} \}.
\end{equation}
(\ref{def4 coco}) implies $\operatorname{Sp}(\operatorname{J}(x))\subseteq \{z\in \mathbb{C}: \ |2\gamma z-1|\le 1 \}$. 

Lemma \ref{lemma 1} gives an \textit{equivalent} condition of cocoerciveness in (\ref{def coco})-(\ref{def2 coco}), and thereby enables the training through SR.
\begin{lemma}[Proof in Appendix \ref{Appendix lemma1}]\label{lemma 1}
Let $\operatorname{D}:V\rightarrow V$, and $\operatorname{J}=\nabla \operatorname{D}$ be its Fr\'echet differential. Then $\operatorname{D}$ is $\gamma$-cocoercive ($\gamma\in[0,\infty)$), if and only if $\|2\gamma\operatorname{J}(x)-\operatorname{I}\|_*\le1$ for any $x\in V$.
\end{lemma}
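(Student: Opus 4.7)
The plan is to establish the equivalence by routing through the intermediate statement that $2\gamma \operatorname{D} - \operatorname{I}$ is non-expansive, i.e., the inequality (\ref{def2 coco}). The forward direction from (\ref{def coco}) to (\ref{def2 coco}) is already flagged in the excerpt as an equivalent reformulation, so I would first confirm it cleanly, and then use differential calculus to translate the non-expansiveness of $2\gamma \operatorname{D} - \operatorname{I}$ into the spectral bound on $2\gamma \operatorname{J} - \operatorname{I}$, in both directions.

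First I would prove the equivalence of (\ref{def coco}) and (\ref{def2 coco}) by a direct algebraic expansion: for any $x,y \in V$,
\begin{equation*}
\|(2\gamma \operatorname{D} - \operatorname{I})(x) - (2\gamma \operatorname{D} - \operatorname{I})(y)\|^2 = 4\gamma^2 \|\operatorname{D}(x) - \operatorname{D}(y)\|^2 - 4\gamma \langle x-y, \operatorname{D}(x) - \operatorname{D}(y)\rangle + \|x-y\|^2,
\end{equation*}
so that the left side is $\le \|x-y\|^2$ exactly when $\gamma \|\operatorname{D}(x)-\operatorname{D}(y)\|^2 \le \langle x-y, \operatorname{D}(x)-\operatorname{D}(y)\rangle$.

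Next, for the ``only if'' direction I would assume (\ref{def2 coco}) and take a directional derivative: fix $x \in V$ and $v \in V$, apply (\ref{def2 coco}) at the pair $(x + tv, x)$, divide by $|t|$, and let $t \to 0$. Since $\operatorname{D}$ is Fréchet differentiable with $\operatorname{J}(x) = \nabla \operatorname{D}(x)$, the limit yields $\|(2\gamma \operatorname{J}(x) - \operatorname{I})v\| \le \|v\|$ for every $v$, which is exactly $\|2\gamma \operatorname{J}(x) - \operatorname{I}\|_* \le 1$. Conversely, assuming the spectral bound, I would set $T := 2\gamma \operatorname{D} - \operatorname{I}$ with Fréchet derivative $\nabla T(x) = 2\gamma \operatorname{J}(x) - \operatorname{I}$, and apply the fundamental theorem of calculus along the segment from $y$ to $x$:
\begin{equation*}
T(x) - T(y) = \int_0^1 \nabla T\bigl(y + t(x-y)\bigr)(x - y)\,dt,
\end{equation*}
so $\|T(x) - T(y)\| \le \int_0^1 \|\nabla T(y + t(x-y))\|_*\,\|x-y\|\,dt \le \|x-y\|$, recovering (\ref{def2 coco}).

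The step most likely to need care is the backward implication: the paper phrases the forward direction in terms of the ``mean value theorem,'' but in a general Hilbert (or even multi-dimensional) setting the classical equality form fails for vector-valued maps, and I would instead rely on the integral mean value inequality just displayed, which requires only continuity of $t \mapsto \nabla T(y + t(x-y))$ in the operator norm. Once this is in place, the chain (\ref{def coco}) $\Leftrightarrow$ (\ref{def2 coco}) $\Leftrightarrow$ (\ref{def3 coco}) closes the proof; the spectral radius statement (\ref{def4 coco}) and its picture on the complex plane then follow as immediate consequences.
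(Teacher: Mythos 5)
Your proof is correct and follows essentially the same route as the paper's: both reduce $\gamma$-cocoercivity to non-expansiveness of $2\gamma\operatorname{D}-\operatorname{I}$ via the identical algebraic expansion of the squared norm. The only difference is that where the paper simply invokes Proposition 3.1 of Bauschke--Combettes for the equivalence between non-expansiveness and the Jacobian bound $\|\nabla T(x)\|_*\le 1$, you re-derive it directly via a difference quotient and the integral mean value inequality, which (as you correctly note) requires the same $\mathcal{C}^1$ regularity that the cited proposition assumes.
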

Since the spectral radius is no larger than the spectral norm, we are able to plot the spectral distribution on the complex plane $\mathbb{C}$ in Fig. \ref{fig field}. A larger region corresponds to a less restrictive constraint, and thus a better denoising performance. Figs. \ref{fig field} (a), (c), (e) show that firm non-expansiveness, and residual non-expansiveness are special cases of cocoerciveness. Therefore, cocoeriveness is a weaker assumption for deep denoisers, and yields better denoising performance.

\subsection{Conservative denoisers}

We consider the conservative assumption intuitively. Let $\operatorname{D}\in\mathcal{C}^1[V]$ be a denoiser. $\operatorname{D}$ is said to be conservative, if there is a potential function $\phi:V\rightarrow \bar{\mathbb{R}}$, such that $\operatorname{D}= \nabla \phi$. Geometrically, $\operatorname{D}$ is mapping from $V$ to $V$, thus can be viewed as a vector field. 

By the generalized Helmholtz decomposition \cite{bhatia2012helmholtz,balduzzi2018mechanics}, any vector field $\operatorname{D}$ can be decomposed into a conservative field $\operatorname{D}_c$, and a Hamiltonian field $\operatorname{D}_h$, such that $\operatorname{D}_c$ is curl-free and is the gradient of a potential function $\phi$, and $\operatorname{D}_h$ is divergence-free: \begin{equation}
    \operatorname{D}=\operatorname{D}_c+\operatorname{D}_h, \ \operatorname{D}_c = \nabla \phi, \ \operatorname{div}(\operatorname{D}_h)=0.
\end{equation}
However, the decomposition $\operatorname{D}=\operatorname{D}_c+\operatorname{D}_h$ is implicit. In order to characterize this decomposition, we consider the Jacobian matrix $\operatorname{J}$. The generalized Helmholtz decomposition can be rewritten as $\operatorname{J}=\operatorname{S}+\operatorname{A}$, where $\operatorname{S}=\frac{\operatorname{J}+\operatorname{J}^\top}{2}$ is symmetric, and $\operatorname{A}=\frac{\operatorname{J}-\operatorname{J}^\top}{2}$ is anti-symmetric.
$\operatorname{S}$ and $\operatorname{A}$ correspond to $\operatorname{D}_c$ and $\operatorname{D}_h$ respectively: $\operatorname{S}=\nabla \operatorname{D}_c$, $\operatorname{A}=\nabla \operatorname{D}_h$.

In Fig. \ref{fig field}, we consider a two-dimensional case.
$x\in V$ is a clean image, $\xi$ is the Gaussian noise with zero mean and standard derivation $\sigma$, $x+\xi$ is the noisy image.  $\|x+\xi-x\|=\mathbb{E}[\|\xi\|]=2\sigma$ is the distance between the clean image and the noisy image in $V$. A denoiser $\operatorname{D}$ is expected to reduce the distance as in Fig. \ref{fig field} (i). That is $\|\operatorname{D}(x+\xi)-x\|<\|x+\xi-x\|$. The arrow ``$\rightarrow$" in Fig. \ref{fig field} denotes the denoising direction vector $\eta$, $\eta=\operatorname{D}(x+\xi)-(x+\xi)$. 
$\eta$ can be decomposed into $\eta_c$ and $\eta_h$, see Figs. \ref{fig field} (ii)-(iii). $\eta_c$ points to the clean image, and thus represents the correct denoising direction. $\eta_h$ circles around $x$, and $\operatorname{D}_h$ cannot remove noises.

In a general real Hilbert space, given a point $x$, $\forall y\in V$, define the Hamiltonian functional $
    \mathcal{H}(y) = \displaystyle\frac{1}{2}\|\operatorname{D}(y)-x\|^2$. When $\operatorname{D}$ is a Hamiltonian field, $\operatorname{S}=0$. Then, the vector $\operatorname{D}(y)-x$ preserves the level set of $\mathcal{H}$, because:
$$
    \langle \operatorname{D}(y)-x, \nabla_y \mathcal{H}(y)\rangle = \langle \operatorname{D}(y)-x, \operatorname{A}^\top (\operatorname{D}(y)-x)\rangle = 0.
$$
Note that when $x$ is a clean image, $\mathcal{H}$ is a typical loss function for a denoiser. 
Therefore, a Hamiltonian field does not contribute to the denoising. Thus we penalize this useless part $\operatorname{D}_h$ by penalizing $\operatorname{A}$. The following relations are useful: $\operatorname{D}$ is conservative
$\iff$ $\operatorname{D}_h=0$ $\iff$ $\operatorname{A}=0$ $\iff$ $ \|\operatorname{J}-\operatorname{J}^\top\|_*=0$.

In order to penalize $\operatorname{D}_h$, we only need to minimize $\|\operatorname{J}-\operatorname{J}^\top\|_*$. Spectrally speaking, when $\operatorname{J}=\operatorname{J}^\top$ is self-adjoint, $\operatorname{Sp}(\operatorname{J})\subset \mathbb{R}$, see Fig. \ref{fig field} (f).

\subsection{Characterization on CoCo denoisers}

In the context of PnP, the denoiser $\operatorname{D}_\sigma$ is expected to be proximal. Building upon prior findings by \cite{gribonval2020characterization} (see Appendix \ref{appendix characterization}), we present the following Theorem \ref{thm 1}. A more general characterization on CoCo denoisers is given in Theorem \ref{thm 0} in Appendix \ref{appendix general characterization}.

\begin{theorem}[Proof in Appendix \ref{Appendix thm1}]\label{thm 1}
Let $\operatorname{D}_\sigma\in \mathcal{C}^1[V]$. $\beta=\frac{1}{\sigma^2}$. $\operatorname{D}_\sigma$ satisfies that:\\
$\bullet$ $\operatorname{D}_\sigma$ is conservative;\\
$\bullet$ $\operatorname{D}_\sigma$ is $\gamma$-cocoercive with $\gamma\in(0,1)$.\\
Let $\operatorname{D}_\sigma^t=t\operatorname{D}_\sigma+(1-t)\operatorname{I}$, $t\in\left[0,1\right)$. It holds that:\\
$\bullet$ there exists a $r$-weakly convex function $F:V\rightarrow \bar{\mathbb{R}}$, $r(t)=\beta\frac{t-\gamma t}{t+\gamma -\gamma t}$, such that $\operatorname{D}^t_\sigma(x)\in \operatorname{Prox}_{\frac{F}{\beta}}(x), \forall x\in V$;\\
$\bullet$ $\partial F$ is $L$-Lipschitz, where 
\begin{equation}
    L(t)=\left\{
    \begin{array}{ll}
        \beta\frac{t}{1-t}\ge r(t), & \text{if } t\ge  \frac{1-2\gamma}{2-2\gamma}\text{ and } t\in[0,1) \text{ (Case 1)};\\
       r(t)=\beta\frac{t-\gamma t}{t+\gamma -\gamma t}, & \text{if } t\le \frac{1-2\gamma}{2-2\gamma} \text{ and } t\in[0,1) \text{ (Case 2)}. 
    \end{array}\right.
\end{equation}
\end{theorem}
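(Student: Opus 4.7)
The plan is to use conservativeness to realize $\operatorname{D}_\sigma^t$ as the gradient of a strongly convex potential, invoke the Gribonval characterization to identify it as a proximity operator, and read off $r(t)$ and $L(t)$ from the spectrum of the Jacobian of the inverse map. For the first step, write $\operatorname{D}_\sigma=\nabla\phi$ using conservativeness; then $\operatorname{J}=\nabla^2\phi$ is symmetric, and Lemma \ref{lemma 1} forces $\|2\gamma\operatorname{J}(x)-\operatorname{I}\|_*\le 1$, so the eigenvalues of $\operatorname{J}(x)$ lie in $[0,1/\gamma]$ for every $x$. Consequently $\psi_t:=t\phi+\tfrac{1-t}{2}\|\cdot\|^2$ has Hessian $t\operatorname{J}+(1-t)\operatorname{I}$ with spectrum in $[1-t,\,1-t+t/\gamma]$, making $\psi_t$ globally $(1-t)$-strongly convex and smooth, with $\operatorname{D}_\sigma^t=\nabla\psi_t$.

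Next, apply the Gribonval characterization recalled in Appendix \ref{appendix characterization}: since $T:=\operatorname{D}_\sigma^t$ is the gradient of the convex, l.s.c.\ function $\psi_t$, there exists a proper l.s.c.\ (possibly nonconvex) $F:V\to\bar{\mathbb{R}}$ with $\operatorname{D}_\sigma^t=\operatorname{Prox}_{F/\beta}$. Strong convexity of $\psi_t$ makes $T$ a global bijection on $V$ whose inverse is $1/(1-t)$-Lipschitz, so the defining subgradient relation $\beta(x-T(x))\in\partial F(T(x))$ forces $\partial F$ to be single-valued with $\nabla F(y)=\beta(T^{-1}(y)-y)$; the inverse function theorem then yields $F\in\mathcal{C}^2$ and
\begin{equation*}
\nabla^2 F(y)\;=\;\beta\bigl((\nabla T(x))^{-1}-\operatorname{I}\bigr),\qquad T(x)=y.
\end{equation*}

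Finally, since $\nabla T(x)=t\operatorname{J}(x)+(1-t)\operatorname{I}$ is symmetric with eigenvalues $t\lambda+1-t$ for $\lambda\in[0,1/\gamma]$, the eigenvalues of $\nabla^2 F(y)$ are $\mu(\lambda)=\beta\,\tfrac{t(1-\lambda)}{t\lambda+1-t}$, a strictly decreasing function of $\lambda$ (for $t>0$). The minimum $\mu(1/\gamma)=-\beta\,\tfrac{t-\gamma t}{t+\gamma-\gamma t}=-r(t)$ delivers the stated $r(t)$-weak convexity, and the Lipschitz constant of $\nabla F$ equals $\max(\mu(0),|\mu(1/\gamma)|)=\beta\max\bigl(\tfrac{t}{1-t},\,\tfrac{t(1-\gamma)}{t+\gamma-\gamma t}\bigr)$; an elementary cross-multiplication shows $\tfrac{t}{1-t}\ge\tfrac{t(1-\gamma)}{t+\gamma-\gamma t}$ iff $2t(1-\gamma)\ge 1-2\gamma$, producing the threshold $t=\tfrac{1-2\gamma}{2-2\gamma}$ and the two stated cases for $L(t)$. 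The main obstacle is the clean invocation of Gribonval to promote ``gradient of the convex $\psi_t$'' into ``proximity operator of a (possibly nonconvex) $F$,'' together with the surjectivity/invertibility argument needed so that $F$ and $\nabla F$ are globally well-defined; once those are in place, both $r(t)$ and $L(t)$ fall out of the spectrum computation.
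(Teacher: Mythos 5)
Your proposal is correct, and it reaches both constants by a genuinely different route than the paper. The paper first proves by direct manipulation of the defining inequalities that $\operatorname{D}_\sigma^t$ is $\frac{\gamma}{t+\gamma(1-t)}$-cocoercive, then invokes its Theorem \ref{thm 0} (conservative $+$ cocoercive $\Rightarrow$ prox of a weakly convex $F$, with weak convexity extracted from the monotonicity inequality $\langle \beta(x-u)-\beta(y-v),u-v\rangle + \beta(1-\gamma)\|u-v\|^2\ge 0$ on subgradients), and finally establishes the Lipschitz bound on $\partial F$ through a two-case chain of inner-product estimates, the second case being a fairly heavy coefficient-by-coefficient computation. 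You instead pass immediately to the potential $\psi_t=t\phi+\tfrac{1-t}{2}\|\cdot\|^2$, use strong convexity to invert $T=\nabla\psi_t$ globally, and read both $r(t)$ and $L(t)$ off the single eigenvalue function $\mu(\lambda)=\beta\,\tfrac{t(1-\lambda)}{t\lambda+1-t}$ on $\lambda\in[0,1/\gamma]$; the case split $t\gtrless\frac{1-2\gamma}{2-2\gamma}$ drops out of comparing $\mu(0)$ with $|\mu(1/\gamma)|$. Your computations check out ($\mu(1/\gamma)=-r(t)$, $\mu(0)=\beta\frac{t}{1-t}$, and the cross-multiplication giving the threshold are all correct), and the approach is more unified and transparent about where the two constants come from. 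What it costs is additional regularity bookkeeping that the paper's finite-difference argument avoids: you need surjectivity of $\nabla\psi_t$ (strong monotonicity plus Lipschitz continuity, e.g.\ Browder--Minty, or just finite dimensions), the inverse function theorem to get $F\in\mathcal{C}^2$ with $\nabla^2F(y)=\beta\bigl((\nabla T(x))^{-1}-\operatorname{I}\bigr)$, and the mean value inequality to convert the uniform operator-norm bound on $\nabla^2F$ into a global Lipschitz constant for $\nabla F$. These are all available under the hypothesis $\operatorname{D}_\sigma\in\mathcal{C}^1[V]$ (in infinite dimensions replace ``eigenvalues in $[0,1/\gamma]$'' by the self-adjoint operator inequalities $0\preceq\operatorname{J}\preceq\gamma^{-1}\operatorname{I}$, which Lemma \ref{lemma 1} plus symmetry still give), so the argument is sound; it simply trades the paper's elementary but laborious estimates for a cleaner spectral picture with slightly more analytic overhead.
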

\begin{remark}
$\operatorname{D}_\sigma^t$ is an averaged version of $\operatorname{D}_\sigma$. As reported by \cite{hurault2022proximal}, when $\operatorname{D}_\sigma$ is residual non-expansive, PnP with $\operatorname{D}_\sigma^t$ out-performs PnP with $\operatorname{D}_\sigma$ in many inverse problems. 
\end{remark}
\begin{remark}
If $\gamma=0.5$, $\operatorname{D}_\sigma$ is residual non-expansive, $\operatorname{D}_\sigma^{t}$ is then residual $t$-Lipschitz, and $\beta^{-1}F$ is $\frac{t}{1+t}$-weakly convex, with $\beta^{-1}\partial F$ being $\frac{t}{1-t}$-Lipschitz. This special case recovers the result by \cite{hurault2022proximal}. 
\end{remark}

\begin{remark}
By Theorem \ref{thm 1}, we know that $\operatorname{D}_\sigma^t$ is a proximal operator of a weakly convex function $\frac{F}{\beta}$. We arrive at the following implicit non-convex restoration model:
\begin{equation}\label{our model}
    \hat{u}\in \arg\min_{u\in V} F(u)+G(u;f),\ 
    G(u;f) = \lambda\langle \textbf{1},Ku-f\log Ku\rangle.
\end{equation}
\end{remark}
\vspace{-4ex}

\section{Training strategy}
Let $\theta$ be the parameter weights in $\operatorname{D}_\sigma$ to be optimized, $\pi$ be the distribution of the training set of clean images, and $[\sigma_{\min},\sigma_{\max}]$ be the interval of the noise level. 

Based on Lemma \ref{lemma 1} and the pioneer works by \cite{terris2020building,pesquet2021learning}, we encourage the cocoerciveness by the SR technique. The spectral regularization term $L_s$ takes the form of:
\begin{equation}\label{loss spectral}
    L_s(\theta) = \mathbb{E}_{x,\sigma,\xi}\max\{\|2\gamma\operatorname{J}-\operatorname{I}\|_*,1-\epsilon\},
\end{equation}
where $\theta$ is the network weights, $\epsilon\in(0,1)$ is a parameter that controls the constraint. $x\sim \pi,\sigma\sim U[\sigma_{\min},\sigma_{\max}],\xi\sim \mathcal{N}(0,\sigma^2\operatorname{I})$. $\operatorname{J} = \operatorname{J}(x+\xi)=\nabla \operatorname{D}_\sigma(x+\xi;\theta)$.

To encourage a conservative denoiser, we propose the Hamiltonian regularization term $L_h$:
\begin{equation}\label{loss hamilton}
    L_h(\theta) = \mathbb{E}_{x,\sigma,\xi}\|\operatorname{J}-\operatorname{J}^\top\|_*.
\end{equation}

Combining (\ref{loss hamilton}) and (\ref{loss spectral}), the overall loss function of $\operatorname{D}_\sigma$ is 
\begin{equation}\label{loss}
    Loss(\theta)=\mathbb{E}\|\operatorname{D}_\sigma(x+\xi;\theta)-x\|_1+\alpha_1 L_h +\alpha_2 L_s.
\end{equation}
$\alpha_1,\alpha_2>0$ are balancing parameters. The first term in (\ref{loss}) ensures that $\operatorname{D}_\sigma$ is a Gaussian denoiser, the second term makes $\operatorname{D}_\sigma$ conservative, and the third term results in a $\gamma$-cocoercive denoiser. By minimizing $Loss(\theta)$, a properly regularized CoCo denoiser is obtained. Detailed calculation of $\|(2\gamma\operatorname{J}-\operatorname{I})\|_*$ and $\|\operatorname{J}-\operatorname{J}^\mathrm{T}\|_*$ is given in Appendix \ref{appendix training}.

\section{PnP methods with CoCo denoisers}
In order to solve the implicit nonconvex restoration model in (\ref{our model}), we first consider PnP-ADMM with CoCo denoisers, termed CoCo-ADMM. Then, we replace the fidelity term $G$ in (\ref{our model}) by its Moreau envelope, and derive the proximal envelope gradient descent method (PEGD) with CoCo denoisers (CoCo-PEGD). The convergence are provided.

\textbf{CoCo-ADMM.}
We first consider CoCo-ADMM iterations:
\begin{equation}\label{PnP-ADMM}
\begin{array}{ll}
u^{k+1}&=\operatorname{Prox}_{\frac{G}{\beta}}(v^k-b^k), \vspace{0.5ex}\\
v^{k+1}&=\operatorname{D}^t_\sigma(u^{k+1}+b^k),\\
b^{k+1}&=b^k+u^{k+1}-v^{k+1},
\end{array}
\end{equation}
where $\operatorname{D}^t_\sigma$ is defined in Theorem \ref{thm 1}, and $\beta=\frac{1}{\sigma^2}$. The PnP-ADMM algorithm in (\ref{PnP-ADMM}) with a $\gamma$-CoCo denoiser is referred to as $\gamma$-CoCo-ADMM, or CoCo-ADMM for short.

When the denoiser $\operatorname{D}_\sigma\in\mathcal{C}^1[V]$ is a CoCo denoiser satisfying the conditions in Theorem \ref{thm 1}, and $F$ verifies the Kurdyka-Lojasiewicz (KL) property \cite{attouch2010proximal,frankel2015splitting}, the global convergence of PnP-ADMM in (\ref{PnP-ADMM}) can be established as follows.
\begin{theorem}[Proof in Appendix \ref{appendix main thm}]\label{main thm}
Let $F:V\rightarrow \bar{\mathbb{R}}$ be a coercive weakly convex KL function in Theorem \ref{thm 1} such that $\operatorname{D}_\sigma^t\in\operatorname{Prox}_{\frac{F}{\beta}}$. $G:V\rightarrow \bar{\mathbb{R}}$ is lower semi-continuous and convex. {$\gamma\in(0,1)$. $t\in\left[0,t_0\right)$}, where $t_0=t_0(\gamma)$ is the positive root of the equation 
\begin{equation}\label{eq t gamma}
(2-2\gamma)t^3+\gamma t^2+2\gamma t-\gamma=0.
\end{equation}
Then, the sequence $\{(u^k,v^k,b^k)\}$ generated by (\ref{PnP-ADMM}) converges globally to a point $(u^*,v^*,b^*)$, and that $u^*=v^*$ is a stationary point of the model (\ref{our model}).
\end{theorem}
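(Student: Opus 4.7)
The plan is to follow the Attouch-Bolte-Svaiter framework for KL-based convergence of ADMM on nonconvex problems, instantiated with the weak-convexity parameter $r(t)$ and the Lipschitz constant $L(t)$ of $\nabla F$ supplied by Theorem \ref{thm 1}. I would set up the scaled augmented Lagrangian
\begin{equation*}
\mathcal{L}_\beta(u,v,b) = G(u) + F(v) + \beta\langle b, u-v\rangle + \tfrac{\beta}{2}\|u-v\|^2,
\end{equation*}
and read off (\ref{PnP-ADMM}) as its alternating minimization in $u$, then $v$, followed by dual ascent in $b$. Combining the $v$-step optimality with the $b$-update yields $\nabla F(v^{k+1}) = -\beta b^{k+1}$, so the $L(t)$-Lipschitzness of $\nabla F$ produces the crucial dual bound $\|b^{k+1}-b^k\| \le (L(t)/\beta)\,\|v^{k+1}-v^k\|$.

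The heart of the argument is a one-cycle sufficient decrease of $\mathcal{L}_\beta$. The $u$-subproblem is $\beta$-strongly convex since $G$ is convex, decreasing $\mathcal{L}_\beta$ by at least $\tfrac{\beta}{2}\|u^{k+1}-u^k\|^2$; the $v$-subproblem is $(\beta-r(t))$-strongly convex by $r(t)$-weak convexity and $r(t)<\beta$, decreasing $\mathcal{L}_\beta$ by at least $\tfrac{\beta-r(t)}{2}\|v^{k+1}-v^k\|^2$; the $b$-update increases $\mathcal{L}_\beta$ by exactly $\beta\|b^{k+1}-b^k\|^2 \le \tfrac{L(t)^2}{\beta}\|v^{k+1}-v^k\|^2$. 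Asking the net coefficient on $\|v^{k+1}-v^k\|^2$ to be strictly positive, substituting the Case~1 values $L(t)=\beta t/(1-t)$ and $r(t)=\beta(1-\gamma)t/(\gamma+(1-\gamma)t)$ from Theorem \ref{thm 1}, and clearing denominators, the requirement collapses to exactly
\begin{equation*}
(2-2\gamma)t^3+\gamma t^2+2\gamma t-\gamma<0,
\end{equation*}
which is precisely the range $t\in[0,t_0)$ by definition of $t_0$; Case~2 is immediate since there $L(t)=r(t)$.

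With sufficient decrease in hand, coercivity of $F$ bounds $\mathcal{L}_\beta$ from below, telescoping gives $\sum_k(\|u^{k+1}-u^k\|^2+\|v^{k+1}-v^k\|^2)<\infty$, and the dual bound forces $\|b^{k+1}-b^k\|\to 0$; the $b$-update then gives $\|u^{k+1}-v^{k+1}\|\to 0$, and boundedness of $\{(u^k,v^k,b^k)\}$ follows. Any cluster point $(u^\ast,v^\ast,b^\ast)$ satisfies $u^\ast=v^\ast$ together with the KKT conditions $\beta b^\ast\in\partial G(u^\ast)$ and $-\beta b^\ast=\nabla F(v^\ast)$, so $0\in\partial F(u^\ast)+\partial G(u^\ast)$, identifying $u^\ast$ as a stationary point of (\ref{our model}). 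To upgrade subsequential to global convergence, I would introduce the Lyapunov function $\Phi(u,v,b,v^-)=\mathcal{L}_\beta(u,v,b)+c\|v-v^-\|^2$ for a suitable $c>0$ and verify the three Attouch-Bolte-Svaiter conditions on $\Phi$: sufficient decrease (as above, with $c$ chosen to absorb the $b$-slack), relative subgradient bound $\operatorname{dist}(0,\partial\Phi^{k+1})\le C\bigl(\|u^{k+1}-u^k\|+\|v^{k+1}-v^k\|+\|v^k-v^{k-1}\|\bigr)$ via the Lipschitz estimate, and continuity on the cluster set; the KL property of $F$, inherited by $\Phi$, then delivers $\sum_k\|v^{k+1}-v^k\|<\infty$ and single-limit convergence.

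The main obstacle is the tight constant accounting in the second paragraph: one has to carry $\tfrac{\beta-r(t)}{2}-\tfrac{L(t)^2}{\beta}$ through both regimes of $L(t)$ in Theorem \ref{thm 1}, pin down the sharp threshold as the unique positive root $t_0$ of the displayed cubic, and then choose the Lyapunov coefficient $c$ so the decrease survives augmentation while still yielding a single-variable KL bound on $\partial\Phi$. Once this algebra is settled, coercivity, continuity of $\nabla F$, and the KL finite-length argument are routine.
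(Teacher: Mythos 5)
Your proposal is correct and follows essentially the same route as the paper's proof: the same augmented-Lagrangian Lyapunov function $L_\beta$, the same per-step decrease accounting whose net coefficient on $\|v^{k+1}-v^k\|^2$ reduces to the sign of the cubic $(2-2\gamma)t^3+\gamma t^2+2\gamma t-\gamma$ (with Case~2 handled by $L=r\le \beta t/(1-t)$), the same dual bound $\|b^{k+1}-b^k\|\le (L/\beta)\|v^{k+1}-v^k\|$, coercivity for boundedness, and the same KL finite-length argument for global convergence and stationarity. The only cosmetic deviations are a sign slip (the $v$-step optimality gives $\beta b^{k+1}\in\partial F(v^{k+1})$, not $-\beta b^{k+1}$, though this affects no norm estimate and the final condition $0\in\partial F(u^*)+\partial G(u^*)$ is unchanged) and your augmented Lyapunov $\Phi=\mathcal{L}_\beta+c\|v-v^-\|^2$, which is unnecessary here since the paper bounds a subgradient of $L_\beta$ itself by $C\|v^k-v^{k+1}\|$ alone.
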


\begin{remark}The KL property has been widely used to study the convergence of optimization algorithms in the nonconvex and nonsmooth setting \cite{attouch2010proximal}. Many functions, in particular all the real semi-algebraic functions, satisfy this property.
\end{remark}
\begin{remark}
We consider several special cases of Theorem \ref{main thm}. Let $\operatorname{D}_\sigma$ be conservative. When $\gamma=0.5$, $\operatorname{D}_\sigma$ is residual non-expansive, and $t_0\approx0.3761$. This recovers the case by \cite{hurault2024convergent}; when $\gamma=0.25$, both $\operatorname{D}_\sigma$ and its residual $\operatorname{I}-\operatorname{D}_\sigma$ can be expansive, but $0.5\operatorname{D}_\sigma-\operatorname{I}$ is non-expansive. In this case, $t_0\approx 0.3333$. When $\gamma=1$, $\operatorname{D}_\sigma$ is firmly non-expansive and is a proximal operator of some CCP function by Moreau's theorem. In this case, $t_0$ is no more needed to ensure the convergence, since this is the standard case of ADMM. In experiments, we use $\operatorname{D}_\sigma(u^{K}+b^{K})$ as the final output.
\end{remark}

\textbf{CoCo-PEGD.}
Now we consider the PGD algorithm with a CoCo denoiser. The standard PGD takes the form of:
\begin{equation}\label{PGD1}
    u^{k+1} =\operatorname{Prox}_{\frac{F}{\beta}}\left(u^k-\beta^{-1}\nabla G(u^k)\right).
\end{equation}
When $F$ is $r$-weakly convex, $\nabla G$ is $L_G$-Lipschitz, and $\beta^{-1}<\max\{\frac{2}{L_G+r},\frac{1}{L_G}\}$, the iteration (\ref{PGD1}) converges, see \cite{attouch2013convergence,hurault2024convergent}.

However, in the Poisson inverse problems, $G$ is not smooth, because $\nabla G$ is not Lipschitz. In order to apply the PGD algorithm, we smooth the fidelity $G$ in (\ref{our model}) and (\ref{PGD1}) by replacing it with its Moreau envelope $^\alpha {G}$ ($\alpha\in(0,\infty)$):
$
    ^\alpha{G}(u):=\min_v G(v)+\frac{1}{2\alpha}\|v-u\|^2.
$

Since $G$ is CCP, $^\alpha G$ is differentiable \cite{bauschke2017correction}:
$
    \nabla ^\alpha {G}(u)=\frac{1}{\alpha}(\operatorname{I}-\operatorname{Prox}_{\alpha G}),
$ and that $\nabla ^\alpha {G}$ is $\frac{1}{\alpha}$-Lipschitz. Since there are already a step parameter $\beta$ and a balancing parameter $\lambda$ in $G$, throughout this paper, we set $\alpha=1$. Now the iteration becomes:
\begin{equation}\label{PEGD}
    u^{k+1} =\operatorname{Prox}_{\frac{F}{\beta}}(u^k-\beta^{-1}\nabla {^1G}(u^k)).
\end{equation}
In (\ref{PEGD}), $u$ alternates between a proximal operator $\operatorname{Prox}_{\frac{F}{\beta}}$, and a gradient descent step on the envelope of $G$. (\ref{PEGD}) is referred to as the proximal envelope gradient descent method (PEGD). Similar to CoCo-ADMM (\ref{PnP-ADMM}), we replace $\operatorname{Prox}_{\frac{F}{\beta}}$ with $\operatorname{D}_\sigma^t$, and arrive at CoCo-PEGD:
\begin{equation}\label{CoCo PEGD}
     u^{k+1} =\operatorname{D}_\sigma^t(u^k-\beta^{-1}\nabla {^1G}(u^k)).
\end{equation}
By Theorem \ref{thm 1}, $\operatorname{D}_\sigma^t=\operatorname{Prox}_{\frac{F}{\beta}}$, with $F$ being $r$-weakly convex. Then by the Theorem 1 in \cite{hurault2024convergent}, CoCo-PEGD converges when $\beta^{-1}<\max\{\frac{2}{1+r},1\}$. This convergence result is summarized in Theorem \ref{main thm 3}. By Theorem \ref{main thm 3}, when $\beta>1$, CoCo-PEGD converges.

\begin{theorem}[Proof in Appendix \ref{Appendix thm3}]\label{main thm 3}
Let $F:V\rightarrow \bar{\mathbb{R}}$ be a coercive $r$-weakly convex KL function in Theorem \ref{thm 1} such that $\operatorname{D}_\sigma^t=\operatorname{Prox}_{\frac{F}{\beta}}$. $G:V\rightarrow \bar{\mathbb{R}}$ is CCP. $\gamma\in[0.25,1]$. $t\in(0,1]$. $0<\beta^{-1}<\max\{\frac{2}{1+r},1\}$. Then the sequence $\{u^k\}$ generated in (\ref{CoCo PEGD}) converges to a stationary point of:
\begin{equation}\label{our model PEGD}
    \hat{u}\in \arg\min_{u\in V} F(u)+ {^1G}(u;f). 
\end{equation}
\end{theorem}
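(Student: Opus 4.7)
The plan is to recognize the CoCo-PEGD iteration \eqref{CoCo PEGD} as the classical proximal gradient descent algorithm applied to the implicitly defined composite objective $F + {^1G}$, and then to invoke the nonconvex PGD convergence result from \cite{hurault2024convergent} after verifying all of its hypotheses one by one.

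First, by Theorem \ref{thm 1}, since $\gamma \in [0.25,1]$ and $t$ lies in the admissible range, the averaged denoiser $\operatorname{D}_\sigma^t$ coincides with $\operatorname{Prox}_{F/\beta}$ for some $r$-weakly convex function $F$ with $L$-Lipschitz subgradient (with $r,L$ given explicitly in Theorem \ref{thm 1}). Substituting this identification into \eqref{CoCo PEGD} rewrites the iteration as $u^{k+1} = \operatorname{Prox}_{F/\beta}\bigl(u^k - \beta^{-1} \nabla {^1G}(u^k)\bigr)$, which is literally proximal gradient descent on the composite objective $F + {^1G}$ with step $\beta^{-1}$.

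Second, I would establish the regularity of the smoothed fidelity that is needed by the cited convergence theorem. Since $G$ is CCP, the Moreau envelope $^1G$ is finite-valued, convex, everywhere Fr\'echet differentiable, and its gradient $\nabla {^1G} = \operatorname{I} - \operatorname{Prox}_G$ is $1$-Lipschitz; in particular, $^1G$ is smooth with constant $L_G = 1$ and bounded below. Combined with the $r$-weak convexity of $F$, the composite objective $F + {^1G}$ satisfies exactly the structural assumptions of Theorem 1 of \cite{hurault2024convergent}, and the step-size range there, $\beta^{-1} < \max\{2/(L_G+r), 1/L_G\}$, specializes under $L_G = 1$ to the condition $0 < \beta^{-1} < \max\{2/(1+r), 1\}$ appearing in the statement.

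Third, I would carry out the convergence argument itself: the coerciveness of $F$ together with the lower-boundedness of $^1G$ guarantees that the iterates remain in a sublevel set of $F + {^1G}$ and are therefore bounded; the step-size condition yields a sufficient-decrease inequality for a Lyapunov energy built from $F + {^1G}$; and the KL property of $F$, combined with the analyticity of the Moreau envelope of a CCP function on the interior of its domain, transfers to $F + {^1G}$, which then allows the standard KL argument to promote subsequential convergence into global convergence with summable increments. The limit point $u^*$ satisfies the first-order condition $0 \in \partial F(u^*) + \nabla {^1G}(u^*)$, i.e.\ it is a stationary point of \eqref{our model PEGD}.

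The main technical obstacle I anticipate is the clean transfer of hypotheses to the framework of \cite{hurault2024convergent}: checking that the KL property of $F$ lifts to $F + {^1G}$ (which requires that the Moreau envelope does not destroy the desingularizing function, typically handled through the stability of KL under $C^1$ perturbations with Lipschitz gradient), and handling the boundary regime of Theorem \ref{thm 1}, where the weak-convexity modulus $r$ or the Lipschitz constant $L$ of $\partial F$ may degenerate. For the latter, the natural remedy is to restrict to $t$ strictly less than the value at which the moduli blow up, or to invoke the more general characterization of Theorem \ref{thm 0} to cover the endpoint by a limiting argument.
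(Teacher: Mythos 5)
Your proposal matches the paper's proof essentially step for step: both identify the iteration as proximal gradient descent on $F+{^1G}$, note that $\nabla {^1G}=\operatorname{I}-\operatorname{Prox}_G$ is firmly non-expansive hence $1$-Lipschitz, use coerciveness of $F+{^1G}$ to bound the iterates, and conclude by invoking Theorem 1 of \cite{hurault2024convergent} with $L_H=1$, which yields exactly the stated step-size condition. The only difference is that you flag the KL transfer to $F+{^1G}$ as a potential obstacle, whereas the paper simply asserts it since $G$ is CCP and $^1G$ is smooth; your caution is reasonable but does not change the argument.
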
\vspace{-3ex}

\section{Experiments}
\vspace{-1ex}

All the experiments are conducted under Linux system, Python 3.8.12 and Pytorch 1.10.2 with a RTX 3090 GPU.

\textbf{Training details.} For $\operatorname{D}_\sigma$, we select DRUNet \cite{zhang2021plug}, which combines a residual learning \cite{he2016deep} and UNet architecture \cite{ronneberger2015u}. DRUNet takes both the noisy image and the noise level $\sigma$ as input, making it convenient for PnP image restoration.

To train a CoCo denoiser, we collect 800 images from the DIV2K dataset \cite{Ignatov_2018_ECCV_Workshops} as the training set and used a batch size of 32 and patch size $128$. We add Gaussian noise with randomly generated standard deviation values in the range of $[\sigma_{\min},\sigma_{\max}]=[0,50]$ to the clean image. Adam optimizer is applied to train the model with learning rate $lr=10^{-4}$. We set $\alpha_1 = 1, \alpha_2=0.01$, and $\epsilon=0.1$ to ensure the regularity conditions. To accurately evaluate the spectral norms $\|\operatorname{J}(x)-\operatorname{J}^\top(x)\|_*$ and $\|(2\gamma\operatorname{J}-\operatorname{I})(x)\|_*$, we use the power iterative method \cite{golub2013matrix} with $30$ iterations to ensure the convergence. \par

\textbf{Denoising performance.} We evaluate the Gaussian denoising performances of the proposed denoiser CoCo-DRUNet, strictly pseudo-contractive DRUNet (SPC-DRUNet) \cite{AHQS}, resolvent of a maximally monotone operator (RMMO) \cite{pesquet2021learning} which is firmly non-expansive, GS-DRUNet  \cite{hurault2022gradient}, Prox-DRUNet \cite{hurault2022proximal}, the standard DRUNet, FFDNet \cite{zhang2018ffdnet}, and DnCNN \cite{zhang2017beyond}.

The PSNR values are given in Table \ref{denoising performance}. A $\gamma$-cocoercive conservative denoiser is referred to as $\gamma$-CoCo-DRUNet. We see in Table \ref{denoising performance} that compared with DnCNN, FFDNet, and the regularized denoisers, $0.25$-CoCo-DRUNet has competitive denoising performance. This is because that: cocoercive and conservative properties are less restrictive for deep denoisers; denoiser is trained with a different loss function. Both $0.50$-CoCo-DRUNet and Prox-DRUNet are conservative with non-expansive residual, and therefore share a similar denoising performance.

\begin{table}[htbp]
\caption{Left: Average denoising PSNR performance of different denoisers on CBSD68, for various noise levels $\sigma$; Right: Mean symmetry error $\|\operatorname{J}-\operatorname{J}^\top\|_* $ ($N=1$) and maximal values of the norm $\|2\gamma\operatorname{J}-\operatorname{I}\|_*$ ($N=30$) for various noise levels $\sigma$ and $\gamma=0.50,0.25$. }\label{denoising performance}
\centering
\begin{minipage}{0.40\linewidth}
\resizebox{1\textwidth}{!}{
\begin{tabular}{cccc}
\toprule 
        &$\sigma=15$ &$\sigma=25$ &$\sigma=40$ \\
\midrule
FFDNet                               &33.86&31.18&28.81 \\
DnCNN                       &33.88 & 31.20 & 28.89          \\
DRUNet&\textbf{34.14} & \textbf{31.54}& \textbf{29.33}   \\   
\midrule
RMMO   &32.21 & 29.99 & 27.87         \\
GS-DRUNet   & 33.56&31.01 &28.81 \\
Prox-DRUNet   & 33.18& 30.60 & 28.38     \\
SPC-DRUNet & 33.90 & 31.29 & {29.10}    \\
$0.50$-CoCo-DRUNet&{33.38} & {30.65}& {28.25} \\
$0.25$-CoCo-DRUNet& \textbf{34.00}& \textbf{31.38}&\textbf{29.16}   \\
\bottomrule
\end{tabular}
}
\end{minipage}
\hfill
\begin{minipage}{0.55\linewidth}
\resizebox{1\textwidth}{!}{
\begin{tabular}{ccccc}
\toprule
             & 15     & 25     & 40     &  Norms  \\
\midrule
DRUNet              &  $4.1$e+0& $4.2$e+1 & $1.8$e+1 & $\|\operatorname{J}-\operatorname{J}^\top\|_* $  \\
$0.50$-CoCo-DRUNet  & $8.6$e-3 & $2.5$e-4 & $7.1$e-4 & $\|\operatorname{J}-\operatorname{J}^\top\|_* $   \\
$0.25$-CoCo-DRUNet  & $3.1$e-4 & $1.8$e-4 & $3.9$e-4 & $\|\operatorname{J}-\operatorname{J}^\top\|_* $   \\

\midrule
DRUNet              & $3.285$ &$4.343$  & $6.283$ & $\|\operatorname{J}-\operatorname{I}\|_*$ \\
$0.50$-CoCo-DRUNet  & $0.994$ & $0.992$ &$0.972$  &  $\|\operatorname{J}-\operatorname{I}\|_*$ \\
$0.25$-CoCo-DRUNet  & $0.986$ & $0.969$ &$0.982$  &  $\|0.5\operatorname{J}-\operatorname{I}\|_*$ \\
\bottomrule
\end{tabular}
}
\end{minipage}
\end{table}

\textbf{Assumption validations.}
In the experiments, the symmetric Jacobian and non-expansive residual are softly constrained by the loss function (\ref{loss}) with trade-off parameters $\alpha_1,\alpha_2$. We validate the conditions in Table \ref{denoising performance}. We use the symmetry error $\|\operatorname{J}(x)-\operatorname{J}^\top(x)\|_* $ over $100$ different patches with $N=1$ as in Algorithm \ref{alg power} for better demonstration. A smaller error denotes a higher Jacobian symmetry. The cocoerciveness is validated by calculating $\|2\gamma\operatorname{J}(x)-x \|_*$. If $\|2\gamma\operatorname{J}(x)-x \|_*\le1$, the denoiser is $\gamma$-cocoercive.

As shown in Table \ref{denoising performance}, DRUNet without regularization terms has an expansive residual part. The regularized CoCo-DRUNet is cocoercive. Besides, the proposed Hamiltonian regularization term indeed encourages a symmetric Jacobian: in Table \ref{denoising performance}, we see that Hamiltonian regularization reduces the symmetry error. It validates the effectiveness of the proposed training strategy. Ablation study on the parameters $\alpha_1$ and $\alpha_2$ in \eqref{loss} are provided on Appendix \ref{sec:ablation}.

\textbf{PnP restoration.} We apply CoCo-ADMM and CoCo-PEGD to multiple Poisson inverse problems including: \textbf{photon limited deconvolution}, \textbf{single photon imaging} in a real-world low-light setting in Appendix \ref{appendix single}, \textbf{low-dose CT reconstruction} tasks in Appendix \ref{appendix CT}, \textbf{Poisson denoising} in Appendix \ref{appendix poisson denoising}. \textbf{Computational time}, \textbf{performances under extreme conditions}, and \textbf{blind deblur and denoise results}, are reported in Appendices \ref{sec:cputime}, \ref{sec:extreme}, \ref{sec:deblur}, and \ref{sec:denoise} respectively.

When $K$ in (\ref{model}) is not the identity matrix, in general, $\operatorname{Prox}_G$ has no closed-form solution. Since $G$ is convex, we use ADMM to solve $\operatorname{Prox}_G$ efficiently. For details, please refer to Appendix \ref{appendix proximal}.

We choose $\gamma=0.25$ since $0.25$-CoCo-DRUNet has a satisfactory performance. According to Theorems \ref{main thm}-\ref{main thm 3}, CoCo-ADMM and CoCo-PEGD are guaranteed to converge with $t<0.3333$. For both CoCo-ADMM and PEGD, we need to calculate the proximal operator $\operatorname{Prox}_{\frac{G}{\beta}}$. We detail the calculations for each task in Appendix \ref{appendix proximal}. For each method, we fine tune the parameters to achieve the best quantitive PSNR values. The proposed methods are initialized with the observed image, that is $u^0=v^0=f, b^0=0$. 

\begin{table}[htbp]
\caption{Left: Average deconvolution PSNR and SSIM performance by different methods on CBSD68 with Levin's $8$ kernels and Poisson noises with peak value $p=50$ and $p=100$. Right: Average low-dose CT reconstruction PSNR and SSIM performance by different methods on Mayo's dataset with Poisson noises with peak value $p=500$, and $p=100$.}\label{tab deblur}
\begin{center}
\begin{minipage}{0.45\linewidth}
\resizebox{1.0\textwidth}{!}{
\begin{tabular}{ccccc}
\toprule
& \multicolumn{2}{c}{$p=100$} & \multicolumn{2}{c}{$p=50$} \\
\midrule
 & PSNR & SSIM & PSNR & SSIM \\
\midrule
DPIR        & 26.51 & \textbf{0.7419} & 25.38 & 0.6910  \\
RMMO-DRS         & 25.94 & 0.7019 & 25.10 & 0.6546 \\
Prox-DRS & 25.68 & 0.6764& 25.21&0.6572\\
DPS & 23.65 & 0.6062 & 23.10 & 0.5816\\
DiffPIR & 24.82 & 0.6429 & 24.08 & 0.6074\\
SNORE & 26.33 & 0.7158 & 25.21 & 0.6719\\
PnPI-HQS     & 26.42 & 0.7158 & 25.61 & {0.6956}  \\ 
B-RED       & 24.09 & 0.6794 & 23.78 & 0.6603  \\
CoCo-ADMM & \textbf{26.89} & \underline{0.7358} & \textbf{26.00} & \textbf{0.7026}\\
CoCo-PEGD & \underline{26.79} & 0.7323 & \underline{25.90} & \underline{0.6958} \\
\bottomrule
\end{tabular}
}
\end{minipage}
\hfill
\begin{minipage}{0.45\linewidth}
\resizebox{1.0\textwidth}{!}{
\begin{tabular}{ccccc}
\toprule
& \multicolumn{2}{c}{$p=500$}  & \multicolumn{2}{c}{$p=100$}\\
\midrule
 & PSNR & SSIM &  PSNR & SSIM\\
\midrule 
FBP  & 28.76 & 0.5212&  24.10 & 0.2974\\
PWLS-TGV & 33.16&0.8461& 30.43 & 0.7750   \\
PWLS-CSCGR  & 35.45 &0.8909 & 33.78&0.8534\\
UNet & 36.93 & 0.9139  & 35.19 & 0.8875\\
WNet & {37.12} & 0.9266 & 35.98 &0.9130\\
CoCo-ADMM & \underline{37.63} & \textbf{0.9403} & \textbf{36.68} & \textbf{0.9194} \\
CoCo-PEGD  & \textbf{37.72} & \underline{0.9391} &  \underline{36.43} & \underline{0.9159}\\
\bottomrule
\end{tabular}
}
\end{minipage}
\end{center}
\end{table}\vspace{-1ex}

\begin{figure*}[bhtp]

\begin{minipage}{0.13\linewidth}
  \centerline{\includegraphics[width=1.99cm]{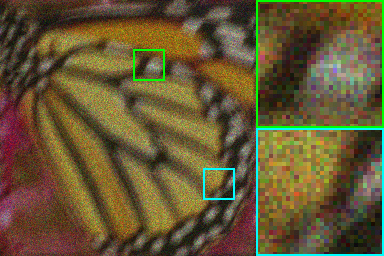}}
  \centerline{\small (a) Blur}
\end{minipage}
\hfill
\begin{minipage}{0.13\linewidth}
  \centerline{\includegraphics[width=1.99cm]{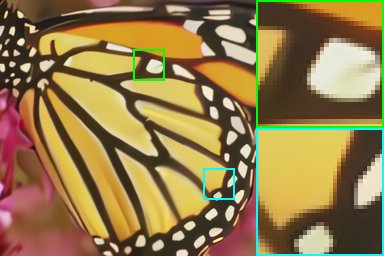}}
  \centerline{\small (b) DPIR}
\end{minipage}
\hfill
\begin{minipage}{0.13\linewidth}
  \centerline{\includegraphics[width=1.99cm]{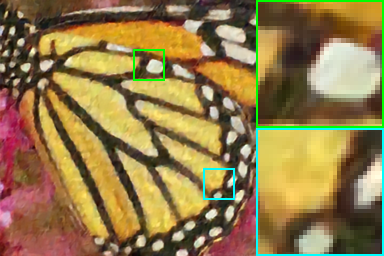}}
  \centerline{\small (c) RMMO-DRS}
\end{minipage}
\hfill
\begin{minipage}{0.13\linewidth}
  \centerline{\includegraphics[width=1.99cm]{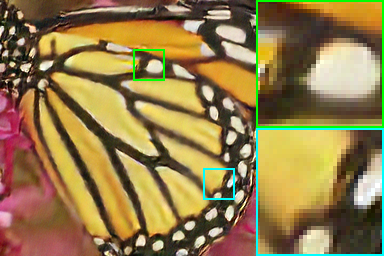}}
  \centerline{\small (d) Prox-DRS}
\end{minipage}
\hfill
\begin{minipage}{0.13\linewidth}
  \centerline{\includegraphics[width=1.99cm]{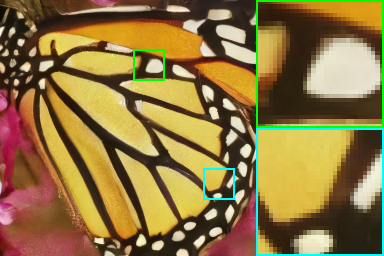}}
  \centerline{\small (e) DPS}
\end{minipage}
\hfill
\begin{minipage}{0.13\linewidth}
  \centerline{\includegraphics[width=1.99cm]{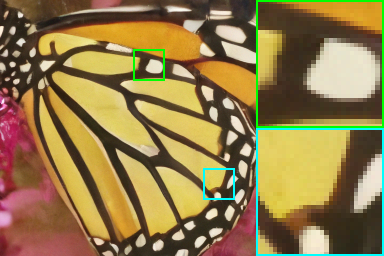}}
  \centerline{\small (f) DiffPIR}
\end{minipage}
\hfill
\begin{minipage}{0.13\linewidth}
  \centerline{\includegraphics[width=1.99cm]{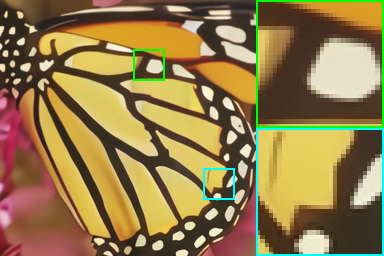}}
  \centerline{\small (g) SNORE}
\end{minipage}
\hfill
\begin{minipage}{0.13\linewidth}
  \centerline{\includegraphics[width=1.99cm]{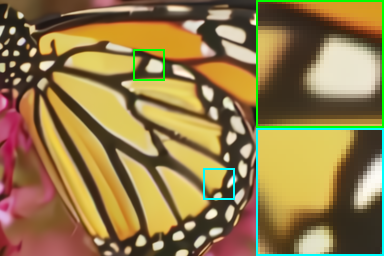}}
  \centerline{\small (h) PnPI-HQS}
\end{minipage}
\hfill
\begin{minipage}{0.13\linewidth}
  \centerline{\includegraphics[width=1.99cm]{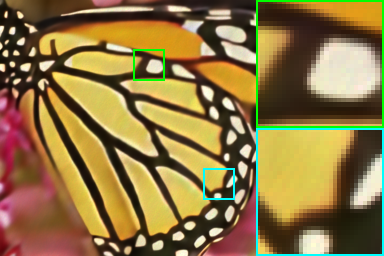}}
  \centerline{\small (i) B-RED}
\end{minipage}
\hfill
\begin{minipage}{0.13\linewidth}
  \centerline{\includegraphics[width=1.99cm]{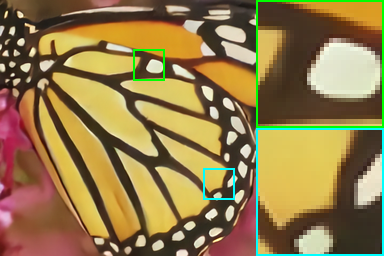}}
  \centerline{\small (j) CoCo-ADMM (k)}
\end{minipage}
\hfill
\begin{minipage}{0.13\linewidth}
  \centerline{\includegraphics[width=1.99cm]{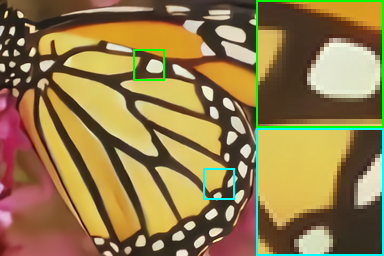}}
  \centerline{\small\ \ \ \ \   CoCo-PEGD}
\end{minipage}
\hfill
\begin{minipage}{0.13\linewidth}
  \centerline{\includegraphics[width=1.99cm]{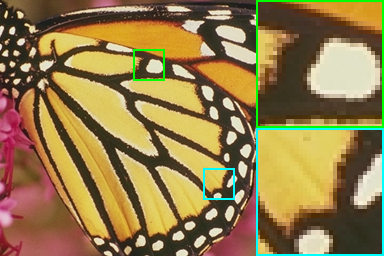}}
  \centerline{\small (l) Clean}
\end{minipage}
\hfill
\begin{minipage}{0.13\linewidth}
  \centerline{\includegraphics[width=1.99cm]{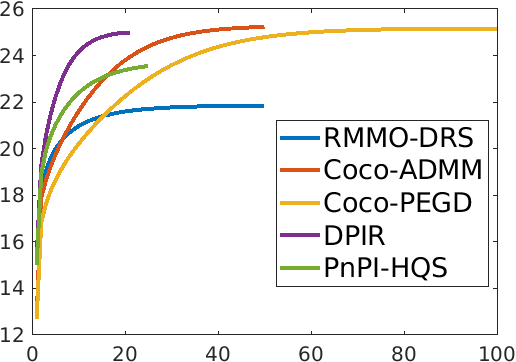}}
  \centerline{\small (m) PSNR}
\end{minipage}
\hfill
\begin{minipage}{0.13\linewidth}
  \centerline{\includegraphics[width=1.99cm]{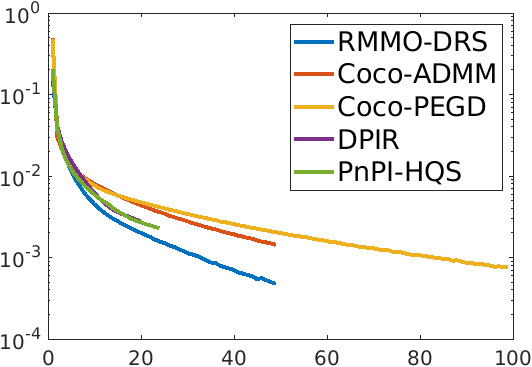}}
  \centerline{\small (n) Relative error}
\end{minipage}
\centering\caption{Deconvolution results by different methods on the image `Butterfly' from Set3c with kernel $2$ and $p=50$ Poisson noises. (a) Blur image. (b) DPIR, PSNR=$25.00$dB. (c) RMMO-DRS, PSNR=$21.86$dB. (d) Prox-DRS, PSNR=$22.57$dB. (e) DPS, PSNR=$21.47$dB. (f) DiffPIR, PSNR=$22.66$dB. (g) SNORE, PSNR=$25.14$dB. (h) PnPI-HQS, PSNR=$23.56$dB. (i) B-RED, PSNR=$ 23.34$dB. (j) CoCo-ADMM, PSNR=$25.25$dB. (k) CoCo-PEGD, PSNR=$25.17$dB. (l) Clean image. (m) PSNR curves. (n) Relative error curves. $x$-axis denotes the iteration number.
}
\label{fig butterfly}
\end{figure*}

\textbf{Photon limited deconvolution.} In this task, $K$ in (\ref{our model}) is the blur kernel. We use $8$ real-world camera shake kernels by \cite{levin2009understanding}, see Fig. \ref{kernel} in Appendix \ref{appendix deblurring}.

We compare our methods with some close related PnP methods, including DPIR \cite{zhang2021plug}, which applies PnP-HQS method with decreasing step size. Please note that DPIR is not guaranteed to converge; RMMO-DRS \cite{terris2020building}, which uses the DRS method with a firmly non-expansive denoiser RMMO; B-RED \cite{hurault2024convergent}, which uses the gradient descent method with a Bregman denoiser; PnPI-HQS \cite{AHQS}, which uses the Ishikawa HQS method with a strictly pseudo-contractive denoiser; SNORE \cite{renaud2024plug}, which proposes the novel stochastic denoising regularization by iteratively adding Gaussian noises. We also compare two state-of-the-art diffusion-based methods: DPS \cite{chung2023diffusion} and DiffPIR \cite{zhu2023denoising}. The average PSNR and SSIM values on CBSD68 are summarized in Table \ref{tab deblur}.

\begin{figure*}[htbp]

\begin{minipage}{0.16\linewidth}
  \centerline{\includegraphics[width=2.34cm]{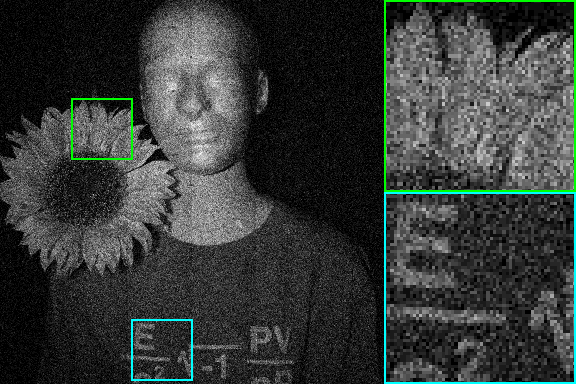}}
  \centerline{\small (a) Reflectivity}
\end{minipage}
\hfill
\begin{minipage}{0.16\linewidth}
  \centerline{\includegraphics[width=2.34cm]{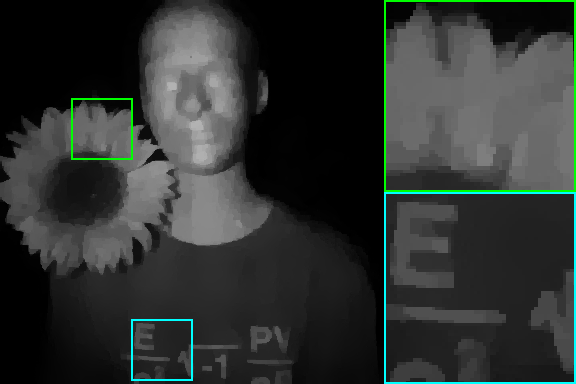}}
  \centerline{\small (b) Photon TV}
\end{minipage}
\hfill
\begin{minipage}{0.16\linewidth}
  \centerline{\includegraphics[width=2.34cm]{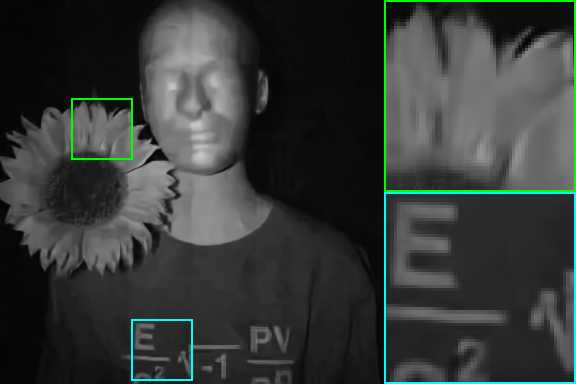}}
  \centerline{\small (c) BM3D-VST}
\end{minipage}
\hfill
\begin{minipage}{0.16\linewidth}
  \centerline{\includegraphics[width=2.34cm]{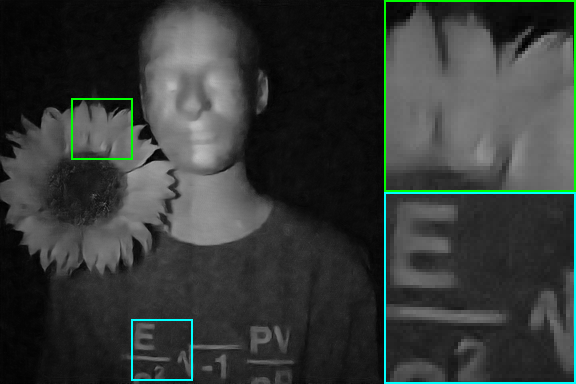}}
  \centerline{\small (d) DnCNN}
\end{minipage}
\hfill
\begin{minipage}{0.16\linewidth}
  \centerline{\includegraphics[width=2.34cm]{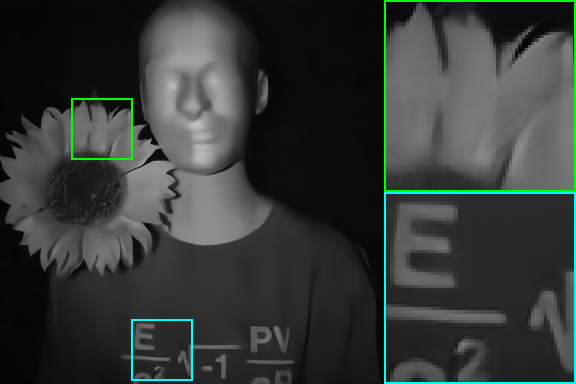}}
  \centerline{\small (e) VDIR}
\end{minipage}
\hfill
\begin{minipage}{0.16\linewidth}
  \centerline{\includegraphics[width=2.34cm]{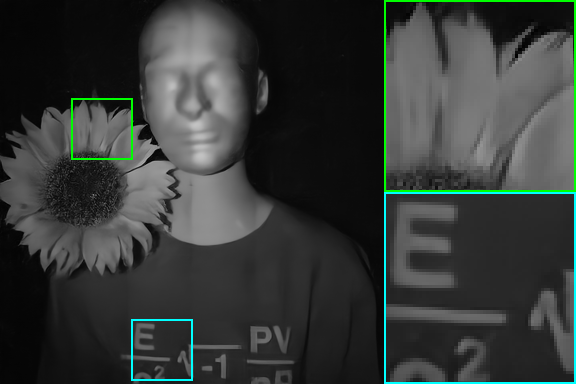}}
  \centerline{\small (f) VBDNet}
\end{minipage}
\hfill
\begin{minipage}{0.16\linewidth}
  \centerline{\includegraphics[width=2.34cm]{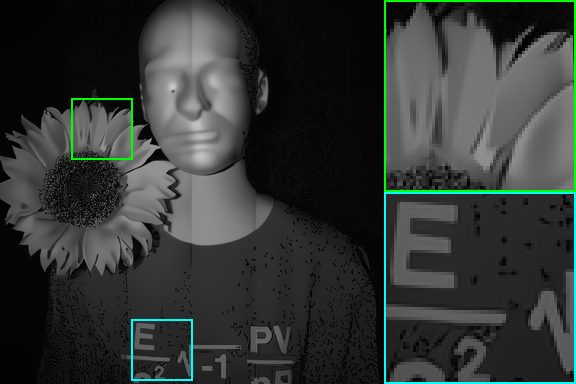}}
  \centerline{\small (g) DPIR}
\end{minipage}
\hfill
\begin{minipage}{0.16\linewidth}
  \centerline{\includegraphics[width=2.34cm]{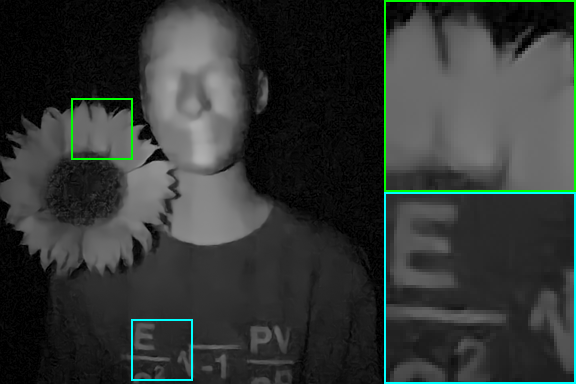}}
  \centerline{\small (h) RMMO-DRS}
\end{minipage}
\hfill
\begin{minipage}{0.16\linewidth}
  \centerline{\includegraphics[width=2.34cm]{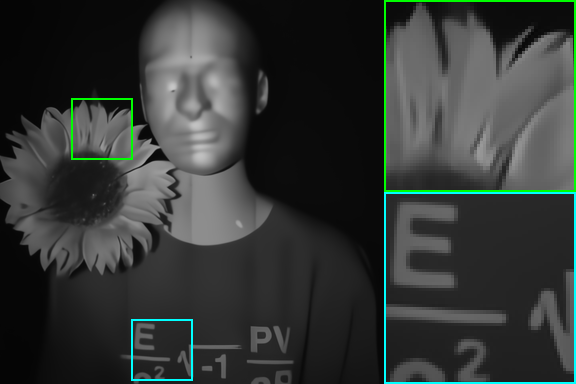}}
  \centerline{\small (i) PnPI-HQS}
\end{minipage}
\hfill
\begin{minipage}{0.16\linewidth}
  \centerline{\includegraphics[width=2.34cm]{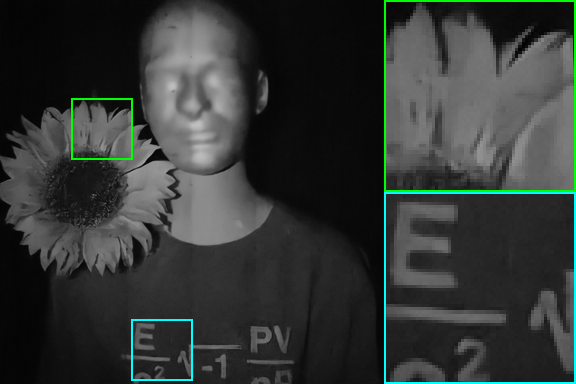}}
  \centerline{\small (j) CoCo-ADMM}
\end{minipage}
\hfill
\begin{minipage}{0.16\linewidth}
  \centerline{\includegraphics[width=2.34cm]{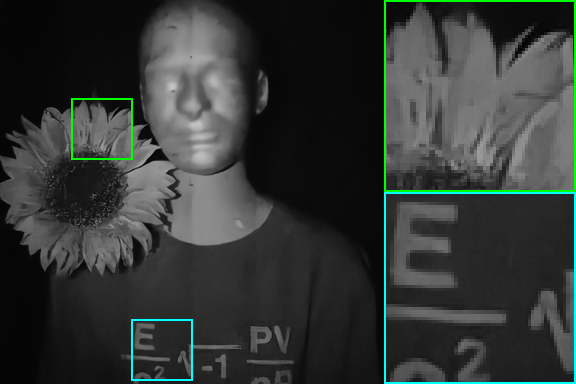}}
  \centerline{\small (k) CoCo-PEGD}
\end{minipage}
\hfill
\begin{minipage}{0.16\linewidth}
  \centerline{\includegraphics[width=2.34cm]{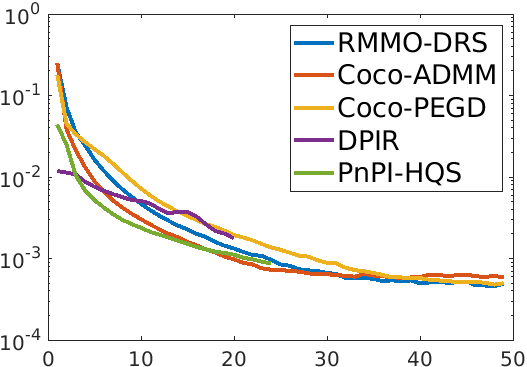}}
  \centerline{\small (l) Relative error}
\end{minipage}
\centering\caption{Single photon imaging results in a real-world low light setting by different methods. 
}
\label{fig single}
\end{figure*}

We show the visual results in Fig. \ref{fig butterfly}. In Fig. \ref{fig butterfly} (a), the image `Butterfly' is severely degraded. In this setting, many methods fail to recover the clear edges, see Figs. \ref{fig butterfly} (c)-(i). Compared with DPIR and SNORE, the enlarged parts by CoCo-ADMM and CoCo-PEGD are closer to the potential clean image, see Figs. \ref{fig butterfly} (b), (e), (h), (i). \vspace{-2ex}

\section{Conclusion and Limitation}\label{sec 6}\vspace{-1ex}
This paper introduces a novel cocoercive conservative assumption on the denoiser. Cocoerciveness is weaker than the existing assumptions, and is less restrictive for a deep denoiser. Conservativeness is analyzed geometrically by the generalized Helmholtz decomposition on the Fr\'{e}chet differential. We propose a novel training strategy that incorporates a Hamiltonian regularization term and a spectral regularization term, which  encourages a cocoercive conservative (CoCo) Gaussian denoiser. Theoretically, CoCo denoiser is proved to be a proximal operator of an implicit weakly convex prior function. The global convergence results of PnP methods to a stationary point are given. The results can be naturally generalized to other inverse problems with a convex fidelity term and an implicit weakly convex prior term. Extensive experimental results demonstrate that the proposed CoCo-PnP methods achieve competitive performance in terms of both visual quality and quantitative measures.

The main limitation of this paper is that the proposed CoCo denoiser exhibits a slightly worse Gaussian denoising performance compared to the non-regularized denoiser, see Table \ref{tab denoise}. This is because the proposed regularized denoiser sacrifices a little denoising performance in order to achieve guaranteed theoretical results as stated in Theorems \ref{thm 1}-\ref{main thm 3}. Another limitation is that the proposed learning strategy can only encourage the desired mathematical properties, not enforce them. In experiments, we do observe that sometimes, even after a long time spectral regularization training, the trained denoiser still violates the cocoercive property on some particular images. For conservativity, we find that when $N=30$, the symmetry error compared with standard DRUNet is smaller, but not significantly smaller. How to enforce such properties without greatly compromising the denoising performance is still an open problem. Since the paper mainly focuses on the theoretical analysis of PnP methods, we will address these limitations in future works.

\section*{Acknowledgments}
Fang Li is supported by National Key  R\&D Program of China (Nos. 2021YFA1000302), Fundamental Research Funds for the Central Universities, Natural Science Foundation of Shanghai (No. 22ZR1419500), Science and Technology Commission of Shanghai Municipality (No. 22DZ2229014), and
Natural Science Foundation of Chongqing, China (No. CSTB2023NSCQ-MSX0276). 
Tieyong Zeng is supported in part by BNBU Research Grant (No. UICR0100031, UICR0700108-25 and UICR0900003)
at Beijing Normal-Hong Kong Baptist University, Zhuhai, PR China.

\bibliography{new}
\bibliographystyle{plain}

\clearpage
\appendix

\section{Technical Appendices and Supplementary Material}
\tableofcontents



\clearpage

\subsection{Proof of Lemma \ref{lemma 1}}\label{Appendix lemma1}

In order to prove Lemma \ref{lemma 1}, we will mainly use Proposition 3.1 by \cite{bauschke2010baillon}:

\begin{proposition}[Proposition 3.1 by \cite{bauschke2010baillon}]
Let $C$ be a nonempty open convex subset of $V$, let $\mathcal{B}$ be a real Banach space, and let $\operatorname{D}:C\rightarrow \mathcal{B}$ be continuously Fr\'{e}chet differentiable on $C$. Then $\operatorname{D}$ is non-expansive if and only if $\|\operatorname{J}(x)\|_*\le 1$, $\forall x\in C, \operatorname{J}(x):=\nabla \operatorname{D}(x)$.
\end{proposition}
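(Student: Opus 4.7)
The plan is to prove the two directions separately, leveraging the integral form of the fundamental theorem of calculus for Fréchet derivatives and the openness/convexity of $C$. The key identity I will use is that for continuously Fréchet differentiable $\operatorname{D}$ and any $x, y \in C$, the line segment $\{y + t(x-y) : t \in [0,1]\}$ lies in $C$ by convexity, so
\begin{equation*}
\operatorname{D}(x) - \operatorname{D}(y) = \int_0^1 \operatorname{J}(y + t(x-y))(x-y)\, dt,
\end{equation*}
where the integral is a Bochner integral in $\mathcal{B}$, which is well-defined since $t \mapsto \operatorname{J}(y + t(x-y))$ is continuous into the space of bounded linear operators.

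For the ``$\Leftarrow$'' direction, assuming $\|\operatorname{J}(x)\|_* \le 1$ for every $x \in C$, I would apply the norm inequality for Bochner integrals to the identity above:
\begin{equation*}
\|\operatorname{D}(x) - \operatorname{D}(y)\| \le \int_0^1 \|\operatorname{J}(y + t(x-y))\|_* \, dt \cdot \|x-y\| \le \|x-y\|,
\end{equation*}
which gives non-expansiveness directly.

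For the ``$\Rightarrow$'' direction, I would fix $x \in C$ and any $h \in V$ with $\|h\| = 1$. Since $C$ is open, there exists $t_0 > 0$ such that $x + th \in C$ for all $t \in (0, t_0)$. Non-expansiveness gives $\|\operatorname{D}(x + th) - \operatorname{D}(x)\| \le t$, so dividing by $t$ and letting $t \to 0^+$, the differentiability at $x$ yields $\|\operatorname{J}(x) h\| \le 1$. Taking the supremum over unit vectors $h$ in $V$ gives $\|\operatorname{J}(x)\|_* \le 1$, as required.

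The main obstacle I anticipate is justifying the integral representation rigorously in the Banach space $\mathcal{B}$: I need the map $t \mapsto \operatorname{J}(y + t(x-y))(x-y)$ to be Bochner integrable, which follows from continuity of $\operatorname{J}$ (given by hypothesis) together with compactness of $[0,1]$, and I need to verify that $\partial_t \operatorname{D}(y + t(x-y)) = \operatorname{J}(y + t(x-y))(x-y)$ in the Banach sense, which is the standard chain rule for Fréchet derivatives composed with the affine path. Everything else is a straightforward application of standard inequalities; no delicate estimate is needed beyond these Banach-valued calculus facts.
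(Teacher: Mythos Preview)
Your argument is correct in both directions and is the standard way to establish this result. Note, however, that the paper does not supply its own proof of this proposition: it is quoted verbatim as Proposition~3.1 from \cite{bauschke2010baillon} and used as a black box to prove Lemma~\ref{lemma 1}. So there is no ``paper's proof'' to compare against; your write-up simply fills in the details that the cited reference contains.
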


Now we prove Lemma \ref{lemma 1}.
\begin{proof}
In Lemma \ref{lemma 1}, we let $C=\mathcal{B}=V$, thus $C$ is open convex, and $\mathcal{B}$ is a real Banach space. In order to prove Lemma \ref{lemma 1}, we only need to prove that $\operatorname{D}$ is $\gamma$-cocoercive ($\gamma\in[0,\infty)$), if and only if $\forall x,y\in V$
\begin{equation}\label{eq lemma 1 1}
    \|(2\gamma \operatorname{D}-\operatorname{I})\circ(x)-(2\gamma \operatorname{D}-\operatorname{I})\circ(y)\|\le \|x-y\|.
\end{equation}
Note that (\ref{eq lemma 1 1}) is equivalent to
\begin{equation}
    4\gamma^2\|\operatorname{D}(x)-\operatorname{D}(y)\|^2 + \|x-y\|^2 - 4\gamma \langle x-y,\operatorname{D}(x)-\operatorname{D}(y)\rangle\le \|x-y\|^2,
\end{equation}
and equivalent to 
\begin{equation}
    \gamma\|\operatorname{D}(x)-\operatorname{D}(y)\|^2\le \langle x-y, \operatorname{D}(x)-\operatorname{D}(y)\rangle,
\end{equation}
which means $\operatorname{D}$ is $\gamma$-cocoercive.
\end{proof}

\subsection{Characterizations on the denoiser}\label{appendix characterization}

The first characterization of proximal operators of convex, closed, and proper functions is due to Moreau's theorem \cite{moreau1965proximite}. 
\begin{theorem}[\cite{moreau1965proximite}]\label{Moreau thm}
A map $\operatorname{D}:V\rightarrow V$ is a proximal operator of a proper, closed, convex function $F:V\rightarrow \mathbb{R}\cup\{+\infty\}$, if and only if the it holds that:
\begin{itemize}
    \item there exists a convex, closed function $\psi$ such that $\operatorname{D}(x)\in \partial \psi(x), \forall x\in V$;
    \item $\operatorname{D}$ is nonexpansive, i.e.,
    \begin{equation}
        \|\operatorname{D}(x)-\operatorname{D}(y)\|\le \|x-y\|, \forall x,y\in V.
    \end{equation}
\end{itemize}
\end{theorem}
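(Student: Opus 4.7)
The plan is to prove Moreau's characterization by verifying each direction separately, with the forward direction constructing the convex potential explicitly via the Moreau envelope and the reverse direction combining the Baillon--Haddad theorem with Rockafellar's characterization of subdifferentials via cyclic monotonicity.

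\textbf{Forward direction ($\Rightarrow$).} Assume $\operatorname{D}=\operatorname{Prox}_F$ for some CCP $F$. The nonexpansive part is standard: $\operatorname{Prox}_F=(\operatorname{I}+\partial F)^{-1}$ is the resolvent of the maximal monotone operator $\partial F$, hence firmly nonexpansive, hence nonexpansive. To produce the potential $\psi$, I would set
\begin{equation}
\psi(x) := \tfrac{1}{2}\|x\|^2 - M_F(x),
\end{equation}
where $M_F$ is the Moreau envelope of $F$. Using the identity $\nabla M_F(x)=x-\operatorname{Prox}_F(x)$, one gets $\nabla\psi(x)=\operatorname{D}(x)$, so $\operatorname{D}(x)\in\partial\psi(x)$. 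Convexity of $\psi$ follows from $\nabla\psi=\operatorname{D}$ being monotone: this is precisely the content of firm nonexpansiveness of $\operatorname{Prox}_F$, so $\langle \operatorname{D}(x)-\operatorname{D}(y),x-y\rangle\ge 0$. Closedness of $\psi$ is immediate since $\psi$ is continuous (as $M_F$ is continuous).

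\textbf{Reverse direction ($\Leftarrow$).} Assume $\operatorname{D}$ is nonexpansive and $\operatorname{D}(x)\in\partial\psi(x)$ for some CCP $\psi$. Because $\operatorname{D}$ is single-valued and nonexpansive, continuity combined with the monotonicity of $\partial\psi$ upgrades the selection $\operatorname{D}\in\partial\psi$ to $\operatorname{D}=\nabla\psi$ (the subdifferential is a singleton wherever $\partial\psi$ admits a continuous selection). Now $\psi$ is convex with $1$-Lipschitz gradient, so by Baillon--Haddad, $\nabla\psi=\operatorname{D}$ is firmly nonexpansive. Consequently $\operatorname{D}=(\operatorname{I}+A)^{-1}$ for the maximal monotone operator $A:=\operatorname{D}^{-1}-\operatorname{I}$. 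The key remaining step is to show that $A$ is the subdifferential of a CCP function. Since $\operatorname{D}=\nabla\psi$ is a gradient of a convex function, $\operatorname{I}-\operatorname{D}=\nabla(\tfrac{1}{2}\|\cdot\|^2-\psi)$ is also a gradient of a convex function, and this allows one to verify that $A=\operatorname{D}^{-1}-\operatorname{I}$ is cyclically monotone. By Rockafellar's theorem, $A=\partial F$ for some CCP $F$, and therefore $\operatorname{D}=(\operatorname{I}+\partial F)^{-1}=\operatorname{Prox}_F$.

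\textbf{Expected obstacles.} The delicate points are (i) justifying the passage from the set-valued inclusion $\operatorname{D}(x)\in\partial\psi(x)$ to full Fr\'{e}chet differentiability of $\psi$ with $\nabla\psi=\operatorname{D}$ in the Hilbert (possibly infinite-dimensional) setting, which requires combining the continuity of $\operatorname{D}$ with the demiclosedness of $\partial\psi$; and (ii) establishing cyclic monotonicity of $A=\operatorname{D}^{-1}-\operatorname{I}$ from conservativeness of $\operatorname{D}$, where the cleanest route is to compute a telescoping sum using the line-integral representation of $\psi$ along polygonal paths and the identity $\operatorname{D}^{-1}(y)-y\in A(y)$, then invoke Rockafellar's theorem. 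The construction of $\psi$ via the Moreau envelope in the forward direction and the careful application of Baillon--Haddad in the reverse direction are the conceptual pivots that link nonexpansiveness, conservativeness, and the proximal structure.
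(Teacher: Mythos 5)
The paper does not prove this statement at all: it is quoted verbatim as background (Theorem \ref{Moreau thm}, attributed to \cite{moreau1965proximite}) in Appendix \ref{appendix characterization}, so there is no in-paper argument to compare against. Judged on its own terms, your forward direction is correct and essentially complete: $\psi=\tfrac12\|\cdot\|^2-M_F$ with $\nabla M_F=\operatorname{I}-\operatorname{Prox}_F$ gives $\nabla\psi=\operatorname{D}$, convexity follows from monotonicity of $\operatorname{Prox}_F$, and nonexpansiveness is the standard resolvent fact. Your point (i) in the reverse direction is also fillable as you suspect: if a continuous single-valued selection $\operatorname{D}$ of $\partial\psi$ exists everywhere, then $\psi'(x;d)=\lim_{t\downarrow 0}\langle\operatorname{D}(x+td),d\rangle=\langle\operatorname{D}(x),d\rangle$ forces $\partial\psi(x)=\{\operatorname{D}(x)\}$.

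The genuine gap is your step (ii). Cyclic monotonicity of $A=\operatorname{D}^{-1}-\operatorname{I}$ does \emph{not} follow from conservativeness of $\operatorname{D}$ plus a telescoping line integral in the $x$-variables: the cycle inequality $\sum_i\langle x_i-y_i,\,y_{i+1}-y_i\rangle\le 0$ lives in the range variables $y_i=\operatorname{D}(x_i)$, and the naive Fenchel--Young bound $\langle x_i,y_{i+1}-y_i\rangle\le\psi^*(y_{i+1})-\psi^*(y_i)$ only yields $\sum_i\langle a_i,y_{i+1}-y_i\rangle\le\tfrac12\sum_i\|y_{i+1}-y_i\|^2$, which has the wrong sign. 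Closing it requires the \emph{quantitative} consequence of nonexpansiveness, namely that $\nabla\psi$ being $1$-Lipschitz makes $\psi^*$ $1$-strongly convex, whence $\langle x_i,y_{i+1}-y_i\rangle\le\psi^*(y_{i+1})-\psi^*(y_i)-\tfrac12\|y_{i+1}-y_i\|^2$ and the cycle sum does close. Once you see this, the detour through Rockafellar's theorem is unnecessary: set $F:=\psi^*-\tfrac12\|\cdot\|^2$, which is proper, closed, and convex precisely because $\psi^*$ is $1$-strongly convex, and observe $\partial F=\partial\psi^*-\operatorname{I}=\operatorname{D}^{-1}-\operatorname{I}$, so $\operatorname{Prox}_F=(\operatorname{I}+\partial F)^{-1}=\operatorname{D}$ directly. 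This conjugate construction is the standard proof of Moreau's theorem and is the one implicitly generalized by \cite{gribonval2020characterization} in Theorems \ref{Nikolova thm}--\ref{Nikolova thm2}.
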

\cite{gribonval2020characterization} generalized Moreau's theorem to the proximal operators of potentially nonconvex functions. 
\begin{theorem}[\cite{gribonval2020characterization}]\label{Nikolova thm}
    Let $\operatorname{D}:V\rightarrow V$ and $L>0$. The following are equivalent:
    \begin{itemize}
        \item there is $F:V\rightarrow\mathbb{R}\cup\{+\infty\}$ such that $\operatorname{D}(x)\in \operatorname{Prox}_F(x), \forall x\in V$, and $x\mapsto F(x)+\left(1-\frac{1}{L}\right)\frac{\|x\|^2}{2}$ is closed and convex;
        \item $\operatorname{D}$ is $L$-Lipschitz, and that there exists a closed, convex function $\psi$ such that $\operatorname{D}(x)\in \partial \psi(x),\forall x\in V$.
    \end{itemize}
\end{theorem}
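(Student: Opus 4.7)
The plan is to exploit the two hypotheses separately: conservativeness yields a potential for $\operatorname{D}_\sigma^t$, while cocoerciveness together with Lemma \ref{lemma 1} pins down the spectrum of its Jacobian. Once these are in hand, Theorem \ref{Nikolova thm} (from the appendix) identifies $\operatorname{D}_\sigma^t$ as a proximal operator, and the weak-convexity modulus $r(t)$ as well as the Lipschitz constant $L(t)$ fall out of direct spectral calculations, with the two regimes of $L(t)$ separated by a root comparison.

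First, I would build a potential for $\operatorname{D}_\sigma^t$. Since $\operatorname{D}_\sigma = \nabla\phi$ is conservative, set $\psi := t\phi + \tfrac{1-t}{2}\|\cdot\|^2$, so $\operatorname{D}_\sigma^t = \nabla\psi$. The Jacobian $\operatorname{J} = \nabla \operatorname{D}_\sigma$ is then symmetric. By Lemma \ref{lemma 1}, $\|2\gamma \operatorname{J}(x) - \operatorname{I}\|_* \le 1$ for all $x$, and symmetry forces the spectrum of $\operatorname{J}$ into the real interval $[0,1/\gamma]$. Consequently $\nabla^2\psi = t\operatorname{J} + (1-t)\operatorname{I}$ has spectrum in $[1-t,\, L]$ with $L := (\gamma + t - \gamma t)/\gamma$, and in particular $\psi$ is $(1-t)$-strongly convex while $\operatorname{D}_\sigma^t$ is $L$-Lipschitz.

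Second, I would apply Theorem \ref{Nikolova thm}: $\operatorname{D}_\sigma^t$ is $L$-Lipschitz and belongs to the subdifferential of the closed convex $\psi$, so there exists $g:V \to \bar{\mathbb{R}}$ with $\operatorname{D}_\sigma^t(x) \in \operatorname{Prox}_g(x)$ and $g + (1-1/L)\|\cdot\|^2/2$ closed and convex. Setting $F := \beta g$ gives $\operatorname{D}_\sigma^t \in \operatorname{Prox}_{F/\beta}$, and $F$ is $\beta(1-1/L)$-weakly convex. A one-line computation confirms
\begin{equation*}
\beta\left(1-\tfrac{1}{L}\right) = \beta\cdot\frac{t-\gamma t}{\gamma + t - \gamma t} = r(t),
\end{equation*}
which matches the stated weak-convexity modulus.

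Third, to get the Lipschitz bound on $\partial F$, I would invert $\operatorname{D}_\sigma^t$ via Legendre duality. Since $\nabla\psi$ is $(1-t)$-strongly monotone and $L$-Lipschitz, it is a bijection with inverse $\nabla\psi^*$, which is $\mathcal{C}^1$ with Hessian spectrum in $[1/L,\, 1/(1-t)]$. From the optimality condition for $y=\operatorname{D}_\sigma^t(x) = \operatorname{Prox}_{F/\beta}(x)$, namely $\beta(x-y) \in \partial F(y)$, substituting $x = \nabla\psi^*(y)$ yields the single-valued, continuous expression $\partial F(y) \ni \beta(\nabla\psi^*(y) - y)$, so $F$ is $\mathcal{C}^1$ and $\nabla F = \beta(\nabla\psi^* - \operatorname{I})$. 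Its Jacobian $\beta(\nabla^2\psi^* - \operatorname{I})$ has spectrum in
\begin{equation*}
\left[\beta\left(\tfrac{1}{L}-1\right),\ \beta\left(\tfrac{1}{1-t}-1\right)\right] = \left[-r(t),\ \tfrac{\beta t}{1-t}\right],
\end{equation*}
so the Lipschitz constant of $\nabla F$ equals $\max\{r(t),\ \beta t/(1-t)\}$. A direct inequality manipulation shows $\beta t/(1-t) \ge r(t) \iff t \ge (1-2\gamma)/(2-2\gamma)$, recovering Case 1 versus Case 2.

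The main obstacles are bookkeeping rather than conceptual: I must verify that the convexity of $\psi$ is strict enough to justify the Legendre duality on all of $V$ (which it is, since $\psi$ is strongly convex and $\mathcal{C}^1$), and that the single-valued selection from $\partial F$ upgrades to genuine differentiability of $F$ (which follows because $F$ is weakly convex and its subdifferential contains a continuous selection). Once those regularity points are handled, the modulus $r(t)$ and the two-case formula for $L(t)$ are purely algebraic consequences of the spectrum $[1-t, L]$ and its dual $[1/L, 1/(1-t)]$.
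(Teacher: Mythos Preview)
Your proposal does not address the stated theorem. The statement in question is Theorem~\ref{Nikolova thm}, the Gribonval--Nikolova characterization of proximity operators, which the paper simply \emph{cites} from \cite{gribonval2020characterization} and does not prove. Your argument is instead a proof of Theorem~\ref{thm 1} (the paper's characterization of CoCo denoisers), in which Theorem~\ref{Nikolova thm} is invoked as a black-box tool; you say as much yourself in the second step (``I would apply Theorem~\ref{Nikolova thm}''). So there is no proof of the target statement here to compare.

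That said, \emph{as a proof of Theorem~\ref{thm 1}} your sketch is correct and takes a genuinely different route from the paper's Appendix~\ref{Appendix thm1}. The paper works directly with inequalities of the form $\langle a,b\rangle \ge c\|a\|^2$ for $a=\operatorname{D}_\sigma^t(x)-\operatorname{D}_\sigma^t(y)$ and $b=x-y$: it first shows that $\operatorname{D}_\sigma^t$ is $\gamma/(t+\gamma(1-t))$-cocoercive, appeals to the auxiliary Theorem~\ref{thm 0} for the weak-convexity modulus, and then obtains the Lipschitz bound on $\partial F$ by a case-by-case hands-on verification of $\|b-a\|^2\le L(t)^2\beta^{-2}\|a\|^2$. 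Your spectral-duality route is cleaner: you read off the spectrum of $\nabla^2\psi$ as $[1-t,(\gamma+t-\gamma t)/\gamma]$, apply Theorem~\ref{Nikolova thm} once to get $r(t)$, and recover $\nabla F=\beta(\nabla\psi^*-\operatorname{I})$ via Legendre duality so that the Lipschitz constant of $\nabla F$ drops out as $\max\{r(t),\beta t/(1-t)\}$. This unifies the derivation of $r(t)$ and $L(t)$ without the paper's separate inequality manipulations, at the price of the regularity checks you flag at the end (bijectivity of $\nabla\psi$ from strong convexity, and differentiability of $F$ from the continuous single-valued selection).
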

When $\operatorname{D}\in \mathcal{C}^1[V]$, Gribonval and Nikolova also provided the following characterization.
\begin{theorem}[\cite{gribonval2020characterization}]\label{Nikolova thm2}
Let $\operatorname{D}:V\rightarrow V$, and $\operatorname{D}\in \mathcal{C}^1[V]$. The following properties are equivalent:
\begin{itemize}
    \item $\operatorname{D}$ is a proximal operator of a function $F:V\rightarrow\mathbb{R}\cup\{+\infty\}$;
    \item there exists a convex $\mathcal{C}^2[V]$ function $\psi$ such that $\operatorname{D}(x)=\nabla\psi(x), \forall x\in V$;
    \item the differential $\operatorname{J}(x)=\nabla \operatorname{D}(x)$ is symmetric positive semi-definite for all $x\in V$.
\end{itemize}
\end{theorem}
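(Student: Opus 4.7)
The plan is to establish the three-way equivalence by proving the cycle $(3)\Rightarrow(2)\Rightarrow(1)\Rightarrow(3)$. None of the individual implications is particularly deep for $\mathcal{C}^1$ data, and each exploits a different classical tool: the Poincaré lemma for conservativeness, Fenchel conjugation for constructing a potential $F$, and the Moreau envelope together with the envelope theorem for recovering symmetry and positivity of the Jacobian.

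For $(3)\Rightarrow(2)$, assume $\operatorname{J}(x)=\nabla\operatorname{D}(x)$ is symmetric positive semi-definite for every $x\in V$. Symmetry of $\operatorname{J}$ on the simply connected (in fact convex) domain $V$ is exactly the integrability condition of the Poincaré lemma, so $\operatorname{D}$ is a conservative vector field and one may explicitly define $\psi(x):=\int_0^1 \langle \operatorname{D}(tx),x\rangle\, dt$, yielding $\nabla\psi=\operatorname{D}$. Since $\operatorname{D}\in\mathcal{C}^1[V]$ the potential satisfies $\psi\in\mathcal{C}^2[V]$, and $\nabla^2\psi=\operatorname{J}\succeq 0$ gives convexity of $\psi$. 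For $(2)\Rightarrow(1)$, let $\psi^*$ denote the Fenchel conjugate of $\psi$ and define
\[
F(y):=\psi^*(y)-\tfrac{1}{2}\|y\|^2 .
\]
Then $F+\tfrac{1}{2}\|\cdot\|^2=\psi^*$ is closed convex, and the subdifferential calculus gives $y\in\operatorname{Prox}_F(x) \iff x\in\partial\psi^*(y) \iff y\in\partial\psi(x)=\{\nabla\psi(x)\}=\{\operatorname{D}(x)\}$, where I use Fenchel-Young duality together with the fact that $\psi\in\mathcal{C}^2$ makes $\partial\psi$ single-valued. Hence $\operatorname{D}=\operatorname{Prox}_F$, which proves $(1)$.

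The delicate step is $(1)\Rightarrow(3)$, and it is where the envelope-theorem machinery enters. Suppose $\operatorname{D}(x)\in\operatorname{Prox}_F(x)$ for all $x$. Consider the Moreau envelope $M_F(x):=\inf_z \{F(z)+\tfrac{1}{2}\|z-x\|^2\}$ with $\operatorname{D}(x)$ a minimizer. Using $M_F(x)=F(\operatorname{D}(x))+\tfrac{1}{2}\|\operatorname{D}(x)-x\|^2$ together with two-sided comparison via the optimality at $x$ and $x+h$, a direct Taylor-style expansion shows $\nabla M_F(x)=x-\operatorname{D}(x)$; this is where justifying that $M_F$ is genuinely differentiable (and not merely sub/super-differentiable) without assuming $F$ convex is the main technical obstacle, and it is handled by the $\mathcal{C}^1$ regularity of the selection $\operatorname{D}$. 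Setting $\psi(x):=\tfrac{1}{2}\|x\|^2-M_F(x)$, we obtain $\operatorname{D}=\nabla\psi$ with $\psi\in\mathcal{C}^2[V]$, so $\operatorname{J}=\nabla^2\psi$ is symmetric by Schwarz. Positive semi-definiteness of $\operatorname{J}$ is then obtained by first deriving firm nonexpansiveness of $\operatorname{D}$: adding the optimality inequalities at $x$ and $y$ yields $\langle x-y,\operatorname{D}(x)-\operatorname{D}(y)\rangle \ge \|\operatorname{D}(x)-\operatorname{D}(y)\|^2$, and substituting $y=x+tv$ and letting $t\to 0$ gives $\langle v,\operatorname{J}(x)v\rangle \ge \|\operatorname{J}(x)v\|^2\ge 0$, completing $(3)$.

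The hardest part of the proof is the justification in $(1)\Rightarrow(3)$ that $M_F$ is $\mathcal{C}^1$ with $\nabla M_F=\operatorname{I}-\operatorname{D}$ in the fully general (possibly non-convex, non-coercive, non-Lipschitz) setting of Theorem \ref{Nikolova thm2}; all other steps are essentially bookkeeping once this envelope identity and the firm nonexpansiveness inequality are in hand.
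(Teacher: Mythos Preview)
The paper does not supply its own proof of this statement: Theorem~\ref{Nikolova thm2} is quoted verbatim from \cite{gribonval2020characterization} as background, so there is no argument in the paper to compare against. Your cycle $(3)\Rightarrow(2)\Rightarrow(1)\Rightarrow(3)$ is a reasonable reconstruction, and the envelope argument you sketch for $(1)\Rightarrow(3)$ (two-sided comparison using optimality at $x$ and $x+h$, exploiting continuity of the $\mathcal{C}^1$ selection $\operatorname{D}$ to kill the remainder) is the standard and correct way to get $\nabla M_F=\operatorname{I}-\operatorname{D}$ without convexity of $F$.

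There is, however, a genuine slip in the last part of $(1)\Rightarrow(3)$. Adding the two optimality inequalities
\[
F(\operatorname{D}(x))+\tfrac12\|\operatorname{D}(x)-x\|^2\le F(\operatorname{D}(y))+\tfrac12\|\operatorname{D}(y)-x\|^2,\qquad
F(\operatorname{D}(y))+\tfrac12\|\operatorname{D}(y)-y\|^2\le F(\operatorname{D}(x))+\tfrac12\|\operatorname{D}(x)-y\|^2
\]
and cancelling gives only $\langle x-y,\operatorname{D}(x)-\operatorname{D}(y)\rangle\ge 0$, i.e.\ monotonicity, \emph{not} firm nonexpansiveness. Indeed firm nonexpansiveness is false in this generality: for $F(z)=-\tfrac14\|z\|^2$ one has $\operatorname{Prox}_F(x)=2x$, which is monotone but not firmly nonexpansive. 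Fortunately monotonicity is all you need: taking $y=x+tv$ and sending $t\to0$ already yields $\langle v,\operatorname{J}(x)v\rangle\ge 0$. So your conclusion survives, but the intermediate claim $\langle x-y,\operatorname{D}(x)-\operatorname{D}(y)\rangle\ge\|\operatorname{D}(x)-\operatorname{D}(y)\|^2$ should be weakened to $\ge 0$.
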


\subsection{More general characterization on the CoCo denoiser}\label{appendix general characterization}

The following theorem is a tight use of Theorem \ref{Nikolova thm2}. 
\begin{theorem}\label{thm 0}
Let $\operatorname{D}_\sigma\in \mathcal{C}^1[V]$. $\beta=\frac{1}{\sigma^2}$. $\operatorname{D}_\sigma$ satisfies that:\\
$\bullet$ $\operatorname{D}_\sigma$ is conservative;\\
$\bullet$ $\operatorname{D}_\sigma$ is $\gamma$-cocoercive with $\gamma\in(0,\infty)$.\\
Then, there exists a function $F:V\rightarrow \bar{\mathbb{R}}$, such that $F$ is $r$-weakly convex, where $r=\beta(1-\gamma)$, and that $\operatorname{D}_\sigma(x)\in \operatorname{Prox}_{\frac{F}{\beta}}(x), \forall x\in V$.
\end{theorem}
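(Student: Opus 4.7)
The plan is to apply the proximal-operator characterization of Gribonval–Nikolova (Theorem \ref{Nikolova thm}) directly to $\operatorname{D}_\sigma$, viewed as the proximal operator of $\tilde F = F/\beta$. That theorem requires two ingredients on $\operatorname{D}_\sigma$: (i) a global Lipschitz constant, and (ii) the existence of a closed convex function $\psi$ whose subdifferential contains $\operatorname{D}_\sigma(x)$ everywhere. Both will be supplied by the two hypotheses of the theorem, and the announced weak-convexity constant $r=\beta(1-\gamma)$ will then pop out of the Gribonval–Nikolova conclusion after a rescaling.

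First I would handle the Lipschitz step. By $\gamma$-cocoercivity combined with Cauchy–Schwarz (exactly the computation already done in Section 2 after Lemma \ref{lemma 1}), $\operatorname{D}_\sigma$ is $1/\gamma$-Lipschitz, so we may take $L=1/\gamma$ in Theorem \ref{Nikolova thm}.

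Next I would produce the closed convex potential. Since $\operatorname{D}_\sigma\in\mathcal{C}^1[V]$ is conservative, there is a $\mathcal{C}^2$ function $\phi:V\to\mathbb{R}$ with $\operatorname{D}_\sigma=\nabla\phi$. The cocoercivity inequality (\ref{def coco}) immediately gives monotonicity of $\operatorname{D}_\sigma=\nabla\phi$, which for a $\mathcal{C}^1$ gradient field is equivalent to convexity of $\phi$ (for instance via the standard integration-along-segments argument, or by noting that $\nabla^2\phi$ is symmetric positive semidefinite at every point). Continuity of $\phi$ makes it closed (lower semicontinuous), so $\psi:=\phi$ is a closed convex function and $\operatorname{D}_\sigma(x)=\nabla\phi(x)\in\partial\phi(x)$ for every $x\in V$, which is the second hypothesis of Theorem \ref{Nikolova thm}.

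Finally I would invoke Theorem \ref{Nikolova thm} with $L=1/\gamma$: there exists $\tilde F:V\to\bar{\mathbb{R}}$ with $\operatorname{D}_\sigma(x)\in\operatorname{Prox}_{\tilde F}(x)$ for all $x\in V$ and such that $\tilde F(x)+\bigl(1-\gamma\bigr)\tfrac{\|x\|^2}{2}$ is closed convex. Setting $F:=\beta\tilde F$, one has $\operatorname{Prox}_{F/\beta}=\operatorname{Prox}_{\tilde F}$, and multiplying the previous closed convex function by $\beta>0$ shows that $F(x)+\beta(1-\gamma)\tfrac{\|x\|^2}{2}$ is closed convex, i.e. $F$ is $r$-weakly convex with $r=\beta(1-\gamma)$. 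This delivers the conclusion.

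The main obstacle, and essentially the only delicate point, is the bookkeeping between Gribonval–Nikolova's normalization (in which the prox is taken with respect to $\tilde F$ and the convexifying quadratic has coefficient $1-1/L$) and the paper's convention $\operatorname{Prox}_{F/\beta}$: the identification $L=1/\gamma$ together with the rescaling $F=\beta\tilde F$ is what transmutes $1-\gamma$ into $r=\beta(1-\gamma)$, and one must verify that $\gamma>0$ suffices (so $L<\infty$) while both the convex $\gamma\ge 1$ case (where $r\le 0$ and $F$ is actually convex/strongly convex) and the genuinely weakly convex $\gamma<1$ case are handled by the same argument. Everything else (regularity of $\phi$, monotonicity $\Rightarrow$ convexity, closedness from continuity) is standard.
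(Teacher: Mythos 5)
Your proof is correct, but it reaches the weak-convexity modulus by a different route than the paper. The paper's proof first uses Lemma \ref{lemma 1} to conclude that $\operatorname{J}(x)$ is positive semi-definite, combines this with symmetry of $\operatorname{J}$ (conservativeness) and Poincar\'e's lemma to produce a convex $\mathcal{C}^2$ potential, and then invokes the $\mathcal{C}^1$ characterization (Theorem \ref{Nikolova thm2}) to obtain \emph{existence} of $F$ with $\operatorname{D}_\sigma\in\operatorname{Prox}_{F/\beta}$; the constant $r=\beta(1-\gamma)$ is then derived by a separate, explicit resolvent computation: writing $\beta(x-u)\in\partial F(u)$, $\beta(y-v)\in\partial F(v)$ and feeding the cocoercivity inequality through the identity $\langle x-y,u-v\rangle=\langle(x-u)-(y-v),u-v\rangle+\|u-v\|^2$ to show $\partial F+\beta(1-\gamma)\operatorname{I}$ is monotone. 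You instead get the potential's convexity from monotonicity of $\nabla\phi$ (rather than from pointwise PSD-ness of the Hessian), and you extract the modulus directly from the quantitative form of Theorem \ref{Nikolova thm} via the identification $L=1/\gamma$ and the rescaling $F=\beta\tilde F$, with $1-\tfrac{1}{L}=1-\gamma$ turning into $r=\beta(1-\gamma)$. Your route is shorter and bypasses the hypomonotonicity calculation entirely, at the cost of leaning on the sharper (non-$\mathcal{C}^1$) version of the Gribonval--Nikolova theorem; the paper's route is more self-contained in that the only external input used quantitatively is the resolvent identity, and its explicit computation is reused almost verbatim in the proof of Theorem \ref{thm 1}. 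Both arguments handle $\gamma\ge 1$ (where $r\le 0$) and $\gamma<1$ uniformly, and the bookkeeping you flag as the delicate point checks out.
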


Note that here is a little abuse of notation here: when $\gamma>1$, $r<0$. ``$r$-weakly convexity with a negative $r$'' actually means ``$(-r)$-strongly convexity''.
\begin{proof}
When $\operatorname{D}_\sigma$ is $\gamma$-cocoercive with $\gamma>0$, by Lemma \ref{lemma 1}, $\forall x\in V$, $\|2\gamma\operatorname{J}(x)-\operatorname{I}\|_*\le 1$. Therefore, $\operatorname{J}(x)$ is positive semi-definite for any $x\in V$. Since $\operatorname{D}_\sigma$ is conservative, it has no Hamiltonian part, $\operatorname{J}(x)=\operatorname{J}^\mathrm{T}(x)$, $\forall x\in V$. Therefore, $\operatorname{J}(x)$ is symmetric semi-positive for any $x\in V$. By Poincar\'{e}'s lemma (see Theorem 6.6.3 in \cite{galbis2012vector}), there exists a convex $\mathcal{C}^2[V]$ function $\psi$ such that $\operatorname{D}_\sigma(x)=\nabla \psi(x)$, $\forall x\in V$. 

By Theorem \ref{Nikolova thm2}, there exists a function $F:V\rightarrow \bar{\mathbb{R}}$, such that $\operatorname{D}_\sigma\in\operatorname{Prox}_{\frac{F}{\beta}}$, where we let $\beta=\frac{1}{\sigma^2}$ for convenience. Now we prove that $F$ is weakly convex. Recall the resolvent form of a proximal operator: $\operatorname{Prox}_{\frac{F}{\beta}}=(\operatorname{I}+\frac{1}{\beta}\partial F)^{-1}$. Given any $x,y\in V$, choose arbitrary $u,v\in V$, $u=\operatorname{D}_\sigma(x)\in(\operatorname{I}+\frac{1}{\beta}\partial F)^{-1}(x), v=\operatorname{D}_\sigma(y)\in(\operatorname{I}+\frac{1}{\beta}\partial F)^{-1}(y)$, then
\begin{equation}\label{eq temp 0}
   \beta(x-u)\in \partial F(u), \beta(y-v)\in \partial F(v).
\end{equation}
Since $\operatorname{D}_\sigma$ is $\gamma$-cocoercive, by definition \ref{def coco},
\begin{equation}\label{eq temp 1}
    \langle x-y,\operatorname{D}_\sigma(x)-\operatorname{D}_\sigma(y)\rangle \ge \gamma \|\operatorname{D}_\sigma(x)-\operatorname{D}_\sigma(y)\|^2. 
\end{equation}
By substituting $u,v$ in (\ref{eq temp 1}), we have 
\begin{equation}\label{eq temp 6}
\begin{array}{ll}
     \langle x-y,u-v\rangle \ge \gamma \|u-v\|^2   \\
     \langle (x-u) - (y-v) + (u-v),u-v\rangle \ge \gamma \|u-v\|^2 \\
    \langle \beta(x-u) - \beta(y-v) + \beta(u-v),u-v\rangle \ge \beta\gamma \|u-v\|^2 \\
    \langle \beta(x-u) - \beta(y-v),u-v\rangle \ge \beta(\gamma-1) \|u-v\|^2\\
    \langle \beta(x-u) - \beta(y-v),u-v\rangle + \beta(1-\gamma) \|u-v\|^2\ge 0.
\end{array}
\end{equation}
Recall (\ref{eq temp 0}), we know that $F$ is $r$-weakly convex with $r=\beta(1-\gamma)$.
\end{proof}

\subsection{Proof of Theorem \ref{thm 1}}\label{Appendix thm1}
Please note that, when $\gamma\ge 1$, $\operatorname{D}_\sigma$ is firmly non-expansive and conservative, and therefore is a proximal operator of some convex function. In this case, $r(t)$ could be negative: the ``negative weakly convexity'' actually means convexity. Since a convex implicit prior function $F$ is out of interest in this paper, Theorem \ref{thm 1} focuses on the weakly convex case, that is, $\gamma\in(0,1)$.

\begin{proof}
Since $\operatorname{D}_\sigma$ is $\gamma$-cocoercive with $\gamma>0$, $\operatorname{D}_\sigma^t$ is naturally $\frac{\gamma}{t+(1-t)\gamma}$-cocoercive: $\forall x,y\in V$, we have
\begin{equation}
\begin{array}{rl}
     &\langle \operatorname{D}_\sigma(x)-\operatorname{D}_\sigma(y),x-y\rangle \ge \gamma \|\operatorname{D}_\sigma(x)-\operatorname{D}_\sigma(y)\|^2\\ \vspace{1ex}
    &\displaystyle\langle  \frac{1}{t}(\operatorname{D}_\sigma^t-(1-t)\operatorname{I})\circ(x)-\frac{1}{t}(\operatorname{D}_\sigma^t-(1-t)\operatorname{I})\circ(y), x-y\rangle \\
    \ge&\gamma \|\frac{1}{t}(\operatorname{D}_\sigma^t-(1-t)\operatorname{I})\circ(x)-\frac{1}{t}(\operatorname{D}_\sigma^t-(1-t)\operatorname{I})\circ(y)\|^2\\ \vspace{1ex}

    &t\langle  \operatorname{D}_\sigma^t(x)-\operatorname{D}_\sigma^t(y),x-y\rangle -t(1-t)\|x-y\|^2
     \ge\gamma \|\operatorname{D}_\sigma^t(x)-\operatorname{D}_\sigma^t(y) - (1-t)(x-y)\|^2.
\end{array}
\end{equation}
Denote $a=\operatorname{D}_\sigma^t(x)-\operatorname{D}_\sigma^t(y)$, $b=x-y$ for convenience. Then we have 
\begin{equation}\label{eq temp 2}
\begin{array}{ll}
    t\langle a,b\rangle -t(1-t)\|b\|^2 &\ge \gamma \|a\|^2 - 2\gamma (1-t)\langle a,b\rangle + \gamma (1-t)^2 \|b\|^2,\\
    (t+2\gamma(1-t))\langle a, b\rangle &\ge \gamma \|a\|^2 + (t(1-t)+\gamma(1-t)^2)\|b\|^2.
\end{array}
\end{equation}
Now note that 
\begin{equation}\label{eq temp 3}
    \langle a,b\rangle=t\langle \operatorname{D}_\sigma(x)-\operatorname{D}_\sigma(y),x-y\rangle+(1-t)\|x-y\|^2.
\end{equation}
Since $\operatorname{D}_\sigma$ is $\gamma$-cocoercive with $\gamma>0$, we know that $\operatorname{D}_\sigma$ is $\frac{1}{\gamma}$-Lipschitz. Therefore, by Cauchy-Schwarz inequality, 
\begin{equation}\label{eq temp 4}
    \langle \operatorname{D}_\sigma(x)-\operatorname{D}_\sigma(y),x-y\rangle \le \frac{1}{\gamma}\|x-y\|^2.
\end{equation}
That is, 
\begin{equation}\label{eq temp 5}
\begin{array}{ll}
         \langle a,b\rangle &\le \left(\frac{t}{\gamma}+1-t\right)\|b\|^2,  \\
    \gamma(1-t)\langle a,b\rangle &\le t(1-t)\|b\|^2 + \gamma(1-t)^2\|b\|^2.
\end{array}
\end{equation}
By substituting (\ref{eq temp 5}) into (\ref{eq temp 2}), we have
\begin{equation}
\begin{array}{ll}
     (t+\gamma(1-t))\langle a, b\rangle &\ge \gamma \|a\|^2,
      \\ \vspace{1ex}
     \langle a, b\rangle &\ge \displaystyle\frac{\gamma}{t+\gamma(1-t)} \|a\|^2,
\end{array}
\end{equation}
which means that $\operatorname{D}_\sigma^t$ is $\frac{\gamma}{t+\gamma(1-t)}$-cocoercive. Since $\operatorname{D}_\sigma$ is conservative, $\operatorname{D}_\sigma^t$ is also conservative. By Theorem \ref{thm 0}, we know that there exists a $r$-weakly convex function $F:V\rightarrow \bar{\mathbb{R}}$, where $$r=r(t)=\beta\left(1-\frac{\gamma}{t+\gamma(1-t)}\right)=\beta\frac{t-\gamma t}{t+\gamma - \gamma t},$$
such that $\operatorname{D}_\sigma^t\in\operatorname{Prox}_{\frac{F}{\beta}}$. 

Now we prove that $\partial F$ is $L$-Lipschitz. We consider two cases seperately. \\
\textbf{Case 1:} $t\ge\frac{1-2\gamma}{2-2\gamma}$ and $t\in[0,1)$.

In this case, we need to prove that $\partial F$ is $L(t)=\beta\frac{t}{1-t}$ Lipschitz. Given $\forall x,y\in V$, choose arbitrary $u,v$, such that, $u=\operatorname{D}_\sigma^t(x)\in(\operatorname{I}+\frac{1}{\beta}\partial F)^{-1}(x), v=\operatorname{D}_\sigma^t(y)\in(\operatorname{I}+\frac{1}{\beta}\partial F)^{-1}(y)$, then
\begin{equation}
   \beta(x-u)\in \partial F(u), \beta(y-v)\in \partial F(v).
\end{equation}

Note that $a=u-v,b=x-y$. In order to prove $\partial F$ is $L(t)$-Lipschitz, we need to prove
\begin{equation}\label{appendix Lt}
\begin{array}{rl}
    \|\beta(x-u)-\beta(y-v)\|^2&\le L^2(t)\|u-v\|^2=\beta^2\displaystyle\frac{t^2}{(1-t)^2}\|u-v\|^2,  \\ \vspace{1ex}
    \|\beta(b-a)\|^2&\le \beta^2\displaystyle\frac{t^2}{(1-t)^2}\|a\|^2,\\
    \vspace{1ex}
    \|b-a\|^2&\le \displaystyle\frac{t^2}{(1-t)^2}\|a\|^2.
\end{array}
\end{equation}
Since $a=u-v=\operatorname{D}_\sigma^t(x)-\operatorname{D}_\sigma^t(y)=t(\operatorname{D}_\sigma(x)-\operatorname{D}_\sigma(y))+(1-t)(x-y)$, $b=x-y$, we have \begin{equation}
    a-b=t(\operatorname{D}_\sigma(x)-\operatorname{D}_\sigma(y))-t(x-y).
\end{equation}
Now we only need to prove
\begin{equation}
\begin{array}{ll}
    &t^2\|\operatorname{D}_\sigma(x)-\operatorname{D}_\sigma(y)\|^2 - {2t^2}\langle \operatorname{D}_\sigma(x)-\operatorname{D}_\sigma(y),x-y\rangle +t^2\|x-y\|^2  \\
    \le & \displaystyle\frac{t^2}{(1-t)^2}\left(t^2\|\operatorname{D}_\sigma(x)-\operatorname{D}_\sigma(y)\|^2+2t(1-t)\langle \operatorname{D}_\sigma(x)-\operatorname{D}_\sigma(y),x-y\rangle +(1-t)^2\|x-y\|^2\right),
\end{array}
\end{equation}
which is equivalent to prove
\begin{equation}
    \displaystyle{\frac{1-2t}{2-2t}}\|\operatorname{D}_\sigma(x)-\operatorname{D}_\sigma(y)\|^2\le\langle \operatorname{D}_\sigma(x)-\operatorname{D}_\sigma(y),x-y\rangle.
\end{equation}
{
Since $t\ge\frac{1-2\gamma}{2-2\gamma}$, we have that 
\begin{equation}
    \displaystyle\frac{1-2t}{2-2t}\le \displaystyle\frac{1-2\frac{1-2\gamma}{2-2\gamma}}{2-2\frac{1-2\gamma}{2-2\gamma}}=\displaystyle\frac{2-2\gamma-2(1-2\gamma)}{2(2-2\gamma)-2(1-2\gamma)}=\displaystyle\frac{2\gamma}{2}=\gamma.
\end{equation}
}
We already have
\begin{equation}
    \gamma\|\operatorname{D}_\sigma(x)-\operatorname{D}_\sigma(y)\|^2\le\langle \operatorname{D}_\sigma(x)-\operatorname{D}_\sigma(y),x-y\rangle.
\end{equation}
Thus, 
\begin{equation}
    \frac{1-2t}{2-2t}\|\operatorname{D}_\sigma(x)-\operatorname{D}_\sigma(y)\|^2\le    
    \gamma\|\operatorname{D}_\sigma(x)-\operatorname{D}_\sigma(y)\|^2\le\langle \operatorname{D}_\sigma(x)-\operatorname{D}_\sigma(y),x-y\rangle.
\end{equation}\\
{\textbf{Case 2: $t\le\frac{1-2\gamma}{2-2\gamma}$ and $t\in[0,1)$.}

In this case, we need to prove that $\partial F$ is $L(t)=r(t) = \beta\frac{t-t\gamma}{t+\gamma-t\gamma}$ Lipschitz. Similarly in \eqref{appendix Lt}, we need to prove
\begin{equation}
\begin{array}{rl}
    \|\beta(x-u)-\beta(y-v)\|^2&\le L^2(t)\|u-v\|^2=\displaystyle\beta^2\frac{t^2(1-\gamma)^2}{(t+\gamma-t\gamma)^2}\|u-v\|^2,  \\
    \vspace{1ex}
    \|b-a\|^2&\le \displaystyle\frac{t^2(1-\gamma)^2}{(t+\gamma-t\gamma)^2}\|a\|^2.
\end{array}
\end{equation}
Since $a=u-v=\operatorname{D}_\sigma^t(x)-\operatorname{D}_\sigma^t(y)=t(\operatorname{D}_\sigma(x)-\operatorname{D}_\sigma(y))+(1-t)(x-y)$, $b=x-y$, we have \begin{equation}
    a-b=t(\operatorname{D}_\sigma(x)-\operatorname{D}_\sigma(y))-t(x-y).
\end{equation}
Now we only need to prove
\begin{equation}
\begin{array}{ll}
    &t^2\|\operatorname{D}_\sigma(x)-\operatorname{D}_\sigma(y)\|^2 - 2t^2\langle \operatorname{D}_\sigma(x)-\operatorname{D}_\sigma(y),x-y\rangle +t^2\|x-y\|^2  \\
    \le & \displaystyle\frac{t^2(1-\gamma)^2}{(t+\gamma-t\gamma)^2}\left(t^2\|\operatorname{D}_\sigma(x)-\operatorname{D}_\sigma(y)\|^2+2t(1-t)\langle \operatorname{D}_\sigma(x)-\operatorname{D}_\sigma(y),x-y\rangle +(1-t)^2\|x-y\|^2\right).
\end{array}
\end{equation}
When $t=0$, it holds naturally. When $t\in (0,1)$, and $t\le \frac{1-2\gamma}{2-2\gamma}$, it is equivalent to prove
\begin{equation}
\begin{array}{ll}
    \|\operatorname{D}_\sigma(x)-\operatorname{D}_\sigma(y)\|^2 - 2\langle \operatorname{D}_\sigma(x)-\operatorname{D}_\sigma(y),x-y\rangle +\|x-y\|^2 \le \\
      \displaystyle\frac{(1-\gamma)^2}{(t+\gamma-t\gamma)^2}\left(t^2\|\operatorname{D}_\sigma(x)-\operatorname{D}_\sigma(y)\|^2+2t(1-t)\langle \operatorname{D}_\sigma(x)-\operatorname{D}_\sigma(y),x-y\rangle +(1-t)^2\|x-y\|^2\right),
\end{array}
\end{equation}
which is equivalent to prove
\begin{equation}
\begin{array}{ll}
    &(t+\gamma-t\gamma)^2\left[\|\operatorname{D}_\sigma(x)-\operatorname{D}_\sigma(y)\|^2 - 2\langle \operatorname{D}_\sigma(x)-\operatorname{D}_\sigma(y),x-y\rangle +\|x-y\|^2 \right] \\
    \le & (1-\gamma)^2\left[t^2\|\operatorname{D}_\sigma(x)-\operatorname{D}_\sigma(y)\|^2+2t(1-t)\langle \operatorname{D}_\sigma(x)-\operatorname{D}_\sigma(y),x-y\rangle +(1-t)^2\|x-y\|^2\right],
\end{array}
\end{equation}
that is to prove
\begin{equation}\label{temp eq0}
\begin{array}{ll}
    &\left[(t+\gamma-t\gamma)^2-t^2(1-\gamma)^2\right]\|\operatorname{D}_\sigma(x)-\operatorname{D}_\sigma(y)\|^2\\
    &-\left[2(t+\gamma-t\gamma)^2+2t(1-t)(1-\gamma)^2\right]\langle \operatorname{D}_\sigma(x)-\operatorname{D}_\sigma(y),x-y\rangle\vspace{1ex}\\ 
    \le&  \left[(1-t)^2(1-\gamma)^2-(t+\gamma-t\gamma)^2\right]\|x-y\|^2.
\end{array}
\end{equation}
Now we check each coefficient, and estimate each term carefully. For the coefficient of $\|\operatorname{D}_\sigma(x)-\operatorname{D}_\sigma(y)\|^2$, we have
\begin{equation}\label{temp eq-1}
    (t+\gamma-t\gamma)^2-t^2(1-\gamma)^2=2\gamma(1-\gamma)t+\gamma^2>0
\end{equation}
For the coefficient of $\langle \operatorname{D}_\sigma(x)-\operatorname{D}_\sigma(y),x-y\rangle$, it is obviously non-positive. Besides, we have that
\begin{equation}
\begin{array}{ll}
    &-\left[2(t+\gamma-t\gamma)^2+2t(1-t)(1-\gamma)^2\right]\\
    =&-2\left[(1-\gamma)^2t^2+2\gamma(1-\gamma)t+\gamma^2+(t-t^2)(1-\gamma)^2 \right]\\
    =& -(2-2\gamma^2)t-2\gamma^2.
\end{array}
\end{equation}
Since $\operatorname{D}_\sigma$ is $\gamma$-cocoercive, we have that 
\begin{equation}\label{temp eq2}
    \left[-(2-2\gamma^2)t-2\gamma^2\right]\langle \operatorname{D}_\sigma(x)-\operatorname{D}_\sigma(y),x-y\rangle\le  \left[-\gamma(2-2\gamma^2)t-2\gamma^3\right]\|\operatorname{D}_\sigma(x)-\operatorname{D}_\sigma(y)\|^2.
\end{equation}
For the coefficient of $\|x-y\|^2$, note that when $\gamma\in(0,1)$, $t\in[0,1)$, and $t\le \frac{1-2\gamma}{2-2\gamma}$, we have  
\begin{equation}\label{temp eq3}
    (1-t)^2(1-\gamma)^2-(t+\gamma-t\gamma)^2=(2\gamma-2)t-2\gamma+1\ge(2\gamma-2)\frac{1-2\gamma}{2-2\gamma}-2\gamma+1\ge 0.
\end{equation}

Combining \eqref{temp eq-1}-\eqref{temp eq3}, to prove \eqref{temp eq0}, we only need to prove that
\begin{equation}
\begin{array}{rl}
    \left[2\gamma(1-\gamma)t+\gamma^2 - \gamma(2-2\gamma^2)t-2\gamma^3\right]\|\operatorname{D}_\sigma(x)-\operatorname{D}_\sigma(y)\|^2&\le \left[(2\gamma-2)t-2\gamma+1\right]\|x-y\|^2\vspace{1ex}\\
     \iff\left[(2\gamma^3-2\gamma^2)t+\gamma^2-2\gamma^3\right]\|\operatorname{D}_\sigma(x)-\operatorname{D}_\sigma(y)\|^2&\le\left[(2\gamma-2)t-2\gamma+1\right]\|x-y\|^2.
\end{array}
\end{equation}
Since $\operatorname{D}_\sigma$ is $\frac{1}{\gamma}$-Lipschitz, we only need to prove that 
\begin{equation}
\begin{array}{lrl}
 &\displaystyle\frac{1}{\gamma^2}\left[(2\gamma^3-2\gamma^2)t+\gamma^2-2\gamma^3\right] &\le(2\gamma-2)t-2\gamma+1 \vspace{1ex}\\
 \iff & (2\gamma-2)t+1-2\gamma&\le (2\gamma-2)t-2\gamma+1.
\end{array}
\end{equation}
This completes the proof.}
\end{proof}
Please note that any Lipschitz operator must be single-valued. Thus, there is actually only one element in $\partial F(x)$, given any $x\in V$.

\subsection{Calculation for $\|(2\gamma\operatorname{J}-\operatorname{I})\|_*$ and $\|\operatorname{J}-\operatorname{J}^\mathrm{T}\|_*$}\label{appendix training}

We mainly use the power iterative method \cite{golub2013matrix} to calculate the spectral norm of a given operator $\operatorname{A}$.

\begin{algorithm}
\caption{Power iterative method}
\label{alg power}
\begin{algorithmic}
\STATE{Given $q^0$ with $\|q^0\|=1, \operatorname{A},N$}
\FOR{$n=1:N$}
\STATE{$z^n=\operatorname{A}q^{n-1}$}
\STATE{$q^n=\frac{z^n}{\|z^n\|}$}
\ENDFOR
\STATE{Return $\lambda^N=(q^N)^\mathrm{T}\operatorname{A}q^N$} 
\end{algorithmic}
\end{algorithm}

By Algorithm \ref{alg power}, we can compute the spectral norm of $\operatorname{A}$. Note that in Algorithm \ref{alg power}, we need to calculate the matrix-vector product $\operatorname{A}q$, given any $q\in V$.

Given a denoiser $\operatorname{D}_\sigma$, $\forall x\in V$, $\operatorname{J}(x)=\nabla \operatorname{D}_\sigma(x)$ is the Jacobian matrix at the point $x$. To specify, let $x=[x_1,x_2,\dots,x_n]^\mathrm{T}\in V=\mathbb{R}^n,$ and $y=[y_1,y_2,\dots,y_n]^\mathrm{T}=\operatorname{D}_\sigma(x)$ be the output after $\operatorname{D}_\sigma$. Then the Jacobian matrix $\operatorname{J}(x^*)$ at a given point $x^*$ is:

\begin{equation}
\operatorname{J}(x^*)=\left.
\begin{bmatrix}
  \displaystyle\frac{\partial y_1}{\partial x_1} & \displaystyle\frac{\partial y_1}{\partial x_2} & \cdots & \displaystyle\frac{\partial y_1}{\partial x_n} \\
  \displaystyle\frac{\partial y_2}{\partial x_1} & \displaystyle\frac{\partial y_2}{\partial x_2} & \cdots & \displaystyle\frac{\partial y_2}{\partial x_n} \\
  \vdots & \vdots & \ddots & \vdots \\
  \displaystyle\frac{\partial y_n}{\partial x_1} & \displaystyle\frac{\partial y_n}{\partial x_2} & \cdots & \displaystyle\frac{\partial y_n}{\partial x_n}
\end{bmatrix}\right|_{x=x^*}.
\end{equation}

We start with $a=[a_1,a_2,\dots,a_n]^\mathrm{T}=\operatorname{J}^\mathrm{T}(x)v$, where $v=[v_1,v_2,\dots,v_n]^\mathrm{T}$ is any vector. By basic linear algebra, we know that 
\begin{equation}
    a_i=\sum\limits_{j=1}^n\displaystyle\frac{\partial y_j}{\partial x_i} v_j, \ \forall i=1,2,\dots,n,
\end{equation}
which implies 
\begin{equation}\label{eq temp 9}
    a=\operatorname{J}^\mathrm{T}(x)v=\displaystyle\frac{\partial \langle y,v\rangle}{\partial x}.
\end{equation}

The calculation of $b=\operatorname{J}(x)v$ is a bit more complex. We will use the so-called double back-propagation technique: since $b=\operatorname{J}(x)v$, we have
\begin{equation}
    b_k = \sum\limits_{i=1}^n\displaystyle\frac{\partial y_k}{\partial x_i}v_i, \ \forall k=1,2,\dots,n.
\end{equation}

Now consider $w=[w_1,w_2,\dots,w_n]^\mathrm{T}$. Let $t=\operatorname{J}^\mathrm{T}(x)w$. Thus $t_i=\sum\limits_{j=1}^n \displaystyle\frac{\partial y_j}{\partial x_i}w_j$. 

Let $z=\displaystyle\frac{\partial \langle t,v\rangle}{\partial w}$, then 
\begin{equation}
    z=\displaystyle\frac{\partial \langle t,v\rangle}{\partial w}=\displaystyle\frac{\partial}{\partial w}\left( \sum\limits_{i=1}^n v_i\left( \sum\limits_{j=1}^n \frac{\partial y_j}{\partial x_i}w_j \right) \right).
\end{equation}
The $k$th element of $z$ is:
\begin{equation}
    z_k=\displaystyle\frac{\partial}{\partial w_k}\left( \sum\limits_{i=1}^n v_i\left( \sum\limits_{j=1}^n \frac{\partial y_j}{\partial x_i}w_j \right) \right)=\sum\limits_{i=1}^n v_i \displaystyle\frac{\partial y_k}{\partial x_i}, \ \forall k=1,2,\dots, n.
\end{equation}
That is, 
\begin{equation}\label{eq temp 10}
    z=\operatorname{J}(x)v=\displaystyle\frac{\partial \langle t,v\rangle}{\partial w}=\displaystyle\frac{\partial \langle \frac{\partial\langle y,w\rangle}{\partial x},v\rangle}{\partial w}, \ \forall w\in \mathbb{R}^n.
\end{equation}

The AutoGrad toolbox in Pytorch \cite{paszke2017automatic} allows the calculation for $\frac{\partial \langle \cdot,\cdot\rangle}{\partial\cdot}$. The pseudo-codes in Pytorch can be:

\begin{lstlisting}[language=Python]
# Calculate J(x)v
def _jacobian_vec(y, x, v):
    w = torch.ones_like(x, require_grad=True)
    t = torch.autograd.grad(y, x, w, create_graph=True)[0]
    return torch.autograd.grad(t, w, v, create_graph=True)[0]
\end{lstlisting}

\begin{lstlisting}[language=Python]
# Calculate J^T(x)v
def _jacobian_transpose_vec(y, x, v):
    return torch.autograd.grad(y, x, v, create_graph=True)[0]
\end{lstlisting}

By (\ref{eq temp 9}) and (\ref{eq temp 10}), one can calculate $\|(2\gamma\operatorname{J}-\operatorname{I})\|_*$ and $\|\operatorname{J}-\operatorname{J}^\mathrm{T}\|_*$ by Algorithm \ref{alg power}.

\subsection{Proof of Theorem \ref{main thm}}\label{appendix main thm}
We will make use of the Lyapunov function $L_\beta$ for (\ref{PnP-ADMM}) according to \cite{ li2016douglas, themelis2020douglas, hurault2022proximal}:

\begin{equation}\label{envelope}
    L_\beta(u,v,b)=F(v)+G(u;f)+\beta\langle b, u-v\rangle +\frac{\beta}{2}\|u-v\|^2.
\end{equation}

We will first prove in part 1 that an important value for $L_\beta(u,v,b)$ is positive whenever $t\in(0,t_0)$, where $t_0$ is the positive root of the characteristic equation in (\ref{eq t gamma}). Then, we will prove in part 2 that $L_\beta$ is non-increasing with the iteration number $k$. Finally, we will prove in part 3 that CoCo-ADMM iteration in (\ref{PnP-ADMM}) converges globally to a stationary point of (\ref{our model}).

\begin{proof}
Let $h(t)=(2-2\gamma)t^3+\gamma t^2+2\gamma t - \gamma$, where $\gamma\in(0,1)$. Note that $h$ is obviously smooth, and $h(0)=-\gamma<0$, $h(\infty)=\infty$. Also note that when $t>0$,
\begin{equation}
    h'(t)=(2-2\gamma)t^2+2\gamma t+2\gamma>0.
\end{equation}
Therefore, there exists a unique $t_0>0$, such that $h(t_0)=0$, $h(t)>0$ if $t>t_0$, and $h(t)<0$ if $t\in[0,t_0)$.\\
{\textbf{Part 1:}

We consider a characteristic value for $L_\beta(u,v,b)$ in (\ref{envelope}): $\frac{\beta}{2}-\frac{r}{2}-\frac{L^2}{\beta}$.

By Theorem \ref{thm 1}, if $t\in[0,1)$ and $t\ge \frac{1-2\gamma}{2-2\gamma}$, we have that $r=\beta\frac{t-\gamma t}{t+\gamma - t\gamma}$ and $L=\beta \frac{t}{1-t}$. Thus we have 
\begin{equation}
\begin{array}{ll}
   &\displaystyle\frac{\beta}{2}-\frac{r}{2}-\frac{L^2}{\beta}
   = \displaystyle\frac{\beta}{2}-\frac{\beta (t-\gamma t)}{2(t+\gamma-t\gamma)} - \frac{\beta t^2}{(1-t)^2} \\ \vspace{1ex}
   =& \displaystyle \frac{\beta}{2}\left(1-\frac{t-\gamma t}{t+\gamma - t\gamma} - \frac{2t^2}{(1-t)^2}\right)
   \\ \vspace{1ex}
   =& \displaystyle \frac{\beta}{2(t+\gamma-t\gamma)(1-t)^2}\left(\gamma(1-t)^2 - 2t^2(t+\gamma-t\gamma)\right)
   \\ \vspace{1ex}
   =& -\displaystyle \frac{\beta}{2(t+\gamma-t\gamma)(1-t)^2}\left((2-2\gamma)t^3 + \gamma t^2 +2\gamma t-\gamma\right).
\end{array}
\end{equation}
When $0<t<t_0$, where $t_0$ is the positive root of the characteristic equation in (\ref{eq t gamma}), $\frac{\beta}{2}-\frac{r}{2}-\frac{L^2}{\beta}> 0$ holds.

If $t\in[0,1)$ and $t\le \frac{1-2\gamma}{2-2\gamma}$, we have that $L=r=\beta\frac{t-\gamma t}{t+\gamma - t\gamma}$. Note that in this case, $L=r\le \beta\frac{t}{1-t}$. Thus,
\begin{equation}
\begin{array}{ll}
   &\displaystyle\frac{\beta}{2}-\frac{r}{2}-\frac{L^2}{\beta}\\ \vspace{1ex}
   \ge &\displaystyle\frac{\beta}{2}-\frac{\beta (t-\gamma t)}{2(t+\gamma-t\gamma)} - \frac{\beta t^2}{(1-t)^2} \\ \vspace{1ex}
   =& -\displaystyle \frac{\beta}{2(t+\gamma-t\gamma)(1-t)^2}\left((2-2\gamma)t^3 + \gamma t^2 +2\gamma t-\gamma\right).
\end{array}
\end{equation}
Therefore, we also have that when $0<t<t_0$, $\frac{\beta}{2}-\frac{r}{2}-\frac{L^2}{\beta}> 0$ holds.}\\
\textbf{Part 2:}

Now we prove that $L_\beta(u^k,v^k,b^k)$ is non-increasing. Before that, we show two important formulas. For $v^{k+1}$, by the first-order optimal condition, we know that 
\begin{equation}\label{formula 1}
    \beta b^{k+1}=-\beta(v^{k+1}-u^{k+1}-b^k)\in \partial F(v^{k+1}).
\end{equation}
Similarly, for $u^{k+1}$, we have
\begin{equation}\label{formula 2}
    -\beta(u^{k+1}-v^k+b^k)\in \partial G(u^{k+1};f).
\end{equation}

In order to prove that $L_\beta(u^k,v^k,b^k)$ is non-increasing, we decompose $L_\beta(u^k,v^k,b^k)-L_\beta(u^{k+1},v^{k+1},b^{k+1})$ into two parts:

\begin{equation}
\begin{array}{ll}
    &L_\beta(u^k,v^k,b^k)-L_\beta(u^{k+1},v^{k+1},b^{k+1})\\
    =&L_\beta(u^k,v^k,b^k)-L_\beta(u^{k+1},v^k,b^k) + L_\beta(u^{k+1},v^k,b^k)-L_\beta(u^{k+1},v^{k+1},b^{k+1}),
    \end{array}
\end{equation}
and estimate them separately as follows. By the convexity of $G$ and the iteration form in (\ref{PnP-ADMM}), we have  
\begin{equation}\label{u}
\begin{array}{ll}
    &L_\beta(u^k,v^k,b^k)-L_\beta(u^{k+1},v^k,b^k)  \\
     =&G(u^k;f)-G(u^{k+1};f)+\beta\langle b^k,u^k-u^{k+1}\rangle+\displaystyle\frac{\beta}{2}\|u^k-v^k\|^2-\frac{\beta}{2}\|u^{k+1}-v^k\|^2\\
     =&G(u^k;f)-G(u^{k+1};f)-\langle -\beta(u^{k+1}-v^k+b^k),u^k-u^{k+1}\rangle +\beta \langle v^k-u^{k+1},u^k-u^{k+1}\rangle\\ &+\displaystyle\frac{\beta}{2}\|u^k-v^k\|^2-\frac{\beta}{2}\|u^{k+1}-v^k\|^2\\
     \ge & 0+\beta\langle v^k-u^{k+1},u^k-u^{k+1}\rangle + \displaystyle\frac{\beta}{2}\langle u^k-u^{k+1}, u^k+u^{k+1}-2v^k\rangle\\
     =&\beta\langle u^k-u^{k+1},v^k-u^{k+1}-v^k+\displaystyle\frac{u^k+u^{k+1}}{2}\rangle\\
     =&\displaystyle\frac{\beta}{2}\|u^k-u^{k+1}\|^2.
\end{array}
\end{equation}

By the $r$-weakly convexity of $F$, $\forall x,y\in V$, $f_y\in\partial F(y)$, we have:
\begin{equation}\label{eq temp 7}
    F(x)-F(y)\ge \langle f_y,x-y\rangle -\frac{r}{2}\|x-y\|^2.
\end{equation}
By the $L$-Lipschitz property of $\partial F$, $\forall x,y\in V$, $f_y\in\partial F(y)$, we have:
\begin{equation}\label{eq temp 8}
    F(x)-F(y)\le \langle f_y,x-y\rangle + \frac{L}{2}\|x-y\|^2.
\end{equation}

Combining (\ref{eq temp 7}) and (\ref{eq temp 8}), we can obtain that
\begin{equation}\label{vb}
    \begin{array}{ll}
        &L_\beta(u^{k+1},v^k,b^k)-L_\beta(u^{k+1},v^{k+1},b^{k+1})  \vspace{0.5ex} \\
        =& F(v^k)-F(v^{k+1})+\beta\langle b^k,u^{k+1}-v^k\rangle-\beta\langle b^{k+1},u^{k+1}-v^{k+1}\rangle \vspace{0.5ex} \\
        &+ \displaystyle\frac{\beta}{2}\|u^{k+1}-v^k\|^2-\frac{\beta}{2}\|u^{k+1}-v^{k+1}\|^2\vspace{0.5ex}\\
        =&F(v^k)-F(v^{k+1})+\beta\langle b^k,u^{k+1}-v^k\rangle-\beta\langle b^k,u^{k+1}-v^{k+1}\rangle -\beta\langle u^{k+1}-v^{k+1},u^{k+1}-v^{k+1}\rangle\vspace{0.5ex}\\
        &+\displaystyle\frac{\beta}{2}\|v^k-v^{k+1}\|^2+\beta\langle u^{k+1}-v^{k+1},v^{k+1}-v^k\rangle\vspace{0.5ex}\\
        =&F(v^k)-F(v^{k+1})+\beta\langle b^k,v^{k+1}-v^k\rangle - \beta\|u^{k+1}-v^{k+1}\|^2\vspace{0.5ex}\\
        &+\displaystyle\frac{\beta}{2}\|v^k-v^{k+1}\|^2+\beta\langle u^{k+1}-v^{k+1},v^{k+1}-v^k\rangle\vspace{0.5ex}\\
        =&F(v^k)-F(v^{k+1})-\langle \beta b^k,v^{k+1}-v^k\rangle - \beta\|u^{k+1}-v^{k+1}\|^2+\displaystyle\frac{\beta}{2}\|v^k-v^{k+1}\|^2\vspace{0.5ex}\\
        &+\beta\langle u^{k+1}-v^{k+1},v^{k+1}-v^k\rangle\\
        \ge&\displaystyle-\frac{r}{2}\|v^k-v^{k+1}\|^2-\beta\|u^{k+1}-v^{k+1}\|^2+\frac{\beta}{2}\|v^k-v^{k+1}\|^2\vspace{0.5ex}\\
        =&\displaystyle\left(\frac{\beta}{2}-\frac{r}{2}\right)\|v^k-v^{k+1}\|^2-\beta\|b^k-b^{k+1}\|^2\vspace{0.5ex}\\
        \ge & \displaystyle\left(\frac{\beta}{2}-\frac{r}{2}\right)\|v^k-v^{k+1}\|^2-\frac{L^2}{\beta}\|v^k-v^{k+1}\|^2\vspace{0.5ex}\\
        =&\displaystyle\left(\frac{\beta}{2}-\frac{r}{2}-\frac{L^2}{\beta}\right)\|v^k-v^{k+1}\|^2.
    \end{array}
\end{equation}

Note that the second `$=$' comes from the cosine rule, the first `$\ge$' follows from the $r$-weakly convexity of $F$ as in (\ref{eq temp 7}), and the second `$\ge$' results from the $L$-Lipschitz of $\partial F$ as in (\ref{eq temp 8}).

Combining (\ref{u}) and (\ref{vb}), we get
\begin{equation}
    L_\beta(u^k,v^k,b^k)-L_\beta(u^{k+1},v^{k+1},b^{k+1})\ge \displaystyle\frac{\beta}{2}\|u^k-u^{k+1}\|^2+\left(\frac{\beta}{2}-\frac{r}{2}-\frac{L^2}{\beta}\right)\|v^k-v^{k+1}\|^2\ge 0,
\end{equation}
that is, $L_\beta(u^k,v^k,b^k)$ is non-increasing.

Now we prove that $\{(u^k,v^k,b^k)\}$ is bounded. Note that $F$ and $G$ are coercive on $V$. As a result, 
\begin{equation}
    F(u^k)+G(u^k;f)>+\infty.
\end{equation}
Along with $\beta b^k\in \partial F(v^k)$ and the property that $\partial F$ is $L$-Lipschitz, we arrive at
\begin{equation}
\begin{array}{ll}
     L_\beta(u^k,v^k,b^k)&=\displaystyle F(v^k)+G(u^k;f)+\beta\langle b^k,u^k-v^k\rangle +\frac{\beta}{2}\|u^k-v^k\|^2  \vspace{0.5ex}\\
     &\ge \displaystyle F(u^k)+G(u^k;f)-\frac{L}{2}\|u^k-v^k\|^2+\frac{\beta}{2}\|u^k-v^k\|^2\vspace{0.5ex}.\\
\end{array}
\end{equation}
Note that $L=\frac{\beta t}{1-t}$, and that $t<0.5$. Thus, $L<\beta$. Therefore, 
\begin{equation}
    \begin{array}{ll}
         &\displaystyle F(u^k)+G(u^k;f)-\frac{L}{2}\|u^k-v^k\|^2+\frac{\beta}{2}\|u^k-v^k\|^2\vspace{0.5ex}\\
        = &\displaystyle F(u^k)+G(u^k;f) + \frac{1}{2}(\beta-L)\|u^k-v^k\|^2\ge -\infty.
    \end{array}
\end{equation}
Since $F(u)+G(u;f)$ is coercive on $V$, $u^k,v^k,b^k$ are bounded. \\
\textbf{Part 3:}

Define $q^{k+1}$ as follows:
\begin{equation}
    q^{k+1} = [\beta(b^{k+1}-b^k+v^k-v^{k+1}), \beta(v^{k+1}-u^{k+1}), \beta(b^{k+1}-b^{k})].
\end{equation}
Define $\partial L_\beta(u,v,b)=[\partial_u L_\beta, \partial_v L_\beta, \partial_b L_\beta]$. By the formulas (\ref{formula 1})-(\ref{formula 2}), we know that 
\begin{equation}
    q^{k+1}\in \partial L_\beta(u^{k+1},v^{k+1},b^{k+1}).
\end{equation}
Note that 
\begin{equation}
    \|\beta(b^{k+1}-b^k+v^k-v^{k+1})\|\le \beta\|b^k-b^{k+1}\|+\beta\|v^k-v^{k+1}\|\le \displaystyle L \|v^k-v^{k+1}\|+\beta\|v^k-v^{k+1}\|,
\end{equation}
and that
\begin{equation}
    \|\beta(v^{k+1}-u^{k+1})\|=\beta\|b^k-b^{k+1}\|\le L\|v^k-v^{k+1}\|,
\end{equation}
we arrive at 
\begin{equation}
    \|q^{k+1}\|\le C\|v^k-v^{k+1}\|,
\end{equation}
where 
\begin{equation}
    C=\displaystyle 3L + \beta.
\end{equation}

Now we can finally prove Theorem \ref{main thm}. By Part 2, $\{(u^k,v^k,b^k)\}$ is bounded. So there is a sub-sequence $\{(u^{n_k},v^{n_k},b^{n_k})\}$ such that $(u^{n_k},v^{n_k},b^{n_k})\rightarrow (u^*,v^*,b^*)$, when $n\rightarrow +\infty$. Since $L_\beta(u^k,v^k,b^k)$ is lower bounded and non-increasing, we have that $\|u^k-u^{k+1}\|,\|v^k-v^{k+1}\| \rightarrow 0$ as $k\rightarrow +\infty$. Besides, since $q^{k}\in \partial L_\beta(u^k,v^k,b^k)$ and $\|q^k\|\le C\|v^k-v^{k+1}\|$, we know that $\|q^k\|\rightarrow 0$, $\|q^{n_k}\|\rightarrow 0$. Thus, $0\in \partial L_\beta(u^*,v^*,b^*)$, and $(u^*,v^*,b^*)$ is a stationary point of $L_\beta$. 

Since $F, G$ is KL, we conclude that $L_\beta$ is also KL. Then, by the proof of Theorem 2.9 in \cite{attouch2013convergence}, $\{(u^{n_k},v^{n_k},b^{n_k})\}$ converges globally to $(u^*,v^*,b^*)$. 

Since $(u^*,v^*,b^*)$ is a stationary point of $L_\beta$, and as a result, $q^*=0$, that is
\begin{equation}
    q^* = [0, \beta(v^*-u^*), 0]=0.
\end{equation}
Therefore,$u^*=v^*$. By CoCo-ADMM iteration in (\ref{PnP-ADMM}), we know that 
\begin{equation}
\begin{array}{ll}
         u^*&=\operatorname{Prox}_{\frac{G}{\beta}}(u^*-b^*),  \vspace{0.5ex}\\
     u^*&=\operatorname{D}_\sigma^t (u^*+b^*)\in \operatorname{Prox}_{\frac{G}{\beta}}(u^*+b^*).
\end{array}
\end{equation}
Equivalently,
\begin{equation}
    -\beta b^*\in \partial G(u^*;f), \beta b^*\in\partial F(u^*),
\end{equation}

\begin{equation}
    0\in \partial F(u^*)+\partial G(u^*;f).
\end{equation}
Therefore, $u^*$ is a stationary point of (\ref{our model}).
\end{proof}

\subsection{Proof of Theorem \ref{main thm 3}}\label{Appendix thm3}
The convergence of PGD with weakly convex prior function has been extensively studied. For details, one can refer to \cite{attouch2013convergence,hurault2024convergent}. For example, \cite{hurault2024convergent} prove that
\begin{theorem}[Theorem 1 by \cite{hurault2024convergent}]\label{thm hu}
    Let $F,H$ be proper, closed, bounded from below, and $H$ is differentiable with $L_H$-Lipschitz gradient, and $F$ is $r$-weakly convex. Then for $\tau<\max\{\frac{2}{L_H+r},\frac{1}{L_H}\}$, the iterates
    \begin{equation}
        u^{k+1}\in \operatorname{Prox}_{\tau F}\circ(\operatorname{I}-\tau\nabla H)(u^k)
    \end{equation}
    verify
    \begin{itemize}
        \item $(F(u^k)+H(u^k))$ is non-increasing and converges.
        \item All cluster points of the sequence $u^k$ are stationary points of $F+H$.
        \item If the sequence $u^k$ is bounded and if $F+H$ verifies the KL property at the cluster points of $u^k$, then $u^k$ converges to a stationary point of $F+H$.
    \end{itemize}
\end{theorem}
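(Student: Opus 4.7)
The plan is to recognize the CoCo-PEGD iteration $u^{k+1} = \operatorname{D}_\sigma^t(u^k - \beta^{-1}\nabla\,{}^1G(u^k))$ as proximal gradient descent applied to the composite objective $F + {}^1G$ with step size $\tau = \beta^{-1}$, and then invoke Theorem \ref{thm hu} (Theorem 1 of \cite{hurault2024convergent}). By Theorem \ref{thm 1}, under the stated hypotheses $\operatorname{D}_\sigma^t = \operatorname{Prox}_{F/\beta}$ for an $r$-weakly convex $F$, so the CoCo-PEGD update rewrites as $u^{k+1} \in \operatorname{Prox}_{\tau F}(u^k - \tau \nabla H(u^k))$ with $H = {}^1G$, which is precisely the scheme analyzed in Theorem \ref{thm hu}.

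Next I would verify the smoothness hypothesis $L_H = 1$ needed to match the step-size bound. Since $G$ is CCP, its Moreau envelope ${}^1G$ is everywhere differentiable with $\nabla\,{}^1G = \operatorname{I} - \operatorname{Prox}_G$; by Moreau's theorem $\operatorname{Prox}_G$ is firmly nonexpansive, hence so is $\operatorname{I} - \operatorname{Prox}_G$, giving $L_H = 1$. With this identification, the step-size condition assumed in Theorem \ref{main thm 3}, namely $0 < \beta^{-1} < \max\{2/(1+r),\,1\}$, is exactly the requirement $\tau < \max\{2/(L_H + r),\,1/L_H\}$ of Theorem \ref{thm hu}. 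The range $\gamma \in [0.25,1]$ together with $t \in (0,1]$ is consistent with Theorem \ref{thm 1} yielding a finite $r \ge 0$, so the admissible interval for $\beta^{-1}$ is nonempty.

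It remains to set up boundedness and the KL property of the sum so that the full convergence conclusion of Theorem \ref{thm hu} applies. Coercivity of $F$ together with ${}^1G \ge \inf_v G > -\infty$ (a consequence of $G$ being CCP, hence bounded below on sublevel sets via an affine minorant) makes $F + {}^1G$ coercive. Combining this with the monotone decrease of $F(u^k) + {}^1G(u^k)$ supplied by the first conclusion of Theorem \ref{thm hu}, the iterates $\{u^k\}$ remain in a bounded sublevel set. For the KL condition at cluster points, I would argue that since $F$ is KL by hypothesis and the data term ${}^1G$ in the Poisson setting involves only affine and logarithmic expressions, ${}^1G$ is definable in the o-minimal structure $\mathbb{R}_{\mathrm{an,exp}}$; this transfers to $F + {}^1G$ under standard closure arguments, so the sum is KL at every cluster point. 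Applying the last conclusion of Theorem \ref{thm hu} then yields convergence of the entire sequence $\{u^k\}$ to a stationary point of $F + {}^1G$, which is the assertion of Theorem \ref{main thm 3} for model \ref{our model PEGD}.

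The main obstacle I anticipate is the KL claim on the composite $F + {}^1G$: neither KL of $F$ alone nor smoothness of ${}^1G$ alone is enough in general, so justification must go through definability of the ingredients (or be folded into the hypotheses as a KL assumption on the sum). Everything else — rewriting the iteration as a proximal gradient step via Theorem \ref{thm 1}, establishing $1$-Lipschitz smoothness of $\nabla\,{}^1G$ via the firm nonexpansiveness of $\operatorname{Prox}_G$, matching the step-size bound, and deducing boundedness from coercivity plus Lyapunov decrease — is routine once these identifications are in place.
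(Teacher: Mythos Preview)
Your proposal is correct and mirrors the paper's own argument: identify CoCo-PEGD as proximal gradient descent on $F+{}^1G$, show $\nabla\,{}^1G=\operatorname{I}-\operatorname{Prox}_G$ is $1$-Lipschitz via firm nonexpansiveness of $\operatorname{Prox}_G$, match the step-size condition, and invoke Theorem~\ref{thm hu}. The only difference is that you are more careful about the KL property of the sum (arguing via definability), whereas the paper simply asserts that $F$ and $G$ being KL implies $F+{}^1G$ is KL; your caution there is warranted but does not change the approach.
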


By Theorem \ref{thm hu}, we can prove Theorem \ref{main thm 3}.
\begin{proof}
Let $F$ be the proper, closed, weakly convex prior function. Let $H={^1G}$ be the Moreau envelope of $G$, that is
\begin{equation}
    H(u)={^1G}(u):=\min\limits_v G(v)+\frac{1}{2}\|v-u\|^2.
\end{equation}
Then ${^1G}$ is diffentiable:
\begin{equation}
    \nabla {^1G}=\operatorname{I}-\operatorname{Prox}_{G}.
\end{equation}
Since $G$ is proper, closed, and convex, its proximal operator is firmly non-expansive. Therefore, $\nabla {^1G}$ is also firmly non-expansive, thus $1$-Lipschitz. Since $F$ and $G$ is coercive and KL, $F+{^1G}$ is also coercive and KL. Therefore, $u^k$ is bounded. By Theorem \ref{thm hu}, if $\frac{1}{\beta}<\max\{\frac{2}{1+r},1\}$, CoCo-PEGD converges to a stationary point of $F+{^1G}$.
\end{proof}

\subsection{Proximal operator on the fidelity for each task}\label{appendix proximal}

When $K\neq \operatorname{I}$, in general, there is no closed-form solution for $\operatorname{Prox}_{\frac{G}{\beta}}$. Given $f\in V$, we say $\hat{x}=\operatorname{Prox}_{\frac{G}{\beta}}(z)$, if 
\begin{equation}\label{tmp eq 1}
    \hat{x} =  \arg\min\limits_{x} \lambda\langle \textbf{1}, Kx-f\log Kx\rangle + \displaystyle \frac{\beta}{2}\|x-z\|^2.
\end{equation}

(\ref{tmp eq 1}) is solved by ADMM with $T$ iterations and $\rho>0$ as follows:
\begin{equation}\label{sub ADMM}
    \begin{array}{ll}
        x^{i+1} & =(\beta+\rho K^\top K)^{-1}(\beta z+\rho K^\top (y^i - w^i)), \vspace{0.5ex} \\
        y^{i+1} & = \arg\min\limits_{y} \lambda \langle \textbf{1}, y-f\log y\rangle + \displaystyle \frac{\rho}{2}\|Kx^{i+1}-y+w^i\|^2,\\
        w^{i+1} & = w^i + Kx^{i+1}-y^{i+1}.
    \end{array}
\end{equation}
When $K$ is a blur kernel, $x$-subproblem can be solved by the fast Fourier transformation as in \cite{pan2016l_0}. When $K=R$ is the Radon transform, one can calculate $(\beta+\rho K^\mathrm{T}K)^{-1}$ by the conjugate gradient method. There is a closed form solution for the $y$-subproblem by \cite{kumar2019low}:
\begin{equation}
    y^{i+1}=\displaystyle\frac{z^i+\sqrt{(z^i)^2+4\rho\lambda f}}{2\rho},
\end{equation}
where 
\begin{equation}
    z^i = \rho(Kx^{i+1}+\omega^i)-\lambda\textbf{1}.
\end{equation}
We initialize $x^0=y^0=z,w^0=0$, and output $\hat{x}=x^{T}$. We set $T=10$ to ensure the convergence of the $x$-subproblem.

\subsection{Detailed Deconvolution results on each kernel}\label{appendix deblurring}
See Tables \ref{deblur color 100}-\ref{deblur color 50}.

\begin{figure}[htbp]
\begin{minipage}{0.11\linewidth}
  \centerline{\includegraphics[width=1.68cm]{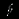}}
\end{minipage}
\hfill
\begin{minipage}{0.11\linewidth}
  \centerline{\includegraphics[width=1.68cm]{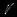}}
\end{minipage}
\hfill
\begin{minipage}{0.11\linewidth}
  \centerline{\includegraphics[width=1.68cm]{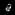}}
\end{minipage}
\hfill
\begin{minipage}{0.11\linewidth}
  \centerline{\includegraphics[width=1.68cm]{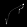}}
\end{minipage}
\hfill
\begin{minipage}{0.11\linewidth}
  \centerline{\includegraphics[width=1.68cm]{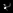}}
\end{minipage}
\hfill
\begin{minipage}{0.11\linewidth}
  \centerline{\includegraphics[width=1.68cm]{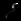}}
\end{minipage}
\hfill
\begin{minipage}{0.11\linewidth}
  \centerline{\includegraphics[width=1.68cm]{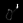}}
\end{minipage}
\hfill
\begin{minipage}{0.11\linewidth}
  \centerline{\includegraphics[width=1.68cm]{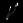}}
\end{minipage}
\centering\caption{Eight blur kernels from \cite{levin2009understanding}.}
\label{kernel}
\end{figure}

\begin{table*}[htbp]
\caption{Average Deconvolution PSNR and SSIM performance by different methods on CBSD68 dataset with Poisson noise with peak value $p=100$.}\label{deblur color 100}
\begin{center}
\resizebox{1\textwidth}{!}{
\begin{tabular}{cccccccccc}
\toprule
p=100 & kernel1         & kernel2         & kernel3         & kernel4         & kernel5         & kernel6         & kernel7         & kernel8         & Average         \\
\midrule
DPIR                          & 26.24           & 25.97           & {26.66}  & 25.65           & {27.61}  & {27.42}  & 26.52           & 26.00           & 26.51           \\
                              & \textbf{0.7321} & \textbf{0.7209} & \textbf{0.7439} & {0.7016} & \textbf{0.7870} & \textbf{0.7780} & \textbf{0.7462} & \textbf{0.7257} & \textbf{0.7419} \\
RMMO-DRS                           & 25.60           & 25.39           & 26.07           & 25.19           & 26.92           & 26.59           & 26.06           & 25.67           & 25.94           \\
                              & 0.6895          & 0.6804          & 0.7057          & 0.6665          & 0.7426          & 0.7296          & 0.7089          & 0.6921          & 0.7019          \\
Prox-DRS & 25.32 & 25.03 & 25.82 & 24.80 & 26.75 & 26.39 & 25.94& 25.40&25.68\\
& 0.6546 & 0.6488 & 0.6842 & 0.6336 & 0.7256 & 0.7050 &0.6924 & 0.6668&0.6764\\

DPS & 23.40 & 23.23 & 24.19 & 22.86 & 24.74 & 24.06 & 23.60 & 23.10 & 23.65 \\
& 0.5941 & 0.5882 & 0.6303 & 0.5699 & 0.6550 & 0.6265 & 0.6039 & 0.5815 & 0.6062\\

DiffPIR & 24.46 & 24.45 & 25.03 & 24.17 & 25.66 & 25.27 & 24.93 & 24.56 & 24.82\\
& 0.6250 & 0.6251 & 0.6510 & 0.6085 & 0.6838 & 0.6664 & 0.6509 & 0.6325 & 0.6429\\

SNORE & 25.75 & 25.99 & 26.56 & 25.70 & 27.13 & 26.85 & 26.55 & 26.09&26.33 \\
 & 0.6985 & 0.6993 & 0.7202 & 0.6856 & 0.7485 & 0.7405 & 0.7249 & 0.7085 &0.7158\\
PnPI-HQS                       & 26.25           & 25.95           & 26.32           & 25.70           & 27.32           & 27.15           & 26.55           & 26.15           & 26.42           \\
                              & 0.7024          & 0.6930          & 0.7124          & 0.6797          & 0.7570          & 0.7466          & 0.7269          & 0.7084          & 0.7158          \\
B-RED & 23.89  & 23.55  & 24.09  & 23.33  & 25.04  & 24.61  & 24.33  & 23.88  & 24.09  \\
                              & 0.6206          & 0.6188          & 0.6537          & 0.5988          & 0.6824          & 0.6584          & 0.6434          & 0.6200          & 0.6370          \\

CoCo-ADMM& \textbf{26.66} & \textbf{26.46} & \textbf{26.93} & \textbf{26.23} & \textbf{27.78} & \textbf{27.52} & \textbf{26.96} & \textbf{26.59} &\textbf{26.89} \\
                              &  0.7270&0.7183&0.7343&\textbf{0.7075} & 0.7707 & 0.7630 & 0.7404 & 0.7254&0.7358 \\
                              \bottomrule
\end{tabular}
}
\end{center}
\end{table*}

\begin{table}[htbp]
\caption{Average Poisson deblurring PSNR and SSIM performance by different methods on CBSD68 dataset with Poisson noise with peak value $p=50$.}\label{deblur color 50}
\begin{center}
\resizebox{1\textwidth}{!}{
\begin{tabular}{cccccccccc}
\toprule
p=50 & kernel1         & kernel2         & kernel3         & kernel4         & kernel5         & kernel6         & kernel7         & kernel8         & Average         \\
\midrule
DPIR                         & 25.10           & 24.83           & 25.66  & 24.50           & 26.45  & 26.20  & 25.41           & 24.89           & 25.38           \\
                             & 0.6773 & 0.6656 & {0.6988} & 0.6480 & {0.7390} & {0.7261} & 0.6981 & 0.6752 & 0.6910 \\
RMMO-DRS                          & 24.86           & 24.73           & 25.39           & 24.51           & 25.84           & 25.52           & 25.12           & 24.83           & 25.10           \\
                             & 0.6439          & 0.6419          & 0.6690          & 0.6292          & 0.6826          & 0.6697          & 0.6549          & 0.6457          & 0.6546          \\
Prox-DRS & 24.89 & 24.69 &  25.45 & 24.43 & 26.14 & 25.81 & 25.36 &24.92& 25.21\\ 
& 0.6363 & 0.6319 & 0.6676 & 0.6164 & 0.7024 & 0.6829 & 0.6714 & 0.6484& 0.6572\\
DPS & 22.92 & 22.76 & 23.72 & 22.34 & 24.13 & 23.48 & 22.98 & 22.49 & 23.10 \\
& 0.5724 & 0.5668 & 0.6088 & 0.5475 & 0.6266 & 0.5990 & 0.5758 & 0.5561 & 0.5816 \\

DiffPIR & 23.79 & 23.73 & 24.43 & 23.45 & 24.91 & 24.45 & 24.16 & 23.77 & 24.08\\
& 0.5915 & 0.5894 & 0.6220 & 0.5744 & 0.6478 & 0.6263 & 0.6134 & 0.5941 & 0.6074\\

SNORE & 24.57 & 25.06 & 25.80 & 24.75 & 25.78 & 25.34& 25.34&25.01& 25.21\\
& 0.6522 & 0.6617 & 0.6893 & 0.6490 & 0.6967 & 0.6863 & 0.6749 & 0.6648 & 0.6719\\
PnPI-HQS                     & {25.54}  & 25.29           & 25.80           & 25.02           & 25.60           & {26.37}  & 25.83           & 25.41           & 25.61           \\
                             & \textbf{0.6972} & 0.6705 & 0.6943          & {0.6563} & 0.7359          & 0.7231          & {0.7039} & {0.6837} & {0.6956} \\
B-RED & 23.49  & 23.41  & 23.86  & 22.99  & 24.69  & 24.49  & 23.86  & 23.46  & 23.78  \\
                             & 0.6083          & 0.6059          & 0.6436          & 0.5854          & 0.6692          & 0.6441          & 0.6264          & 0.6027          & 0.6232          \\
CoCo-ADMM                  & \textbf{27.74}  & \textbf{25.57}  & \textbf{26.16}          & \textbf{25.32}  &     \textbf{26.89}       &  \textbf{26.52}          &  \textbf{26.09} & \textbf{25.67}  & \textbf{26.00}  \\
                              & {0.6906}          &         \textbf{0.6817}  & \textbf{0.7072}          &        \textbf{0.6701}   & \textbf{0.7412}          &         \textbf{0.7289}  &  \textbf{0.7098}         &\textbf{0.6911}           & \textbf{0.7026}  \\
                              
            \bottomrule
\end{tabular}
}
\end{center}
\end{table}

\subsection{Single photon imaging}\label{appendix single}

 We test the proposed methods in a low-light real-world scenario: single photon imaging task by a time-correlated single-photon avalanche diode (SPAD) camera \cite{shin2016photon}. In this task, a SPAD array is used to track periodic light pulses in flight, and each detector only recieves about $1$ signal photon per pixel on average ($p\approx 1$). By the uncertainty principle \cite{heisenberg1927anschaulichen,robertson1929uncertainty}, since the momentum of the photons is known, the position of the photons cannot be accurately recorded. Therefore, the arrival time of each photon in the SPAD array is a random variable, and can be modelled by a Poisson process. By the filtered histogram method, a reflectivity image\footnote{The data is kindly provided by \cite{shin2016photon} in \href{https://github.com/photon-efficient-imaging/single-photon-camera}{github.com/photon-efficient-imaging/single-photon-camera}.} is obtained, see Fig. \ref{fig single} (a).

We compare the visual denoising performance on Fig. \ref{fig single} (a) by some state-of-the-art Poisson noise removal methods: Photon TV using a TV prior specialized for this problem \cite{shin2016photon}; BM3D-VST, a BM3D method with variance stabilization transform designed for Poisson noises \cite{azzari2016variance}; DnCNN \cite{zhang2017beyond} trained as a Poisson denoiser; VDIR, a variational inference network \cite{soh2022variational}; VBDNet, a variational bayesian deep network for blind poisson denoising \cite{liang2023variational}. We also compare three PnP methods, DPIR, RMMO-DRS, and PnPI-HQS, as introduced before. 

We show the visual results in Fig. \ref{fig single}. It can be seen in Figs. \ref{fig single} (b)-(c) that, non-deep methods provide blurred results with unclear edges. End-to-end deep learning methods provide over-smoothed results, and cannot restore the fine details in the flower, see Figs. \ref{fig single} (d)-(f). Compared with other PnP methods, CoCo methods can recover the fine textures in the flower, as well as the sharp edges in the equation, see Figs. \ref{fig single} (g)-(k). Since there is no reference image in this real-world scenario, we only report the relative error curves to validate the convergence in Fig. \ref{fig single} (l).

\subsection{Ablation study on $\gamma$ and $t$ in CoCo-PnP}\label{appendix diff}

We report the PSNR values by CoCo-PnP with different $\gamma$ and $t$ in the deblurring task in Table \ref{tab: diff t gamma}. It can be seen that, the proposed methods are sensitive to $\gamma$, because it determines the denoising performance, but not sensitive to $t$. 

\begin{table*}[htbp]
\caption{PSNR results by CoCo-PnP with different $\gamma$ and $t$ when deblurring the image `0005' from CBSD68 and kernel 8 from Levin's dataset. The Poisson noise level is $50$. We run the algorithm for 500 iterations to ensure the convergence. When $\gamma=0.25$, $t_0(\gamma)=1/3$, thus $t<1/3$. When $\gamma=0.5$, $t_0(\gamma)\approx 0.3761$, thus $t\le 0.3761$.}\label{tab: diff t gamma}
\begin{center}
\resizebox{0.6\textwidth}{!}{
\begin{tabular}{ccccccc}
\toprule
$\gamma=0.25$&$t$ & $0.3333$ & $0.300$& $0.200$ &$0.100$&$0.001$ \\
\midrule
CoCo-ADMM &PSNR & 25.33 & 25.31 & 25.30 & 25.24 & 25.16 \\
\midrule
CoCo-PEGD & PSNR& 25.32 & 25.30 & 25.27 & 25.22 & 24.98\\
\midrule
$\gamma=0.50$&$t$ & $0.3761$ & $0.300$& $0.200$ &$0.100$&$0.001$ \\
\midrule
CoCo-ADMM &PSNR & 24.88&24.87& 24.79& 24.64& 24.47 \\
\midrule
CoCo-PEGD & PSNR& 24.84 & 24.82 & 24.71 & 24.54 & 24.35\\
\bottomrule
\end{tabular}
}
\end{center}
\end{table*}

\subsection{Low-dose CT reconstruction}\label{appendix CT}

In order to further illustrate the effectiveness of the proposed methods, we consider the sparse-view low-dose CT reconstruction task. In this task, $K=R$ is the Radon transform, and $f$ in Eq. (\ref{model0}) is the down-sampled data in the Radon field. The Radon field data is corrupted by the Poisson noise.

For the test set, we select ten typical images from ``the
 NIH-AAPM-Mayo Clinic Low Dose CT Grand Challenge'' \cite{Mayo}, see Fig. \ref{fig CT test}.

For the comparison methods, we use: FBP, the conventional filtered back projection method; PWLS-TGV, penalized weighted least-squares (PWLS) with total generalized variation (TGV) prior \cite{niu2014sparse}; PWLS-CSCGR, PWLS with convolutional sparse coding and gradient regularization \cite{bao2019convolutional}; UNet, which take the result by FBP as the input \cite{jin2017deep}; WNet, which uses two UNets to construct the data from both the image and Radon domain \cite{cheslerean2023wnet}. 

We show the reconstruction results with $60$ projection views and Poisson noises with peak value $p=100,500$ in Table \ref{tab deblur}. It can be seen that, compared with two iterative PWLS-based methods, and two end-to-end deep learning methods, the proposed CoCo methods have a significant improvement in PSNR and SSIM values.


\begin{figure}[htbp]
\begin{minipage}{0.19\linewidth}
  \centerline{\includegraphics[width=2.83cm]{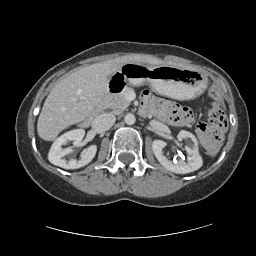}}
\end{minipage}
\hfill
\begin{minipage}{0.19\linewidth}
  \centerline{\includegraphics[width=2.83cm]{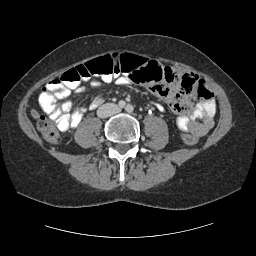}}
\end{minipage}
\hfill
\begin{minipage}{0.19\linewidth}
  \centerline{\includegraphics[width=2.83cm]{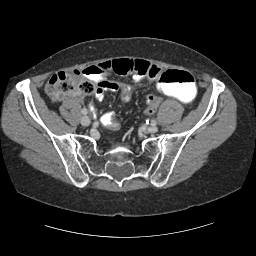}}
\end{minipage}
\hfill
\begin{minipage}{0.19\linewidth}
  \centerline{\includegraphics[width=2.83cm]{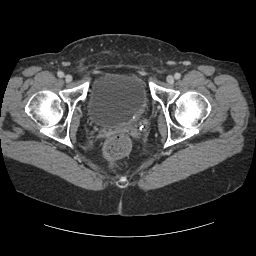}}
\end{minipage}
\hfill
\begin{minipage}{0.19\linewidth}
  \centerline{\includegraphics[width=2.83cm]{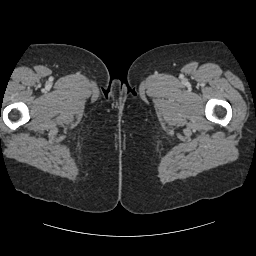}}
\end{minipage}
\hfill
\begin{minipage}{0.19\linewidth}
  \centerline{\includegraphics[width=2.83cm]{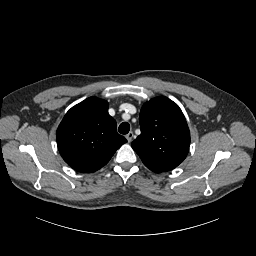}}
\end{minipage}
\hfill
\begin{minipage}{0.19\linewidth}
  \centerline{\includegraphics[width=2.83cm]{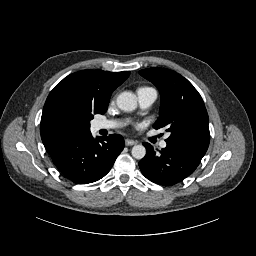}}
\end{minipage}
\hfill
\begin{minipage}{0.19\linewidth}
  \centerline{\includegraphics[width=2.83cm]{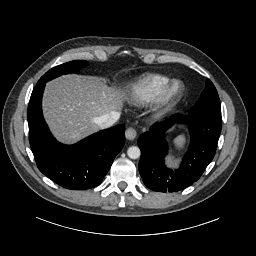}}
\end{minipage}
\hfill
\begin{minipage}{0.19\linewidth}
  \centerline{\includegraphics[width=2.83cm]{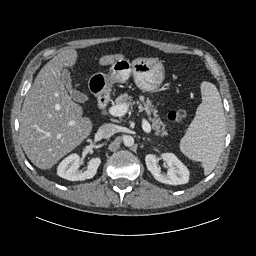}}
\end{minipage}
\hfill
\begin{minipage}{0.19\linewidth}
  \centerline{\includegraphics[width=2.83cm]{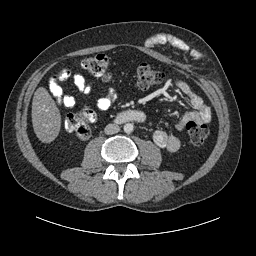}}
\end{minipage}
\centering\caption{CT test images.}
\label{fig CT test}
\end{figure}

\begin{figure*}[htbp]

\begin{minipage}{0.19\linewidth}
  \centerline{\includegraphics[width=2.83cm]{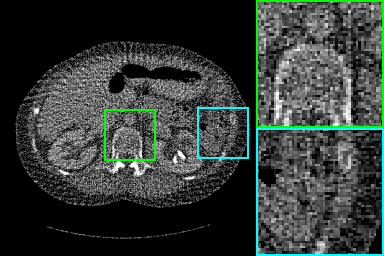}}
  \centerline{(a) FBP}
\end{minipage}
\hfill
\begin{minipage}{0.19\linewidth}
  \centerline{\includegraphics[width=2.83cm]{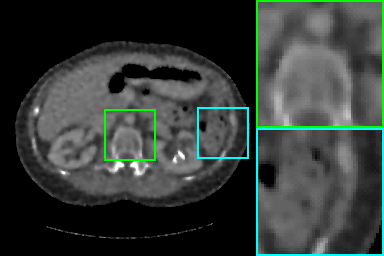}}
  \centerline{(b) PWLS-TGV}
\end{minipage}
\hfill
\begin{minipage}{0.19\linewidth}
  \centerline{\includegraphics[width=2.83cm]{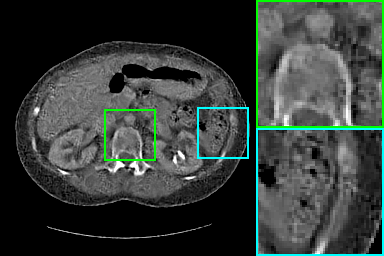}}
  \centerline{(c) PWLS-CSCGR}
\end{minipage}
\hfill
\begin{minipage}{0.19\linewidth}
  \centerline{\includegraphics[width=2.83cm]{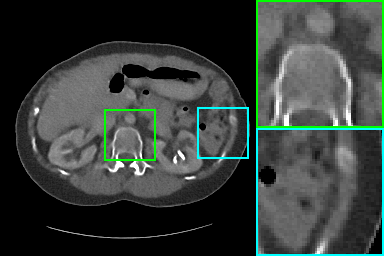}}
  \centerline{(d) UNet}
\end{minipage}
\hfill
\begin{minipage}{0.19\linewidth}
  \centerline{\includegraphics[width=2.83cm]{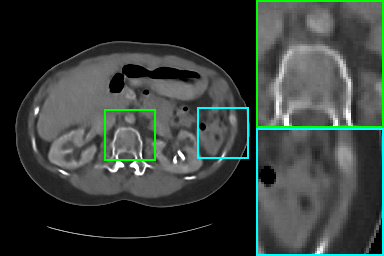}}
  \centerline{(e) WNet}
\end{minipage}
\hfill
\begin{minipage}{0.19\linewidth}
  \centerline{\includegraphics[width=2.83cm]{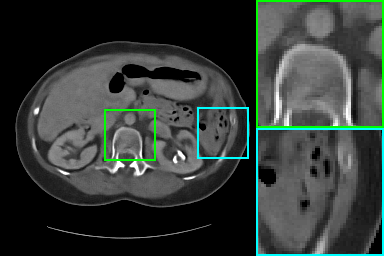}}
  \centerline{(f) CoCo-ADMM}
\end{minipage}
\hfill
\begin{minipage}{0.19\linewidth}
  \centerline{\includegraphics[width=2.83cm]{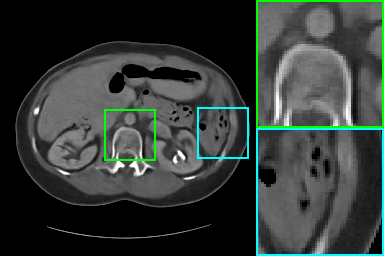}}
  \centerline{(g) CoCo-PEGD}
\end{minipage}
\hfill
\begin{minipage}{0.19\linewidth}
  \centerline{\includegraphics[width=2.83cm]{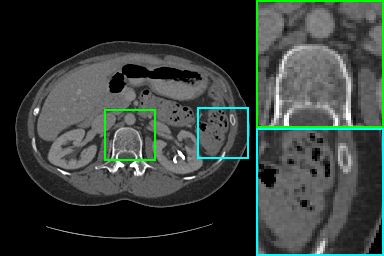}}
  \centerline{(h) High-dose reference}
\end{minipage}
\hfill
\begin{minipage}{0.19\linewidth}
  \centerline{\includegraphics[width=2.83cm]{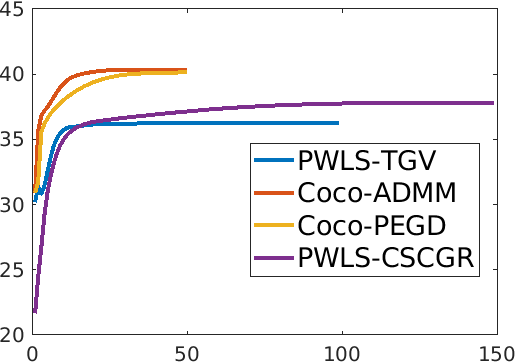}}
  \centerline{(i) PSNR}
\end{minipage}
\hfill
\begin{minipage}{0.19\linewidth}
  \centerline{\includegraphics[width=2.83cm]{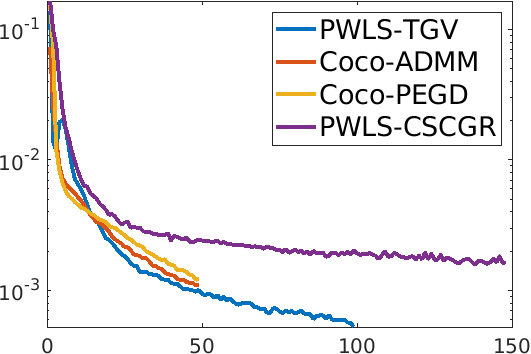}}
  \centerline{(j) Relative error}
\end{minipage}
\centering\caption{Sparse view CT reconstruction results with $60$ views and Poisson noises (peak=$500$). (a) FBP, PSNR=$30.91$dB. (b) PWLS-TGV, PSNR=$36.12$dB. (c) PWLS-CSCGR, PSNR=$37.80$dB. (d) UNet, PSNR=$38.54$dB. (e) WNet, PSNR=$38.74$dB. (f) CoCo-ADMM, PSNR=$40.46$dB. (g) CoCo-PEGD, PSNR=$40.28$dB. (h) High-dose reference. (i) PSNR curves. (j) Relative error curves.}
\label{fig CT}
\end{figure*}

\subsection{Poisson denoising results}\label{appendix poisson denoising}

We apply CoCo-ADMM and CoCo-PEGD to Poisson denoising problems. In this task, $K=\operatorname{I}$ is the identity. The average PSNR and SSIM values on CBSD68 are summarized in Table \ref{tab denoise}. It can be seen that compared with other convergent methods, the proposed methods achieve the best PSNR and SSIM values. It validates the effectiveness of CoCo methods.

\begin{table}[htbp]
\caption{Average denoising PSNR and SSIM performance by different methods on CBSD68 with peak value $p=30$ and $p=20$.}\label{tab denoise}
\begin{center}
\resizebox{0.5\textwidth}{!}{
\begin{tabular}{ccccc}
\toprule
& \multicolumn{2}{c}{$p=30$} & \multicolumn{2}{c}{$p=20$} \\
\midrule
 & PSNR & SSIM & PSNR & SSIM \\
\midrule 
DPIR        & 29.85 & \textbf{0.8654} & 28.68 & {0.8300}  \\
RMMO-DRS         & 27.90 & 0.8008 & 27.74 & 0.7887  \\
Prox-DRS         & 28.99 & 0.8087 & 27.94 & 0.7724  \\
SNORE         & 29.47 & 0.8474 & 28.34 & 0.8226  \\
PnPI-HQS     & 29.90 & 0.8613 & 28.63 & {0.8156}  \\ 
B-RED       & 28.80 & 0.8041 & 26.26 & 0.7362  \\
CoCo-ADMM & \textbf{29.99} & 0.8604 & \underline{28.76} & \underline{0.8329}\\
CoCo-PEGD & \textbf{29.99} & \underline{0.8625} & \textbf{28.77} & \textbf{0.8332}  \\
\bottomrule
\end{tabular}
}
\end{center}
\end{table}

\subsection{Computational cost}\label{sec:cputime}

In Table \ref{tab:cputime}, we give the average computation time in seconds, as well as the memory cost in MB by different methods when deblurring a $256\times 256$ image. It can be seen that the proposed methods are the most efficient with least memory cost.

\begin{table}[htbp]
	\centering
	\begin{tabular}{cccccccc}
		\hline
		\quad &B-RED& Prox-DRS& SNORE& DiffPIR& DPS & CoCo-ADMM & CoCo-PEGD \\
		\hline
		Iteration & 300&100 &200 & 100 & 1000 & 50 & 100\\ 
		\hline
		Time & 42.5s& 13.6s &74.1s& 11.8s & 177s &5.24s & 9.08s\\ 
		\hline
		Memory &4946MB &3982MB&3624MB&2998MB&4694MB&2304MB&2304MB\\
		\hline
	\end{tabular}
	\caption{Average computation time in seconds, and memory cost in MB by different methods when deblurring a $256\times 256$ image.}
	\label{tab:cputime}
\end{table}

\subsection{Performances under extreme conditions}\label{sec:extreme}
In Fig. \ref{fig c05}, we show an example when deblurring an image under severe noisy condition. It can be seen in Fig. \ref{fig c05} (a) that the image is severely corrupted. We compare results by different methods. It can be seen that CoCo-ADMM and CoCo-PEGD provide sharper edges. Even in this extreme case, the proposed methods are still convergent, see Figs. \ref{fig c05} (m)-(n).
\begin{figure}[htbp]

\begin{minipage}{0.13\linewidth}
  \centerline{\includegraphics[width=1.99cm]{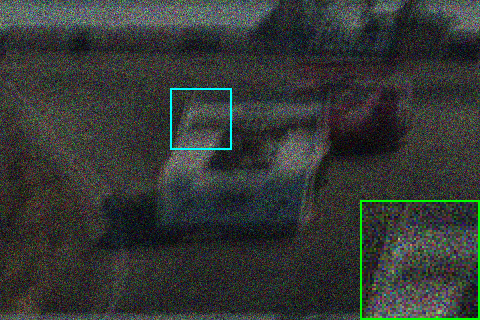}}
  \centerline{\small (a) Blur}
\end{minipage}
\hfill
\begin{minipage}{0.13\linewidth}
  \centerline{\includegraphics[width=1.99cm]{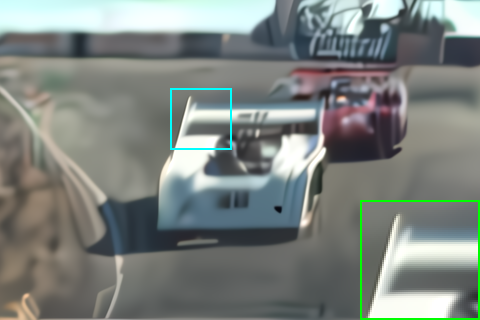}}
  \centerline{\small (b) DPIR}
\end{minipage}
\hfill
\begin{minipage}{0.13\linewidth}
  \centerline{\includegraphics[width=1.99cm]{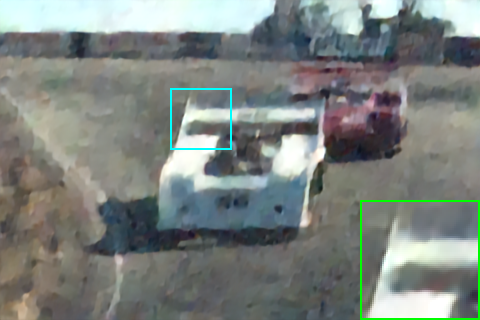}}
  \centerline{\small (c) RMMO-DRS}
\end{minipage}
\hfill
\begin{minipage}{0.13\linewidth}
  \centerline{\includegraphics[width=1.99cm]{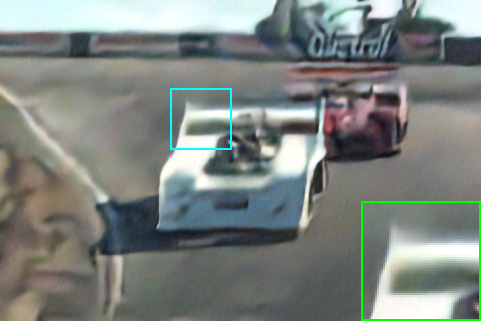}}
  \centerline{\small (d) Prox-DRS}
\end{minipage}
\hfill
\begin{minipage}{0.13\linewidth}
  \centerline{\includegraphics[width=1.99cm]{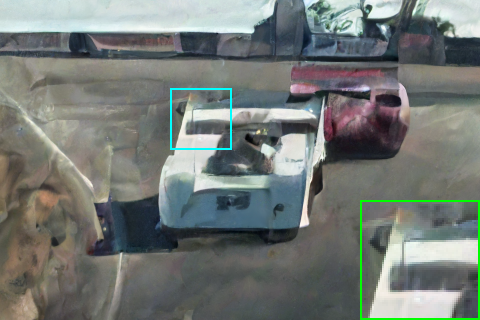}}
  \centerline{\small (e) DPS}
\end{minipage}
\hfill
\begin{minipage}{0.13\linewidth}
  \centerline{\includegraphics[width=1.99cm]{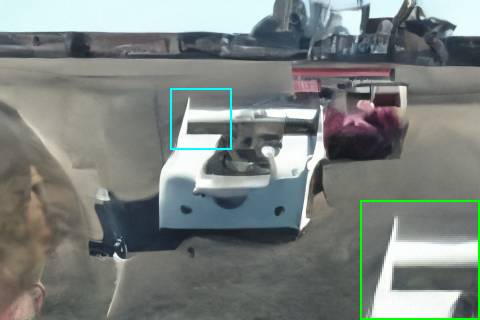}}
  \centerline{\small (f) DiffPIR}
\end{minipage}
\hfill
\begin{minipage}{0.13\linewidth}
  \centerline{\includegraphics[width=1.99cm]{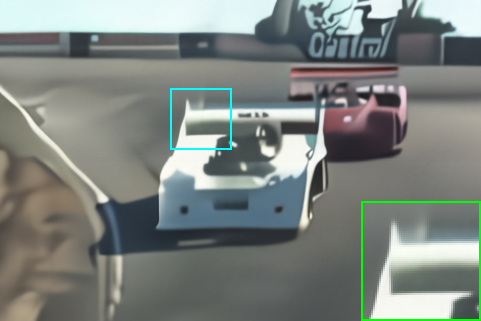}}
  \centerline{\small (g) SNORE}
\end{minipage}
\hfill
\begin{minipage}{0.13\linewidth}
  \centerline{\includegraphics[width=1.99cm]{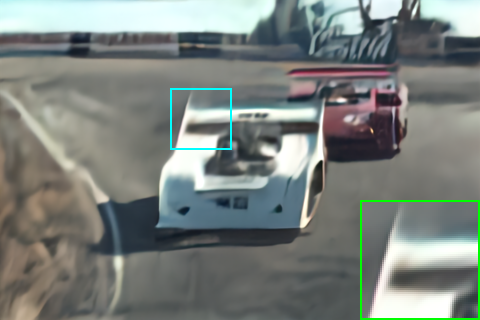}}
  \centerline{\small (h) PnPI-HQS}
\end{minipage}
\hfill
\begin{minipage}{0.13\linewidth}
  \centerline{\includegraphics[width=1.99cm]{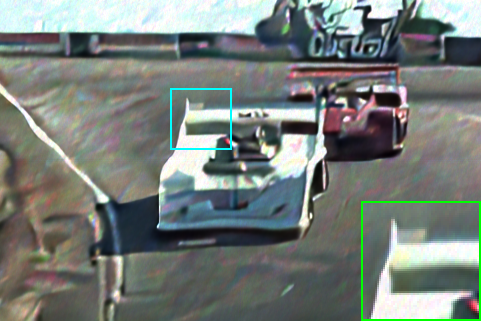}}
  \centerline{\small (i) B-RED\ \ }
\end{minipage}
\hfill
\begin{minipage}{0.13\linewidth}
  \centerline{\includegraphics[width=1.99cm]{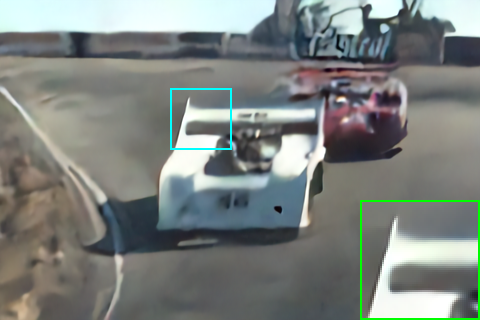}}
  \centerline{\small (j) CoCo-ADMM (k)}
\end{minipage}
\hfill
\begin{minipage}{0.13\linewidth}
  \centerline{\includegraphics[width=1.99cm]{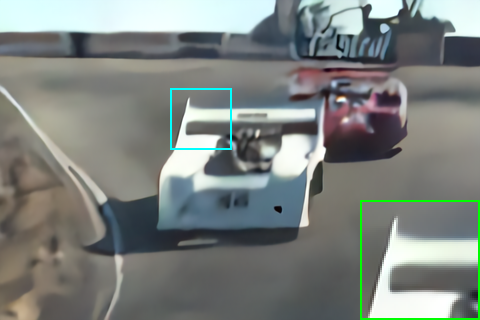}}
  \centerline{\small\ \ \ \ \  CoCo-PEGD}
\end{minipage}
\hfill
\begin{minipage}{0.13\linewidth}
  \centerline{\includegraphics[width=1.99cm]{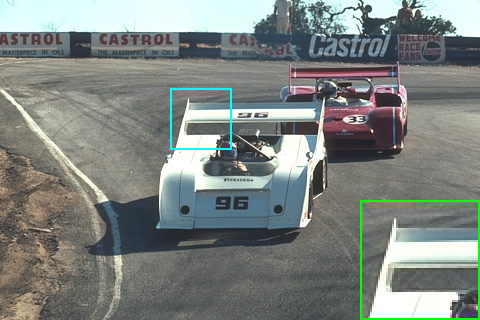}}
  \centerline{\small (l) Clean}
\end{minipage}
\hfill
\begin{minipage}{0.13\linewidth}
  \centerline{\includegraphics[width=1.99cm]{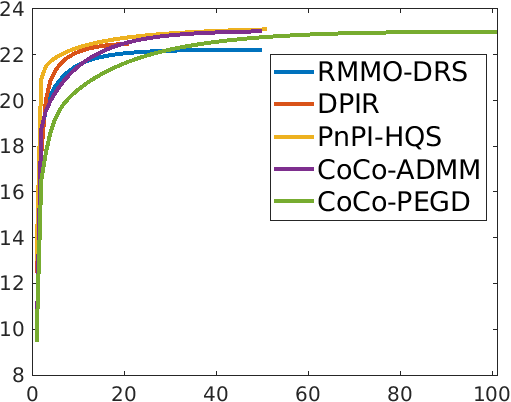}}
  \centerline{\small (m) PSNR}
\end{minipage}
\hfill
\begin{minipage}{0.13\linewidth}
  \centerline{\includegraphics[width=1.99cm]{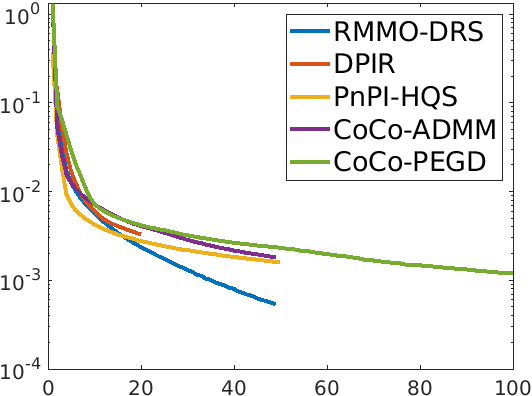}}
  \centerline{\small (n) Relative error}
\end{minipage}
\centering\caption{Deconvolution results by different methods on the image `0005' from CBSD68 with kernel $8$ and $p=10$ Poisson noises. (a) Blur image. (b) DPIR, PSNR=$22.49$dB. (c) RMMO-DRS, PSNR=$22.23$dB. (d) Prox-DRS, PSNR=$22.54$dB. (e) DPS, PSNR=$ 20.70$dB. (f) DiffPIR, PSNR=$21.66$dB. (g) SNORE, PSNR=$23.31$dB. (h) PnPI-HQS, PSNR=$23.14$dB. (i) B-RED, PSNR=$21.15$dB. (j) CoCo-ADMM, PSNR=$23.04$dB. (k) CoCo-PEGD, PSNR=$23.02$dB. (l) Clean image. (m) PSNR curves. (n) Relative error curves. $x$-axis denotes the iteration number.}
\label{fig c05}
\end{figure}

\subsection{Ablation study on the parameters $\alpha_1$ and $\alpha_2$}\label{sec:ablation}
\begin{equation}\label{loss}
    Loss(\theta)=\mathbb{E}\|\operatorname{D}_\sigma(x+\xi;\theta)-x\|_1+\alpha_1\|\operatorname{J}-\operatorname{J}^\top\|_* +\alpha_2 \max\{\|2\gamma\operatorname{J}-\operatorname{I}\|_*,1-\epsilon\}.
\end{equation}

In the loss function (\ref{loss}), $\alpha_1$ controls the penalty strength of the Hamiltonian term to encourage conservativeness, and $\alpha_2$ controls the penalty strength of the spectral term to encourage cocoerciveness.

As shown in Table \ref{tab validation}, DRUNet without regularization terms has an expansive residual part. When $\alpha_1$ gets bigger, the mean symmetry error gets lower. When $\alpha_2$ is smaller than $1$e-2, the regularized denoiser may not be cocoercive. 

Therefore, we set the penalty parameters large enough to encourage the properties we want. However, when $\alpha$ gets too large, the denoising performance may be sacrificed. We empirically set $\alpha_1=1$, $\alpha_2=1$e-2 in the experiments.

\begin{table}[htbp]
\caption{Mean symmetry error $\|\operatorname{J}-\operatorname{J}^\top\|_* $ with $N=1$ and maximal values of the norm $\|2\gamma\operatorname{J}-\operatorname{I}\|_*$ with $N=30$ on CBSD68 for various noise levels $\sigma$ and $\gamma=0.50,0.25$. }\label{tab validation}
\begin{center}
\resizebox{0.8\textwidth}{!}{
\begin{tabular}{ccccccc}
\toprule
             & 15     & 25     & 40     &  Norms & $\alpha_1$ & $\alpha_2$ \\
\midrule
DRUNet              &  $4.1$e+0& $4.2$e+1 & $1.8$e+1  & $\|\operatorname{J}-\operatorname{J}^\top\|_* $ & 0 & 0 \\
$0.50$-CoCo-DRUNet  & $8.6$e-3 & $2.5$e-4 & $7.1$e-4 & $\|\operatorname{J}-\operatorname{J}^\top\|_* $ & 1 & $1$e-2  \\
$0.50$-CoCo-DRUNet  & $2.9$e-1 & $2.6$e-2 & $9.5$e-2 & $\|\operatorname{J}-\operatorname{J}^\top\|_* $ & $1$e-2 & $1$e-2  \\
$0.25$-CoCo-DRUNet  & $3.1$e-4 & $1.8$e-4 & $3.9$e-4 & $\|\operatorname{J}-\operatorname{J}^\top\|_* $ & 1 & $1$e-2  \\
$0.25$-CoCo-DRUNet   & $5.9$e-1 & $8.5$e-1 & $7.9$e-1 & $\|\operatorname{J}-\operatorname{J}^\top\|_* $ & $1$e-2 & $1$e-2   \\
\midrule
DRUNet              & $3.285$ &$4.343$  & $6.283$ & $\|\operatorname{J}-\operatorname{I}\|_*$ & 0 & 0\\
$0.50$-CoCo-DRUNet  & $0.994$ & $0.992$ &$0.972$  &  $\|\operatorname{J}-\operatorname{I}\|_*$ & 1 & $1$e-2 \\
$0.50$-CoCo-DRUNet  & $1.011$ & $1.244$ & $1.500$ &  $\|\operatorname{J}-\operatorname{I}\|_*$ & 1 & $1$e-3 \\
$0.25$-CoCo-DRUNet  & $0.986$ & $0.969$ &$0.982$  &  $\|0.5\operatorname{J}-\operatorname{I}\|_*$ & 1 & $1$e-2 \\
$0.25$-CoCo-DRUNet  & $0.992$ & $1.010$ & $1.211$  &  $\|0.5\operatorname{J}-\operatorname{I}\|_*$ & 1 & $1$e-3 \\
\bottomrule
\end{tabular}
}
\end{center}
\end{table}
\begin{figure}[htbp]
  \centerline{\includegraphics[width=14.00cm]{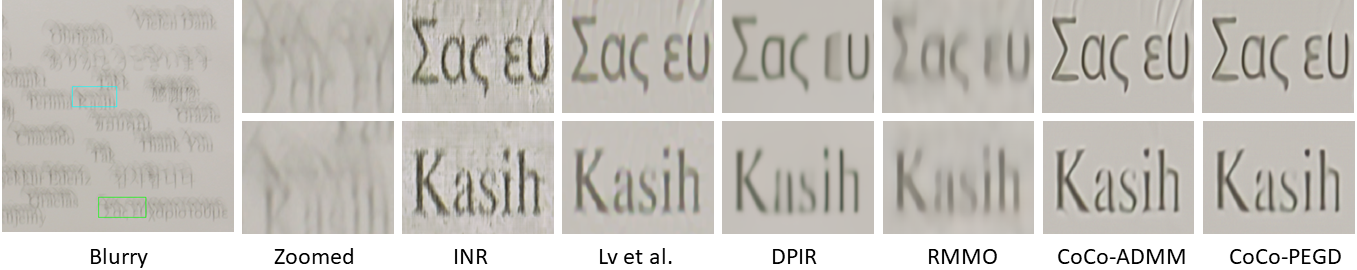}}
\centering\caption{Blind decovolution results by different methods.}
\label{fig:real-blur}
\end{figure}

\begin{table}[htbp]
    \centering
    \begin{tabular}{ccccccc}
    \hline
    \quad & INR  & Lv et al. & DPIR & RMMO & CoCo-ADMM & CoCo-PEGD\\
    \hline
    Average & 0.2990 & 0.3444 & 0.3629 & 0.2664 & 0.5461 & \textbf{0.5474}\\ 
    \hline
    \end{tabular}
    \caption{Average CLIP-IQA results across all deblurred images}
    \label{tab:real-blur}
\end{table}
\subsection{Blind Deblur Results}\label{sec:deblur}

In this section, we evaluate the performance of our proposed methods on real-world blind deblurring. We select 18 images from the dataset by Lai et al. \cite{Lai-16}, which captures challenging real-world blurry scenes. Four state-of-the-art methods are benchmarked: Deblur-INR \cite{zhang2024cross}, Lv et al. \cite{Liu-21}, DPIR \cite{zhu2023denoising}, and RMMO \cite{terris2020building}. Deblur-INR is a self-supervised method that jointly estimates clean images and blur kernels, while Lv et al. \cite{Liu-21} applies total variation regularization to refine latent image estimation in a blind deblurring framework. In contrast, DPIR, RMMO, and our proposed CoCo-ADMM/CoCo-PEGD require pre-estimated blur kernels as input. To ensure fairness, all kernel-dependent methods utilize blur kernels generated by the blind deblurring framework of Liu et al. \cite{Liu-21}. Visual comparisons in Fig. \ref{fig:real-blur} demonstrate that CoCo-ADMM and CoCo-PEGD produce sharper textures and fewer artifacts than baseline methods.

For quantitative evaluation, we employ the CLIP-IQA metric \cite{wang2023exploring}, a non-reference image quality assessment tool that leverages the pretrained vision-language representation of CLIP (Contrastive Language–Image Pretraining) to evaluate perceptual quality without requiring pristine reference images. The average CLIP-IQA scores across all deblurred images are summarized in Table \ref{tab:real-blur}. Despite relying on the same estimated kernels as DPIR and RMMO, our methods achieve superior performance, highlighting their robustness to kernel estimation errors and enhanced restoration capability.

\subsection{Blind Denoise Results}\label{sec:denoise}
\begin{figure}[htbp]
  \centerline{\includegraphics[width=14.00cm]{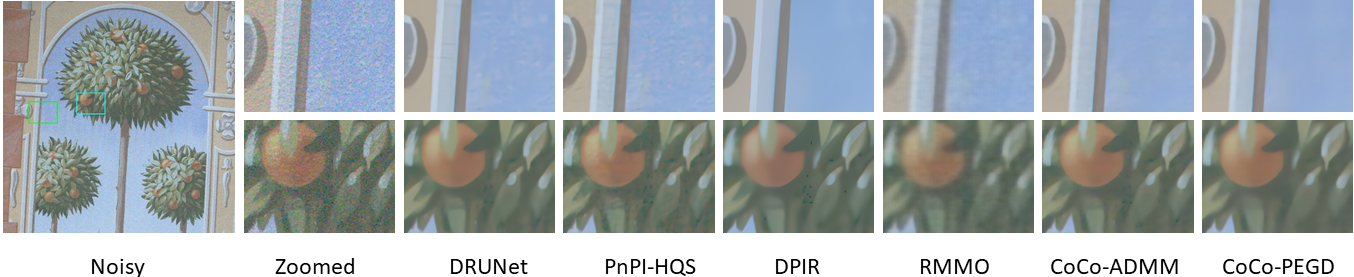}}
\centering\caption{Blind noise removal results by different methods.}
\label{fig:real-denoise}
\end{figure}
\begin{figure}[htbp]
  \centerline{\includegraphics[width=14.00cm]{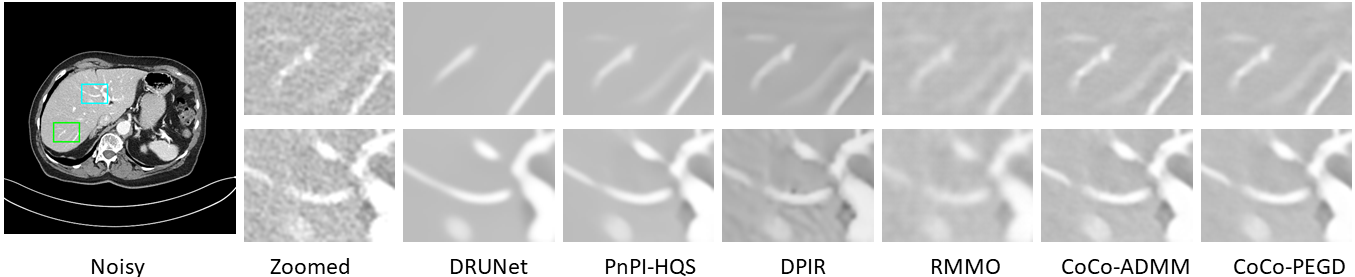}}
\centering\caption{Blind noise suppression results in CT images by different methods.}
\label{fig:real-ct}
\end{figure}
In this section, we evaluate the performance of our proposed denoising framework on two benchmark datasets that capture real-world noise characteristics. For medical imaging, we employ the Low-Dose CT dataset from Mayo Clinic \cite{Moen-21}, which provides paired low-dose CT scans with Poisson noise.  

For natural images, we utilize the DND dataset \cite{dnd_dataset}, comprising 50 real noisy photographs corrupted by a mixture of Poisson and Gaussian noise. Following the standard evaluation protocol, perceptual quality is quantified using the CLIP-IQA metric \cite{wang2023exploring}.

To ensure a fair comparison, we benchmark our proposed methods (CoCo-ADMM and CoCo-PEGD) against five representative techniques: DRUNet\cite{zhu2023denoising}, DPIR \cite{zhu2023denoising} , RMMO \cite{terris2020building} , and PnPI-HQS \cite{AHQS}. Quantitative results in Table~\ref{tab:color-denoise} demonstrate that our proposed methods achieve state-of-the-art performance across both imaging domains.

\begin{table}[htbp]
	\centering
	\begin{tabular}{ccccccc}
		\hline
		\quad & DRUNet & DPIR & RMMO & PnPI-HQS & CoCo-ADMM & CoCo-PEGD \\
		\hline
		Color images & 0.4464 & 0.4509 & 0.4561 & 0.4586 & \textbf{0.4596} & 0.4593 \\ 
		\hline
		CT & 0.4154 & 0.4258 & 0.4398 & 0.4011 & 0.4707 & \textbf{0.4730} \\ \hline
	\end{tabular}
	\caption{Average CLIP-IQA results for blind image denoising.}
	\label{tab:color-denoise}
\end{table}


\end{document}